\newcommand{\figleft}{{\em (Left)}}
\newcommand{\figcenter}{{\em (Center)}}
\newcommand{\figright}{{\em (Right)}}
\def\eqref#1{equation~\ref{#1}}
\def\1{\bm{1}}
\DeclareMathAlphabet{\mathsfit}{\encodingdefault}{\sfdefault}{m}{sl}
\SetMathAlphabet{\mathsfit}{bold}{\encodingdefault}{\sfdefault}{bx}{n}
\def\gA{{\mathcal{A}}}
\def\gI{{\mathcal{I}}}
\def\gL{{\mathcal{L}}}
\def\gN{{\mathcal{N}}}
\def\gS{{\mathcal{S}}}
\def\gW{{\mathcal{W}}}
\def\gZ{{\mathcal{Z}}}
\def\sS{{\mathbb{S}}}
\newcommand{\E}{\mathbb{E}}
\newcommand{\R}{\mathbb{R}}
\newcommand{\KL}{D_{\mathrm{KL}}}
\DeclareMathOperator*{\argmax}{arg\,max}
\DeclareMathOperator*{\argmin}{arg\,min}
\declaretheorem[name=Claim]{claim}
\declaretheorem[name=Lemma]{lemma}
\declaretheorem[name=Corollary]{corollary}
\declaretheorem[name=Assumption]{assumption}
\definecolor{myblue}{HTML}{1f77b4}
\newcommand{\newMethodName}{CSF}
\DeclarePairedDelimiter\norm{\lVert}{\rVert}
\newif\ifcomments
    \providecommand{\jens}[2][]{{\protect\color{blue}{[Jens:\textbf{#1} #2]}}}
    \providecommand{\chongyi}[2][]{{\protect\color{teal}{[Chongyi:\textbf{#1} #2]}}}
    \providecommand{\ben}[2][]{{\protect\color{purple}{[Ben:\textbf{#1} #2]}}} 
    \providecommand{\jens}[2][]{}
    \providecommand{\chongyi}[2][]{}
    \providecommand{\ben}[2][]{} 
\newif\ifarxiv
    \providecommand{\iclr}[2][]{} 
    \providecommand{\arxiv}[2][]{#2} 
    \providecommand{\arxiv}[2][]{} 
    \providecommand{\iclr}[2][]{#2} 
\title{Can a MISL Fly? Analysis and Ingredients for \\ Mutual Information Skill Learning}
\author{
\begin{minipage}{\textwidth}
\centering
Chongyi Zheng\thanks{These authors contributed equally.} \quad Jens Tuyls$^*$ \quad Joanne Peng \quad Benjamin Eysenbach \\
\vspace{0.3em}
\normalfont Princeton University \\
\vspace{0.5em}
\texttt{\{chongyiz, jtuyls\}@cs.princeton.edu}
\vspace{0.3em}
\end{minipage}
}
\begin{document}

\maketitle

\begin{abstract}
Self-supervised learning has the potential of lifting several of the key challenges in reinforcement learning today, such as exploration, representation learning, and reward design. Recent work (METRA~\citep{park2024metra}) has effectively argued that moving away from mutual information and instead optimizing a certain Wasserstein distance is important for good performance. In this paper, we argue that the benefits seen in that paper can largely be explained within the existing framework of mutual information skill learning (MISL).
Our analysis suggests a new MISL method (contrastive successor features) that retains the excellent performance of METRA with fewer moving parts, and highlights connections between skill learning, contrastive representation learning, and successor features. Finally, through careful ablation studies, we provide further insight into some of the key ingredients for both our method and METRA.\arxiv{\footnote{Website and code: \url{https://princeton-rl.github.io/contrastive-successor-features}}}
\end{abstract}

\section{Introduction}

Self-supervised learning has had a large impact on areas of machine learning ranging from audio processing~\citep{Oord2016WaveNetAG,oord2018representation} or computer vision~\citep{radford2021learning,chen2020simple} to natural language processing~\citep{Devlin2019BERTPO,Radford2018ImprovingLU,Radford2019LanguageMA,brown2020language}.
In the reinforcement learning (RL) domain, the ``right'' recipe to apply self-supervised learning is not yet clear. Several self-supervised methods for RL directly apply off-the-shelf methods from other domains, such as masked autoencoding~\citep{liu2022masked}, but have achieved limited success so far. Other methods design self-supervised routines more specifically built for the RL setting~\citep{burda2019exploration,pathak2017curiosity,eysenbach2018diversity, Sharma2020Dynamics-Aware,pong2020skew}.
We will focus on the skill learning methods, which aim to learn a set of diverse and distinguishable behaviors (skills) without an external reward function. This objective is typically formulated as maximizing the mutual information between skills and states~\citep{gregor2016variational, eysenbach2018diversity}, namely \emph{mutual information skill learning} (MISL). However, some promising recent advances in skill learning methods build on other intuitions such as Lipschitz constraints~\citep{park2022lipschitz} or transition distances~\citep{park2023controllability}. This paper focuses on determining whether the good performance of those recent methods can still be explained within the well-studied framework of mutual information maximization.

METRA~\citep{park2024metra}, one of the strongest prior skill learning methods, proposes maximizing the Wasserstein dependency measure between states and skills as an alternative to the idea of mutual information maximization. The success of this method calls into question the viability of the MISL framework. However, mutual information has a long history dating back to~\citet{shannon} and gracefully handles stochasticity and continuous states~\citep{myers2024learning}. These appealing properties of mutual information raises the question: \emph{Can we build effective skill learning algorithms within the MISL framework, or is MISL fundamentally flawed?}

We start by carefully studying the components of METRA both theoretically and empirically. For representation learning, METRA maximizes a lower bound on the mutual information, resembling contrastive learning.
For policy learning, METRA optimizes a mutual information term \emph{plus an extra exploration term}. 
These findings provide an interpretation of METRA that does not appeal to Wassertein distances and motivate a simpler algorithm (Fig.~\ref{fig:teaser}).

Building upon our new interpretations of METRA, we propose a simpler and competitive MISL algorithm called Contrastive Successor Features (CSF). First, \newMethodName{} learns state representations by directly optimizing a contrastive lower bound on mutual information, preventing the dual gradient descent procedure adopted by METRA. Second, while any off-the-shelf RL algorithm (e.g., SAC~\citep{haarnoja2018soft}) is applicable, \newMethodName{} instead learns a policy by leveraging successor features of linear rewards defined by the learned representations. 
Experiments on six continuous control tasks show that CSF is comparable with METRA, as evaluated on exploration performance and on downstream tasks.
Furthermore, ablation studies suggest that rewards derived from the information bottleneck, as well as a specific parameterization of representations, are key to good performance.

\begin{tcolorbox} [colback=white!95!black]
\textbf{Key Takeaways}
\begin{enumerate}
    \item METRA can be explained within the MISL framework: learning representations through maximizing MI (Sec.~\ref{subsec:connecting-metra-with-contrastive-learning}), and learning policies through maximizing MI \emph{plus an exploration bonus} (Sec.~\ref{subsec:metra-intrinsic-reward}).
    \item Based on our understanding of METRA, we propose \newMethodName{} (Sec.~\ref{sec:method}), a simple MISL algorithm that retains SOTA performance (Sec.~\ref{subsec:comparison-with-prior-work}).
    \item We find several ingredients that are key to boost MISL performance (Sec.~\ref{subsec:ablation-studies}).
\end{enumerate}
\end{tcolorbox}

\begin{wrapfigure}[18]{R}{0.5\linewidth}
    \vspace{-2.5em}
    \centering
    \includegraphics[width=\linewidth]{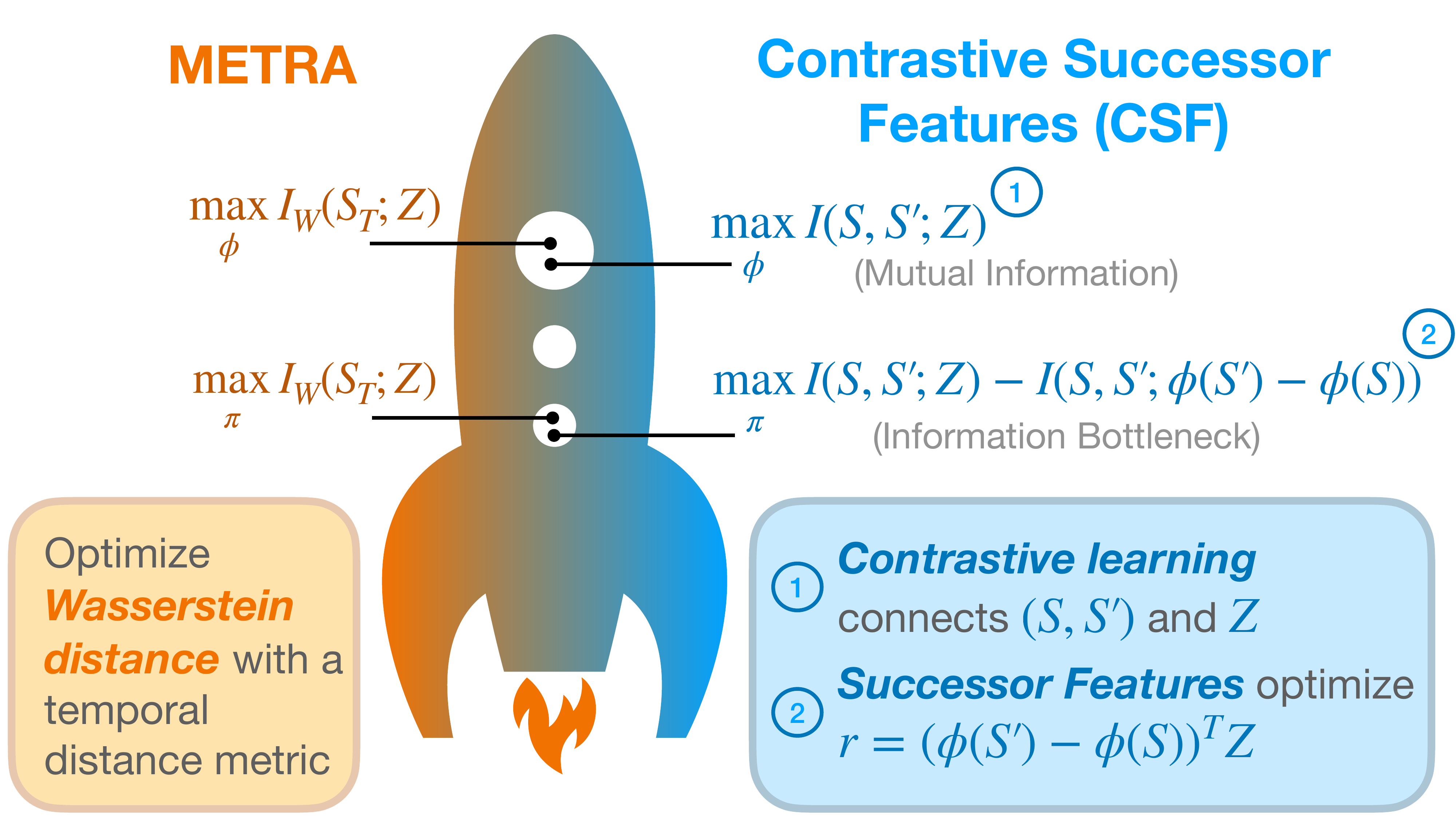}
    \caption{\footnotesize \textbf{From METRA to MISL.} \textbf{(Left)} METRA argues that optimizing a Wasserstein distance is superior to using mutual information. \textbf{(Right)} Through careful analysis, we show METRA still bears striking similarities to MISL algorithms, which allows us to develop a new MISL algorithm (\newMethodName{}) that matches the performance of METRA while retaining the theoretical properties associated with MI maximization.}
    \label{fig:teaser}
\end{wrapfigure}

\section{Related Work}
Through careful theoretical and experimental analysis, we develop a new mutual information skill learning method that builds upon contrastive learning and successor features.

\textbf{Unsupervised skill discovery.} Our work builds upon prior methods that perform unsupervised skill discovery. Prior work has achieved this aim by maximizing lower bounds~\citep{tschannenmutual, poole2019variational} of different mutual information formulations, including diverse and distinguishable skill-conditioned trajectories~\citep{li2023internally, eysenbach2018diversity, Hansen2020Fast, laskin2022cic, strouse2022learning}, intrinsic empowerment~\citep{mohamed2015variational, choi2021variational}, distinguishable termination states~\citep{gregor2016variational, wardeunsupervised, baumli2021relative}, entropy bonus~\citep{florensa2016stochastic, lee2019efficient, shafiullah2022one}, predictable transitions~\citep{Sharma2020Dynamics-Aware, campos2020explore}, etc. Among those prior methods, perhaps the most related works are CIC~\citep{laskin2022cic} and VISR~\citep{Hansen2020Fast}. We will discuss the difference between them and our method in Sec.~\ref{sec:method}. Another line of unsupervised skill learning methods uses ideas other than mutual information maximization, such as Lipschitz constraints~\citep{park2022lipschitz}, MDP abstraction~\citep{park2023predictable}, model learning~\citep{park2023controllability}, and Wasserstein distance~\citep{park2024metra, he2022wasserstein}. Our work will analyze the state-of-the-art method named METRA~\citep{park2024metra} that builds on the Wasserstein dependency measure~\citep{ozair2019wasserstein}, provide an alternative explanation under the well-studied MISL framework, and ultimately develop a simpler method.

\textbf{Contrastive learning.} Contrastive learning has achieved
great success for representation learning
in natural language processing and computer vision~\citep{radford2021learning,chen2020simple,gao2021simcse, NIPS2016_6b180037, chopraSimilarity2005, oord2018representation, pmlr-v9-gutmann10a, ma2018noise, tschannenmutual}. These methods aim to push together the representations of positive pairs drawn from the joint distribution, while pushing away the representations of negative pairs drawn from the marginals~\citep{ma2018noise, oord2018representation}. 
In the domain of RL, contrastive learning has been used to define auxiliary representation learning objectives for control~\citep{laskin2020curl, yarats2021image}, solve goal-conditioned RL problems~\citep{zheng2024contrastive, eysenbach2022contrastive, eysenbach2020c, mavip}, and derive representations for skill discovery~\citep{laskin2022cic}. Prior work has also provided theoretical analysis for these
methods from the perspective of mutual information maximization~\citep{poole2019variational, tschannenmutual} and the geometry of learned representations~\citep{wang2020understanding}.
Our work will combine insights from both angles to analyze METRA and show its relationship to contrastive learning, resulting in a new skill learning method.

\textbf{Successor features.} Our work builds on successor representations~\citep{dayan1993improving}, which encode the discounted state occupancy measure of policies. Prior work has shown these representations can be learned on high-dimensional tasks~\citep{kulkarni2016deep, zhang2017deep} and help with transfer learning~\citep{barreto2017successor}. When combined with universal value function approximators~\citep{pmlr-v37-schaul15}, these representations generalize to \emph{universal} successor features, which estimate a value function for any reward under any policy~\citep{borsauniversal}. 
 While prior methods have combined successor feature learning with mutual information skill discovery for fast task inference~\citep{Hansen2020Fast, liu2021aps}, we instead use successor features to replace $Q$ estimation after learning state representations (Sec.~\ref{sec:method}).

\section{Preliminaries}
\label{sec:prelims}

\textbf{Mutual information skill learning.} The MISL problem typically involves two steps:~\emph{(1)} unsupervised pretraining and~\emph{(2)} downstream control. For the first step, we consider a Markov decision process (MDP) \emph{without} reward function defined by states $s \in \gS$, actions $a \in \gA$, initial state distribution $p_0 \in \Delta(\gS)$, discount factor $\gamma \in (0, 1]$, and dynamics $p: \gS \times \gA \mapsto \Delta(\gS)$, where $\Delta(\cdot)$ denotes the probability simplex. The goal of unsupervised pretraining is to learn a skill-conditioned policy $\pi: \gS \times \gZ \mapsto \Delta(\gA)$ that conducts diverse and discriminable behaviors, where $\gZ$ is a latent skill space. We use $\beta: \gS \times \gZ \mapsto \Delta(\gA)$ to denote the behavioral policy. We select the prior distribution of skills as a uniform distribution over the $d$-dimensional unit hypersphere $p(z) = \textsc{Unif}(\sS^{d - 1})$ (a uniform von Mises–Fisher distribution~\citep{wiki:Von_Mises–Fisher_distribution}) and {\iclr{\color{cyan}}will use this prior throughout our discussions to unify the theoretical analysis.}

Given a latent skill space $\gZ$, prior methods~\citep{eysenbach2018diversity, Sharma2020Dynamics-Aware, laskin2022cic, gregor2016variational, Hansen2020Fast} maximizes the MI between skills and states $I^{\pi}(S; Z)$ or the MI between skills and transitions $I^{\pi}(S, S'; Z)$ under the target policy. We will focus on $I^{\pi}(S, S'; Z)$ but our discussion generalizes to $I^{\pi}(S; Z)$. Specifically, maximizing the MI between skills and transitions can be written as
\begin{align}
    \max_{\pi} I^{\pi}(S, S'; Z) \stackrel{\text{const.}}{=} \max_{\pi} \E_{\substack{z \sim p(z), s \sim p^{\pi}(s_{+} = s \mid z) \\ s' \sim p^{\pi}(s' \mid s, z)}}[\log p^{\pi}(z \mid s, s')],
    \label{eq:mi-skill-transition-pi}
\end{align}
where $p^{\pi}(s_{+} = s \mid z)$ is the discounted state occupancy measure~\citep{ho2016generative,nachum2019dualdice,eysenbach2022contrastive,zheng2024contrastive} of policy $\pi$ conditioned on skill $z$, and $p^{\pi}(s' \mid s, z)$ is the state transition probability given policy $\pi$ and skill $z$. This optimization problem can be cast as an iterative min-max optimization problem by first choosing a variational distribution $q(z \mid s, s')$ to fit the historical posterior $p^{\beta}(z \mid s, s')$, which is an approximation of $p^{\pi}(z \mid s, s')$, and then choosing policy $\pi$ to maximize discounted return defined by the intrinsic reward $\log q(z \mid s, s')$:
\begin{align}
    q_{k + 1} &\leftarrow \argmax_{q} \E_{p^{\beta}(s, s', z)}[\log q(z \mid s, s')]. \label{eq:q-update-beta} \\
    \pi_{k + 1} &\leftarrow \argmax_{\pi} \E_{p^{\pi}(s, s', z)}[ \log q_{k}(z \mid s, s') ], \label{eq:pi-update}
\end{align}
where $k$ indicates the number of updates. See Appendix~\ref{appendix:mi-maximization-as-min-max-optimization} for a detailed discussion.

For the second step, given a regular MDP (\emph{with} reward function), we reuse the skill-conditioned policy $\pi$ to solve a downstream task. Prior methods achieved this aim by~\emph{(1)} reaching goals in a zero-shot manner~\citep{park2022lipschitz,park2023controllability,park2024metra},~\emph{(2)} learning a hierarchical policy $\pi_h: \gS \mapsto \Delta(\gZ)$ that outputs skills instead of actions~\citep{eysenbach2018diversity,laskin2022cic,gregor2016variational}, or~\emph{(3)} planning in the latent space with a learned dynamics model~\citep{Sharma2020Dynamics-Aware}.

\textbf{METRA.} Maximizing the mutual information between states and latent skills $I(S; Z)$ only encourages an agent to find discriminable skills, while the algorithm might fail to prioritize state space coverage~\citep{park2024metra, park2022lipschitz}. A prior state-of-the-art method, METRA~\citep{park2024metra}, proposes to solve this problem by learning representations of states $\phi: \gS \mapsto \R^d$ via maximizing the Wasserstein dependency measure (WDM)~\citep{ozair2019wasserstein} between states and skills $I_\gW(S; Z)$. Specifically, METRA chooses to enforce the 1-Lipschitz continuity of $\phi$ under the temporal distance metric, resulting in a constrained optimization problem for $\phi$:
\begin{align}
    \max_{\phi} \E_{p(z) p^{\beta}(s, s' \mid z)}[(\phi(s') - \phi(s))^{\top} z] \quad \text{s.t.} \; \norm{\phi(s') - \phi(s)}_2^2 \leq 1 \; \forall (s, s') \in \gS^{\beta}_\text{adj},
    \label{eq:repr-loss-every-transition}
\end{align}
where $p^{\beta}(s, s' \mid z)$ denotes the probability of first sampling $s$ from the discounted state occupancy measure $p^{\beta}(s_{+} = s \mid z)$ and then transiting to $s'$ by following the behavioral policy $\beta$, and $\gS^{\beta}_\text{adj}$ denotes the set of all the adjacent state pairs visited by $\beta$. 
In practice, METRA uses dual gradient descent to solve Eq.~\ref{eq:repr-loss-every-transition}, resulting in an iterative optimization problem\footnote{We ignore the slack variable $\epsilon$ in~\citet{park2024metra} because it takes a fairly small value $\epsilon = 10^{-3} \ll 1$.}
\begin{gather}
    \min_{\lambda \geq 0} \max_{\phi} \; L(\phi, \lambda) \nonumber \\
    L(\phi, \lambda) \triangleq \E_{p(z) p^{\beta}(s, s' \mid z)}[(\phi(s') - \phi(s))^{\top} z] + \lambda \left(1 - \E_{p^{\beta}(s, s')} \left[ \norm{\phi(s') - \phi(s)}_2^2 \right] \right),
    \label{eq:repr-loss-expected-transition-dual-gd}
\end{gather}
Importantly, $L(\phi, \lambda)$ is \emph{not} the Lagrangian of Eq.~\ref{eq:repr-loss-every-transition} because $L(\phi, \lambda)$ does not contain a dual variable for every $(s, s') \in \gS^{\beta}_\text{adj}$. We will discuss the actual METRA representation objective and the behavior of convergent representations in Sec.~\ref{subsec:connecting-metra-with-contrastive-learning}.

After learning the state representation $\phi$, METRA finds its skill-conditioned policy $\pi$ via maximizing the RL objective with intrinsic reward $(\phi(s') - \phi(s))^{\top} z$:
\begin{align}
    \max_{\pi} J(\pi), J(\pi) \triangleq \E_{\substack{z \sim p(z), s \sim p^{\pi}(s_{+} = s \mid z) \\ s' \sim p^{\pi}(s' \mid s, z)}} \left[ (\phi(s') - \phi(s))^{\top} z \right].
    \label{eq:rl-obj}
\end{align}
In the following sections, we will provide another way to understand this SOTA method, draw connections with contrastive learning~\citep{oord2018representation} and the information bottleneck~\citep{alemi2017deep}, and then derive a simpler MISL algorithm.

\vspace{-0.75em}
\section{Understanding the Prior Method}
\vspace{-0.5em}
\label{sec:understanding-metra}

In this section, we reinterpret METRA through the lens of MISL, showing that:
\begin{enumerate}
    \item The METRA representation objective is nearly identical to a contrastive loss (which maximizes a lower bound on mutual information). See Sec.~\ref{subsec:connecting-metra-with-contrastive-learning}.
    \item The METRA actor objective is equivalent to a mutual information lower bound \emph{plus an extra term}. This extra term is related to an information bottleneck~\citep{tishby2000information, alemi2017deep}, and our experiments will show that it is important for exploration. See Sec.~\ref{subsec:metra-intrinsic-reward}.
\end{enumerate}
Sec.~\ref{sec:method} will then introduce a new mutual information algorithm that combines these insights to match the performance of METRA while \emph{(1)} retaining the theoretical grounding of mutual information and {\iclr{\color{cyan}}\emph{(2)} being simpler to implement.}

\vspace{-0.5em}
\subsection{Connecting METRA's Representation Objective and Contrastive Learning}
\vspace{-0.25em}
\label{subsec:connecting-metra-with-contrastive-learning}

Our understanding of METRA starts by interpreting the representation objective of METRA as a contrastive loss. This interpretation proceeds in two steps. First, we focus on understanding the \emph{actual} representation objective of METRA, aiming to predict the convergent behavior of the learned representations. Second, based on the actual representation objective, we draw a connection between METRA and contrastive learning. In Sec.~\ref{sec:experiments}, we conduct experiments to verify that METRA learns optimal representations in practice and that they bear resemblance to contrastive representations.

Sec.~\ref{sec:prelims} mentioned that the Lagrangian $L(\phi, \lambda)$ used as the METRA representation objective does not correspond to the constrained optimization problem in Eq.~\ref{eq:repr-loss-every-transition}, raising the following question: \emph{What is the actual METRA representation objective?} To answer this question, we note that, rather than using distinct dual variables for each pair of $(s, s') \in \gS^{\beta}_{\text{adj}}$, $L(\phi, \lambda)$ employs a single dual variable, imposing an \emph{expected} temporal distance constraint over all pairs of $(s, s')$ under the historical transition distribution $p^{\beta}(s, s')$. This observation suggests that METRA's representations are optimized with the following objective
\begin{align}
    \max_{\phi} \E_{p(z) p^{\beta}(s, s' \mid z)}[(\phi(s') - \phi(s))^{\top} z] \quad \text{s.t.} \; \E_{p^{\beta}(s, s')} \left[ \norm{\phi(s') - \phi(s)}_2^2 \right] \leq 1.
    \label{eq:repr-loss-expected-transition}
\end{align}
Applying KKT conditions to $L(\phi, \lambda)$, we claim that
\begin{restatable}{prop}{propMetraReprObj}
\label{prop:repr-loss-expected-transition}
The optimal state representation $\phi^{\star}$ of the actual METRA representation objective (Eq.~\ref{eq:repr-loss-expected-transition}) satisfies
\begin{align*}
    \E_{p^{\beta}(s, s')} \left[ \norm{\phi^{\star}(s') - \phi^{\star}(s)}_2^2 \right] = 1.
\end{align*}
\end{restatable}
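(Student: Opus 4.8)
The plan is to treat Eq.~\ref{eq:repr-loss-expected-transition} as a constrained optimization and argue that the inequality constraint must be active at the optimum, so the KKT stationarity condition built into $L(\phi,\lambda)$ forces equality. First I would observe that the objective $\E_{p(z)p^{\beta}(s,s'\mid z)}[(\phi(s')-\phi(s))^{\top}z]$ is \emph{positively homogeneous of degree one} in $\phi$: replacing $\phi$ by $c\phi$ for $c>0$ scales the objective by $c$ and scales the left side of the constraint $\E_{p^{\beta}(s,s')}[\norm{\phi(s')-\phi(s)}_2^2]$ by $c^2$. This is the key structural fact.

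Next I would argue by contradiction. Suppose $\phi^{\star}$ is optimal but the constraint is slack, i.e.\ $\E_{p^{\beta}(s,s')}[\norm{\phi^{\star}(s')-\phi^{\star}(s)}_2^2] = t < 1$. If $t>0$, rescale: let $\tilde\phi = c\,\phi^{\star}$ with $c = 1/\sqrt{t} > 1$. Then $\tilde\phi$ is still feasible (the constraint becomes an equality), and its objective value is $c$ times that of $\phi^{\star}$, hence strictly larger (the objective at $\phi^{\star}$ must be positive — otherwise $\phi\equiv 0$ would be optimal, but then any small feasible perturbation in a direction correlated with $z$ strictly improves it, contradicting optimality; this edge case deserves a sentence). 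This contradicts optimality of $\phi^{\star}$. The degenerate case $t=0$ means $\phi^{\star}(s')=\phi^{\star}(s)$ almost surely under $p^{\beta}$, which again gives objective value $0$ and is ruled out the same way. Therefore $t=1$, which is the claim.

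Alternatively — and this is presumably what ``Applying KKT conditions to $L(\phi,\lambda)$'' signals — I would phrase it via complementary slackness directly: at a KKT point of $L(\phi,\lambda)$ we have $\lambda^{\star}\bigl(1 - \E_{p^{\beta}(s,s')}[\norm{\phi^{\star}(s')-\phi^{\star}(s)}_2^2]\bigr) = 0$, so it suffices to show $\lambda^{\star}>0$. If $\lambda^{\star}=0$, then $\phi^{\star}$ would be an unconstrained maximizer of the linear objective $\E[(\phi(s')-\phi(s))^{\top}z]$, which is unbounded above under the homogeneity observation (scale $\phi$ up), contradicting the existence of a finite maximizer. Hence $\lambda^{\star}>0$ and complementary slackness gives the equality.

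The main obstacle I anticipate is not the algebra but the regularity/existence bookkeeping: one must ensure a maximizer of Eq.~\ref{eq:repr-loss-expected-transition} actually exists (the feasible set is a ball in an appropriate function space, and the objective is continuous/linear, so this is fine but should be stated), rule out the trivial $\phi\equiv\text{const}$ solution by exhibiting \emph{some} transition with nonzero expected displacement correlated with a skill direction (which holds unless $p^{\beta}$ is degenerate, e.g.\ the behavioral policy never moves — a case worth flagging as an assumption), and confirm the homogeneity argument interacts correctly with whatever parameterization of $\phi$ is in force. Given those, the equality is immediate from either the rescaling contradiction or complementary slackness.
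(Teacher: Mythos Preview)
Your proposal is correct and follows essentially the same rescaling-by-homogeneity argument as the paper's proof: assume the constraint is slack, rescale $\phi^{\star}$ to tighten it, and obtain a strictly larger objective, contradicting optimality. The only cosmetic difference is in handling the non-positive-objective edge case --- the paper simply observes that $-\phi^{\star}/\alpha$ works when the objective is negative, whereas you argue positivity of the optimal objective via a perturbation; both are fine, and your additional KKT/complementary-slackness framing and the existence/degeneracy caveats are more careful than what the paper writes but not required.
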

The proof is in Appendix~\ref{appendix:proof-of-repr-loss-expected-transition}. 
Constraining the representation of consecutive states in expectation not only clarifies the actual METRA representation objective, but also means that we can predict the value of this expectation for optimal $\phi$. 
Sec.~\ref{subsec:metra-constrains-repr-in-expectation} includes experiments studying whether the optimal representation satisfies this proposition in practice.  Importantly, identifying the actual METRA representation objective allows us to draw a connection with the rank-based contrastive loss (InfoNCE~\citep{oord2018representation, ma2018noise}), which we discuss next.

We relate the actual METRA representation objective to a contrastive loss, which we will specify first and then provide some intuitions for what it is optimizing. This loss is a lower bound on the mutual information $I^{\beta}(S, S'; Z)$ and a variant of the InfoNCE objective~\citep{henaff2020data, ma2018noise, zheng2024contrastive}. 
Starting from the standard variational lower bound~\citep{barber2004algorithm, poole2019variational}, prior work derived an unnormalized variational lower bound on $I^{\beta}(S, S'; Z)$ ($I_{\text{UBA}}$ in~\citep{poole2019variational}),
\begin{align*}
    I^{\beta}(S, S'; Z) \geq \E_{p^{\beta}(s, s', z)}[f(s, s', z)] - \E_{p^{\beta}(s, s')} \left[ \log \E_{p(z')} \left[ e^{f(s, s', z')} \right] \right],
\end{align*}
where $f: \gS \times \gS \times \gZ \mapsto \R$ is the critic function~\citep{ma2018noise, poole2019variational, eysenbach2022contrastive, zheng2024contrastive}.
Since the critic function $f$ takes an arbitrary functional form, one can choose to parameterize $f$ as the inner product between the difference of transition representations and the latent skill, i.e., $f(s, s', z) = (\phi(s') - \phi(s))^{\top} z$. This yields a specific lower bound:
{\footnotesize 
\begin{align}
        I^{\beta}(S, S'; Z) \geq \underbrace{\E_{p^{\beta}(s, s', z)}[(\phi(s') - \phi(s))^{\top} z]}_{\text{LB}^{\beta}_{{\color{black}+}}(\phi)} \underbrace{- \E_{p^{\beta}(s, s')} \left[ \log \E_{p(z')} \left[ e^{(\phi(s') - \phi(s))^{\top} z'} \right] \right]}_{\text{LB}^{\beta}_{{\color{black}-}}(\phi)} \triangleq \text{LB}^{\beta}(\phi).
        \label{eq:mi-contrastive-lower-bound}
\end{align}}Intuitively, $\text{LB}^{\beta}_{+}(\phi)$ pushes together the difference of transition representations $\phi(s') - \phi(s)$ and the latent skill $z$ sampled from the same trajectory (positive pairs), while $\text{LB}^{\beta}_{-}(\phi)$ pushes away $\phi(s') - \phi(s)$ and $z$ sampled from different trajectories (negative pairs). This intuition is similar to the effects of the contrastive loss, and we note that Eq.~\ref{eq:mi-contrastive-lower-bound} only differs from the standard InfoNCE loss in excluding the positive pair in $\text{LB}^{\beta}_{-}(\phi)$. We will call this lower bound on the mutual information the~\emph{contrastive lower bound}. 

We now connect the contrastive lower bound $\text{LB}^{\beta}(\phi)$ (Eq.~\ref{eq:mi-contrastive-lower-bound}) to the actual METRA representation loss $L(\phi, \lambda)$ (Eq.~\ref{eq:repr-loss-expected-transition-dual-gd}). While both of these optimization problems share the positive pair term ($\text{LB}^{\beta}_{+}(\phi)$), they vary in the way they handle randomly sampled $(s, s', z)$ pairs (negatives): METRA constrains the expected L2 representation distances $\lambda \left(1 - \E_{p^{\beta}(s, s')} \left[ \norm{\phi(s') - \phi(s)}_2^2 \right] \right)$, while the contrastive lower bound minimizes the log-expected-exp score ($\text{LB}_{-}^{\beta}(\phi)$). However, we bridge this difference by viewing the expected L2 distance as a quadratic approximation of the log-expected-exp score:  
\begin{restatable}{prop}{propLogSumExpApproximation}
    \label{prop:log-expected-exp-approximation}
    There exists a $\lambda_0(d)$ depending on the dimension $d$ of the state representation $\phi$ such that the following second-order Taylor approximation holds
    \begin{align*}
        \lambda_0(d) (1 - \E_{p^{\beta}} \left[ \norm{\phi(s') - \phi(s)}_2^2 \right]) \approx \text{LB}^{\beta}_{-}(\phi).
    \end{align*}
\end{restatable}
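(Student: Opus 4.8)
The plan is to recognize the inner quantity $\psi(u) := \log \E_{p(z')}\!\big[e^{u^\top z'}\big]$ as the cumulant generating function of $z' \sim \textsc{Unif}(\sS^{d-1})$, so that replacing $\psi$ by its degree-$2$ Taylor polynomial at the origin \emph{is} ``the second-order Taylor approximation'' in the statement, and then to substitute $u = \phi(s') - \phi(s)$ and average over $p^\beta(s,s')$. First I would fix a transition $(s,s')$, set $u = \phi(s') - \phi(s) \in \R^d$, and record the value and first two derivatives of $\psi$ at $u = \vzero$. Since $z'$ is bounded, $\psi$ is smooth (indeed real-analytic) near $0$, and the standard cumulant-generating-function identities give $\psi(\vzero) = 0$, $\nabla\psi(\vzero) = \E_{p(z')}[z'] = \vzero$, and $\nabla^2\psi(\vzero) = \Cov_{p(z')}(z') = \E_{p(z')}[z' z'^\top]$.

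The single place the geometry of the prior enters is the evaluation of $\E_{p(z')}[z' z'^\top]$. By rotational invariance of $\textsc{Unif}(\sS^{d-1})$ this matrix is a multiple of $\mI_d$, and taking the trace together with $\norm{z'}_2 = 1$ forces $\E_{p(z')}[z' z'^\top] = \tfrac{1}{d}\mI_d$. Hence the degree-$2$ Taylor polynomial of $\psi$ at the origin is $u \mapsto \tfrac{1}{2}\, u^\top\!\big(\tfrac{1}{d}\mI_d\big)u = \tfrac{1}{2d}\norm{u}_2^2$, i.e. $\psi(u) = \tfrac{1}{2d}\norm{u}_2^2 + o(\norm{u}_2^2)$, with the linear term absent by symmetry. (If a quantitative remainder is desired, one can instead start from the closed form $\E_{p(z')}[e^{u^\top z'}] = \Gamma(d/2)\,(2/\norm{u}_2)^{d/2-1} I_{d/2-1}(\norm{u}_2) = \sum_{k\ge 0}\tfrac{\Gamma(d/2)}{k!\,\Gamma(d/2+k)}(\norm{u}_2^2/4)^k$ and take the log of the series; the degree-$2$ coefficient is $\tfrac{1}{2d}$ and the next correction is $-\tfrac{\norm{u}_2^4}{4d^2(d+2)}$, so the relative error of the quadratic approximation is of order $\norm{u}_2^2/d^2$.)

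It then remains to substitute and average. Plugging $u = \phi(s') - \phi(s)$ into the quadratic approximation and taking $\E_{p^\beta(s,s')}$ gives
\begin{align*}
    \text{LB}^\beta_-(\phi) \;=\; -\,\E_{p^\beta(s,s')}\!\big[\psi(\phi(s')-\phi(s))\big] \;\approx\; -\frac{1}{2d}\,\E_{p^\beta(s,s')}\!\big[\norm{\phi(s')-\phi(s)}_2^2\big].
\end{align*}
Writing the right-hand side as $\tfrac{1}{2d}\big(1 - \E_{p^\beta}[\norm{\phi(s')-\phi(s)}_2^2]\big) - \tfrac{1}{2d}$ and discarding the additive constant $-\tfrac{1}{2d}$ --- which does not depend on $\phi$ and is therefore immaterial when this term is used inside the representation objective --- yields the claim with $\lambda_0(d) = \tfrac{1}{2d}$.

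The only real subtlety, and the reason the statement is an approximation rather than an identity, is the size of the discarded terms. The Taylor error in $\psi$ is $o(\norm{u}_2^2)$ (quantitatively $O(\norm{u}_2^4/d^2)$ by the Bessel-series estimate above), which is genuinely nonzero in the regime of interest: \cref{prop:repr-loss-expected-transition} forces $\E_{p^\beta}[\norm{\phi(s')-\phi(s)}_2^2] = 1$ at the optimum, so a ``typical'' $\norm{\phi(s')-\phi(s)}_2$ is $O(1)$ rather than infinitesimal. These quartic corrections, together with the $\phi$-independent constant, are exactly what separates METRA's expected-squared-distance penalty from the exact contrastive term $\text{LB}^\beta_-(\phi)$; one should therefore read \cref{prop:log-expected-exp-approximation} as identifying the leading-order match, with the discrepancy controlled (and shrinking in the representation dimension $d$) rather than zero.
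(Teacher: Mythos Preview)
Your proof is correct and arrives at the same constant $\lambda_0(d)=\tfrac{1}{2d}$ and the same additive-constant caveat as the paper. The route, however, is genuinely different from the paper's. The paper computes $\log \E_{p(z)}[e^{u^\top z}]$ in closed form via the von Mises--Fisher normalizing constant, obtains an expression in terms of the modified Bessel function $\gI_{d/2-1}$, and then Taylor-expands that expression (using Mathematica) to read off the $\tfrac{1}{2d}\norm{u}_2^2$ leading term. Your primary argument instead recognizes $\psi(u)=\log\E[e^{u^\top z'}]$ as a cumulant generating function and extracts the quadratic term directly from $\nabla^2\psi(0)=\Cov(z')=\tfrac{1}{d}\mI_d$, using only rotational invariance of $\textsc{Unif}(\sS^{d-1})$ and the trace trick. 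This is more elementary and more transparent: no special functions or computer algebra are needed, and the origin of the $\tfrac{1}{2d}$ factor is immediately visible as the per-coordinate variance on the sphere. The Bessel-series computation you sketch as an optional add-on is essentially the paper's approach, and buys what you say it does---an explicit $O(\norm{u}_2^4/d^2)$ remainder rather than a bare $o(\norm{u}_2^2)$. Your closing discussion of why this is only an approximation (since $\norm{\phi(s')-\phi(s)}_2$ is $O(1)$ at the optimum by \cref{prop:repr-loss-expected-transition}) is a useful addition not present in the paper's proof.
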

See Appendix~\ref{appendix:proof-of-log-expected-exp-approximation} for a proof. This approximation shows that the constraint in the actual METRA representation loss has effects similar to $\text{LB}^{\beta}_{-}(\phi)$, namely pushing $\phi(s') - \phi(s)$ away from randomly sampled skills. Furthermore, this proposition allows us to spell out the (approximate) equivalence between representation learning in METRA and the contrastive lower bound on $I^{\beta}(S, S'; Z)$:
\begin{corollary}
The METRA representation objective is equivalent to a second-order Taylor approximation of the contrastive lower bound on $I^{\beta}(S, S'; Z)$, i.e., $L(\phi, \lambda_0(d)) \approx \text{LB}^{\beta}(\phi)$.
\label{coro:connect-metra-and-contrastive}
\end{corollary}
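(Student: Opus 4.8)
The plan is to obtain the corollary as a one-line consequence of Proposition~\ref{prop:log-expected-exp-approximation} together with the algebraic shape of the two objectives. First I would line the two objectives up under the same decomposition: from Eq.~\ref{eq:mi-contrastive-lower-bound} the contrastive lower bound splits as $\text{LB}^{\beta}(\phi) = \text{LB}^{\beta}_{+}(\phi) + \text{LB}^{\beta}_{-}(\phi)$, while the actual METRA representation loss of Eq.~\ref{eq:repr-loss-expected-transition-dual-gd} can be written as $L(\phi, \lambda) = \text{LB}^{\beta}_{+}(\phi) + \lambda\bigl(1 - \E_{p^{\beta}(s,s')}\bigl[\norm{\phi(s') - \phi(s)}_2^2\bigr]\bigr)$, since its positive-pair term $\E_{p(z)p^{\beta}(s,s'\mid z)}[(\phi(s')-\phi(s))^\top z]$ is exactly $\text{LB}^{\beta}_{+}(\phi)$. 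So the two objectives coincide on the positive-pair term and differ only in how they score randomly paired (negative) samples.

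Second, I would instantiate the dual variable at the dimension-dependent value $\lambda = \lambda_0(d)$ furnished by Proposition~\ref{prop:log-expected-exp-approximation}, which states precisely that $\lambda_0(d)\bigl(1 - \E_{p^{\beta}}\bigl[\norm{\phi(s')-\phi(s)}_2^2\bigr]\bigr) \approx \text{LB}^{\beta}_{-}(\phi)$, i.e. the expected squared-distance penalty is the second-order Taylor approximation of the log-sum-exp negative term. Substituting this into the expression above and adding the common $\text{LB}^{\beta}_{+}(\phi)$ yields $L(\phi,\lambda_0(d)) = \text{LB}^{\beta}_{+}(\phi) + \lambda_0(d)\bigl(1 - \E_{p^{\beta}}\bigl[\norm{\phi(s')-\phi(s)}_2^2\bigr]\bigr) \approx \text{LB}^{\beta}_{+}(\phi) + \text{LB}^{\beta}_{-}(\phi) = \text{LB}^{\beta}(\phi)$, which is the claimed equivalence; and since $\text{LB}^{\beta}(\phi) \leq I^{\beta}(S,S';Z)$ by Eq.~\ref{eq:mi-contrastive-lower-bound}, the same chain shows $L(\phi,\lambda_0(d))$ is a second-order approximation of this mutual-information bound, justifying the phrasing in the statement.

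Since the real work is all inside Proposition~\ref{prop:log-expected-exp-approximation}, there is no genuine obstacle at the level of the corollary — it is essentially a substitution. The only point needing care is to be explicit about the sense of ``$\approx$'': it is pointwise in $\phi$ and controlled by the regime where $\norm{\phi(s')-\phi(s)}_2$ is small (equivalently, once the constraint binds, by Proposition~\ref{prop:repr-loss-expected-transition}, where the expected squared distance equals $1$), with the error equal to the Taylor remainder of the log-sum-exp term. I would also flag that the corollary asserts closeness of the two \emph{objectives}, not yet of their optimizers; upgrading it to closeness of the learned representations would require an additional argument (uniform closeness of the objectives plus, e.g., strong concavity or an explicit remainder bound), which is the role of the empirical checks in Sec.~\ref{sec:experiments}.
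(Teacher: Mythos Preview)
Your proposal is correct and follows essentially the same approach as the paper: both observe that $L(\phi,\lambda)$ and $\text{LB}^{\beta}(\phi)$ share the positive-pair term $\text{LB}^{\beta}_{+}(\phi)$, and then invoke Proposition~\ref{prop:log-expected-exp-approximation} at $\lambda=\lambda_0(d)$ to match the negative-pair terms. The paper treats the corollary as immediate from Proposition~\ref{prop:log-expected-exp-approximation} without spelling out the substitution, so your write-up is slightly more detailed but not different in substance.
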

The METRA representation objective can be interpreted as a contrastive loss, allowing us to predict that the optimal state representations $\phi^{\star}$ (Prop.~\ref{prop:repr-loss-expected-transition}) have properties similar to those learned via contrastive learning. In Appendix~\ref{appendix:quadratic-approx-of-lb2}, we include experiments studying whether the approximation in Prop.~\ref{prop:log-expected-exp-approximation} is reasonable in practice. In Sec.~\ref{subsec:metra-learns-contrastive-representations}, we empirically compare METRA's representations to those learned by the contrastive loss. Sec.~\ref{subsec:ablation-studies} will study whether replacing the METRA representation objective with a contrastive objective retains similar performance.

\vspace{-0.5em}
\subsection{Connecting METRA's Actor Objective with an Information Bottleneck}
\vspace{-0.25em}
\label{subsec:metra-intrinsic-reward}

This section discusses the actor objective used in METRA. We first clarify the distinction between the actor objective of METRA and those used in prior methods, helping to identify a term that discourages exploration. Removing this anti-exploration term results in covering a larger proportion of the state space while learning distinguishable skills. We then relate this anti-exploration term to estimating another mutual information, drawing a connection between the entire METRA actor objective and a variant of the information bottleneck~\citep{tishby2000information, alemi2017deep}.

While prior work~\citep{eysenbach2018diversity, gregor2016variational, Sharma2020Dynamics-Aware, Hansen2020Fast, campos2020explore} usually uses the same functional form of the lower bound on {\iclr{\color{cyan}}(Eq.~\ref{eq:q-update-beta} \&~\ref{eq:pi-update}) different variants of the mutual information} to learn both representations and skill-conditioned policies {\iclr{\color{cyan}}(see Appendix~\ref{appendix:prior-mi-objs} for details)}, METRA uses different objectives for the representation and the actor. Specifically, the actor objective of METRA $J(\pi)$ (Eq.~\ref{eq:rl-obj}) only encourages the similarity between the difference of transition representations $\phi(s') - \phi(s)$ and their skill $z$ (positive pairs), while ignoring the dissimilarity between $\phi(s') - \phi(s)$ and a random skill $z$ (negative pairs):
\begin{align*}
    J(\pi) = \text{LB}^{{\color{orange} \pi}}_{\color{black}{+}}(\phi) = \text{LB}^{{\color{orange} \pi}}(\phi) - \text{LB}^{{\color{orange} \pi}}_{-}(\phi),
\end{align*}
where $\text{LB}^{ \pi}(\phi)$, $\text{LB}^{ \pi}_{\color{black}{+}}(\phi)$, and $\text{LB}^{ \pi}_{-}(\phi)$ are under the target policy $\pi$ instead of the behavioral policy $\beta$. The SOTA performance of METRA and the divergence between the functional form of the actor objective (positive term) and the representation objective (positive and negative terms) suggest that $\text{LB}^{\pi}_-(\phi)$ may be a term discouraging exploration. Intuitively, removing this anti-exploration term boosts the learning of diverse skills. We will empirically study the effect of the anti-exploration term in Sec.~\ref{subsec:ablation-studies} and provide theoretical interpretations next.

Our understanding of the anti-exploration term $\text{LB}^{\pi}_{-}(\phi)$ relates it to a resubstituion estimation of the differential entropy $h^{\pi}(\phi(S') - \phi(S))$ in the representation space (see Appendix~\ref{appendix:proof-of-metra-intrinsic-reward} for details), i.e., $\text{LB}^{\pi}_{-}(\phi) = \hat{h}^{\pi}(\phi(S') - \phi(S))$. Note that this entropy is different from the entropy of states $h^{\pi}(S)$, indicating that we want to \emph{minimize} the entropy of the difference of representations $\phi(s') - \phi(s)$ to encourage exploration. There are two underlying reasons for this (seemly counterintuitive) purpose: METRA aims to \emph{(1)} constrain the expected L2 distance of difference of representations $\phi(s') - \phi(s)$ (Eq.~\ref{eq:repr-loss-expected-transition-dual-gd}) and \emph{(2)} push difference of representations $\phi(s') - \phi(s)$ towards skills $z$ sampled from $\textsc{Unif}(\sS^{d - 1})$. Nonetheless, this relationship allows us to further rewrite the anti-exploration term $\text{LB}^{\pi}_{-}(\phi)$ as an estimation of the mutual information $I^{\pi}(S, S'; \phi(S') - \phi(S'))$, connecting the METRA actor objective to an information bottleneck:
\begin{restatable}{prop}{propMetraIntrinsicReward}
    \label{prop:metra-intrinsic-reward}
    The METRA actor objective is a lower bound on the information bottleneck $I^{\pi}(S, S'; Z) - I^{\pi}(S, S'; \phi(S') - \phi(S) )$, i.e., $J(\pi) \leq I^{\pi}(S, S'; Z) - I^{\pi}(S, S'; \phi(S') - \phi(S) )$.
\end{restatable}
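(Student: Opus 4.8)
The plan is to start from the claimed identity $J(\pi) = \text{LB}^{\pi}_{+}(\phi)$ and the decomposition $\text{LB}^{\pi}_{+}(\phi) = \text{LB}^{\pi}(\phi) - \text{LB}^{\pi}_{-}(\phi)$ stated in Section~\ref{subsec:metra-intrinsic-reward}. Since $\text{LB}^{\pi}(\phi)$ is a variational (InfoNCE-style) lower bound on $I^{\pi}(S,S';Z)$ exactly as in Eq.~\ref{eq:mi-contrastive-lower-bound} but with the target policy $\pi$ replacing the behavioral policy $\beta$, we immediately get $\text{LB}^{\pi}(\phi) \le I^{\pi}(S,S';Z)$. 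Hence
\begin{align*}
    J(\pi) = \text{LB}^{\pi}(\phi) - \text{LB}^{\pi}_{-}(\phi) \le I^{\pi}(S,S';Z) - \text{LB}^{\pi}_{-}(\phi).
\end{align*}
So the whole proof reduces to showing that the anti-exploration term lower-bounds the mutual information between transitions and the representation gap, i.e. $\text{LB}^{\pi}_{-}(\phi) \ge I^{\pi}(S,S'; \phi(S') - \phi(S))$; combining this with the display above yields the proposition.

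The key step, then, is the inequality $\text{LB}^{\pi}_{-}(\phi) \ge I^{\pi}(S,S'; \phi(S')-\phi(S))$. I would use the chain of reasoning sketched in the paragraph preceding the proposition: first identify $\text{LB}^{\pi}_{-}(\phi) = -\E_{p^\pi(s,s')}[\log \E_{p(z')}[e^{(\phi(s')-\phi(s))^\top z'}]]$ with a resubstitution estimate of the differential entropy $h^{\pi}(\phi(S')-\phi(S))$ of the pushforward random variable $U := \phi(S')-\phi(S)$. The mechanism is that $\log \E_{p(z')}[e^{U^\top z'}]$ is (up to a normalizing constant independent of $U$) the log-density of $U$ under the distribution whose density is proportional to $\int e^{u^\top z'} p(z')\,dz'$ — i.e. a mixture-of-vMF-type density on the representation space — so $-\E[\log(\cdot)]$ is a cross-entropy, which upper-bounds the true differential entropy $h^{\pi}(U)$ by nonnegativity of KL. Then, since $U$ is a deterministic function of $(S,S')$, we have $h^{\pi}(U \mid S,S') = -\infty$ in the continuous sense but more usefully $I^{\pi}(S,S';U) = h^{\pi}(U) - h^{\pi}(U \mid S,S')$, and the conditional-entropy term is bounded appropriately so that $h^{\pi}(U) \ge I^{\pi}(S,S';U)$ up to the constant; tracking constants carefully (and the slight mismatch between $\phi(S')-\phi(S)$ and $\phi(S')-\phi(S')$ appearing as a typo in the proposition statement) closes the gap.

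I expect the main obstacle to be making the entropy/cross-entropy argument rigorous when $U=\phi(S')-\phi(S)$ is a deterministic, possibly lower-dimensional function of $(S,S')$, so that differential entropies and the mutual information $I^{\pi}(S,S';U)$ may be $+\infty$ or ill-defined without care. The cleanest route is probably to avoid differential entropies entirely and instead argue directly: write $I^{\pi}(S,S';U) = \E_{p^\pi(s,s')}[\,\text{KL}(\delta_{\phi(s')-\phi(s)} \,\|\, p^\pi_U)\,]$ is not quite right either, so more honestly one uses that for any reference density $r$ on the representation space, $I^{\pi}(S,S';U) \le -\E_{p^\pi(s,s')}[\log r(\phi(s')-\phi(s))] + (\text{const})$ via the standard variational upper bound on mutual information (the "minimize over a variational marginal" bound), and then plug in $r \propto \int e^{u^\top z'} p(z')\,dz'$. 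The price is identifying the constant (the log-partition function of that vMF mixture, which depends only on $d$), and verifying that the proposition's statement has absorbed it or treats the bound up to that additive constant. I would flag in the write-up that this is where Appendix~\ref{appendix:proof-of-metra-intrinsic-reward} does the bookkeeping, and present here only the two-line reduction plus the variational-bound step, deferring the constant-chasing.
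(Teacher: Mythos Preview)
Your high-level plan matches the paper's: both start from $J(\pi)=\text{LB}^{\pi}_{+}(\phi)=\text{LB}^{\pi}(\phi)-\text{LB}^{\pi}_{-}(\phi)$, use $\text{LB}^{\pi}(\phi)\le I^{\pi}(S,S';Z)$, and then reduce everything to relating $\text{LB}^{\pi}_{-}(\phi)$ to $I^{\pi}(S,S';\phi(S')-\phi(S))$. So the architecture is correct.

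Where you diverge from the paper is in how that last relation is established. You aim for a genuine inequality $\text{LB}^{\pi}_{-}(\phi)\ge I^{\pi}(S,S';U)$ via a cross-entropy / variational-upper-bound argument, and you correctly flag that this runs into the usual pathology: $U=\phi(S')-\phi(S)$ is a deterministic function of $(S,S')$, so in the continuous setting $I^{\pi}(S,S';U)=+\infty$ and no finite quantity can dominate it. The paper does \emph{not} resolve this rigorously. Instead it works at the level of empirical estimators and Taylor approximations: it (i) recognizes $\E_{p(z')}[e^{U^\top z'}]$ as (up to $C_d(\|U\|_2)$) a vMF kernel density, so that $\text{LB}^{\pi}_{-}(\phi)$ becomes a resubstitution entropy estimator $\hat{h}^{\pi}(U)$ plus a leftover $-\E[\log(1/C_d(\|U\|_2))]$ term; (ii) replaces $\hat{h}^{\pi}(U)$ by $\hat{I}^{\pi}(S,S';U)$ invoking determinism (the same heuristic step you worried about, just hidden behind hats); (iii) Taylor-expands the $C_d$ term using Proposition~\ref{prop:log-expected-exp-approximation} to turn it into $\lambda_0(d)\,\E[\|U\|_2^2]$; and (iv) invokes Proposition~\ref{prop:repr-loss-expected-transition} to set that expected squared norm to $1$, absorbing everything into a constant. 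The proof explicitly concludes that $J(\pi)$ is ``approximately equivalent'' to a lower bound on the information bottleneck, not that the stated inequality holds exactly.

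So the ``constant-chasing'' you planned to defer is precisely where Propositions~\ref{prop:repr-loss-expected-transition} and~\ref{prop:log-expected-exp-approximation} enter, and the result is approximate rather than a strict inequality. Your variational-marginal idea is cleaner in spirit, but as you anticipated it cannot be made to yield a finite bound in this deterministic setting; the paper sidesteps rather than solves that obstacle by staying with estimators and second-order approximations.
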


See Appendix~\ref{appendix:proof-of-metra-intrinsic-reward} for a proof and further discussions. Maximizing the information bottleneck $I^{\pi}(S, S'; Z) - I^{\pi}(S, S'; \phi(S') - \phi(S) )$ compresses the information in transitions $(s, s')$ into difference in representations $\phi(s') - \phi(s)$ while relating these representations to the latent skills $z$~\citep{alemi2017deep, tishby2000information}. This result implies that simply maximizing the mutual information $I^{\pi}(S, S'; Z)$ may be insufficient for deriving a diverse skill-conditioned policy $\pi$, and removing the anti-exploration $\text{LB}_2^{\pi}(\phi)$ may be a key ingredient for the actor objective. In Appendix~\ref{appendix:general-misl-algo}, we propose a general MISL framework based on Prop.~\ref{prop:metra-intrinsic-reward}.

\vspace{-0.5em}
\section{A Simplified Algorithm for MISL via Contrastive Learning}
\vspace{-0.5em}
\label{sec:method}

In this section, we derive a simpler unsupervised skill learning method building upon our understanding of METRA (Sec.~\ref{sec:understanding-metra}). This method maximizes MI (unlike METRA), while retaining the good performance of METRA (see discussion in Sec.~\ref{sec:prelims}). We will first use the contrastive lower bound to optimize the state representation $\phi$ and estimate intrinsic rewards, and then we will learn the policy $\pi$ using successor features. We use \textbf{contrastive successor features (CSF)} to refer to our method.

\subsection{Learning Representations through Contrastive Learning}
\label{subsec:learning-representations-through-contrastive-learning}

Based on our analysis in Sec.~\ref{subsec:connecting-metra-with-contrastive-learning}, we use the contrastive lower bound on $I^{\beta}(S, S'; Z)$ to optimize the state representation directly. Unlike METRA, we obtain this contrastive lower bound \emph{within} the MISL framework (Eq.~\ref{eq:q-update-beta} \& ~\ref{eq:pi-update}) by employing a parameterization of the variational distribution $q(z \mid s, s')$ mentioned in prior work~\citep{poole2019variational, song2021train}. Specifically, using a scaled energy-based model conditioned representations of transition pairs $(s, s')$, we define the variational distribution as 
\begin{align}
    q(z \mid s, s') \triangleq \frac{p(z) e^{(\phi(s') - \phi(s))^{\top} z}}{\E_{p(z')}[ e^{(\phi(s') - \phi(s))^{\top} z'} ]}.
    \label{eq:variational-distribution-parameterization}
\end{align}
Plugging this parameterization into Eq.~\ref{eq:q-update-beta} produces
\begin{align}
    \phi_{k + 1} \leftarrow \argmax_{\phi} \E_{p^{\beta}(s, s', z)} \left[ (\phi(s') - \phi(s))^{\top} z \right] -  \E_{p^{\beta}(s, s')} \left[ \log \E_{p(z')} \left[ e^{(\phi(s') - \phi(s))^{\top} z'} \right] \right],
    \label{eq:phi-update}
\end{align}
which is exactly the contrastive lower bound on $I^{\beta}(S, S'; Z)$. This contrastive lower bound allows us to learn the state representation $\phi$ while getting rid of the dual gradient descent procedure (Eq.~\ref{eq:repr-loss-expected-transition-dual-gd}) adopted by METRA. In practice, we find that adding a fixed coefficient $\xi = 5$ to the second term of Eq.~\ref{eq:phi-update} helps boost performance. {\iclr{\color{cyan}}We include further discussions of $\xi$ in Appendix~\ref{appendix:xi-discussion} and ablation studies in Appendix~\ref{appendix:xi-ablation}.}

In the same way that the METRA actor objective excluded the anti-exploration term (Sec.~\ref{subsec:metra-intrinsic-reward}), we propose to construct the intrinsic reward by removing the negative term from our representation objective (Eq.~\ref{eq:phi-update}), resulting in the same RL objective as $J(\pi)$ (Eq.~\ref{eq:rl-obj}):
\begin{align}
    \pi_{k + 1} \leftarrow \argmax_{\pi} \E_{p^{\pi}(s, s', z)} \left[ r_k(s, s', z) \right], r_k(s, s', z) \triangleq (\phi_k(s') - \phi_k(s))^{\top} z
    \label{eq:pi-update-target-phi}
\end{align}
We use this RL objective as the update rule for the skill-conditioned policy $\pi$ in our algorithm.

\subsection{Learning a Policy with Successor Features} 
\label{subsec:learning-a-policy-with-successor-features}

To optimize the policy (Eq.~\ref{eq:pi-update-target-phi}), we will use an actor-critic method. Most skill learning methods use an off-the-shelf RL algorithm (e.g., TD3~\citep{fujimoto2018addressing}, SAC~\citep{haarnoja2018soft}) to fit the critic. However, by noting that the intrinsic reward function $r(s, s', z)$ \footnote{We ignore the iteration $k$ for notation simplicity.} is a linear combination between basis $\phi(s') - \phi(s) \in \R^{d}$ and weights $z \in \gZ \subset \R^{d}$, we can borrow ideas from successor representations to learn a vector-valued critic. We learn the successor features $\psi^{\pi}: \gS \times \gA \times \gZ \mapsto \R^{d}$:
\begin{align*}
    \psi^{\pi}(s, a, z) \triangleq \E_{s \sim p^{\pi}(s_{+} = s \mid z), s' \sim p(s' \mid s, a)} \left[ \phi(s') - \phi(s) \right],
\end{align*}
with the corresponding skill-conditioned policy $\pi$ in an actor-critic style:
\begin{align*}
    \psi_{k + 1}(s, a, z) &\gets \argmin_{\psi} \E_{(s, a, z) \sim p^{\beta}(s, a, s', z), a' \sim \pi(a' \mid s', z)} \left[ \left( \psi(s, a, z) - \hat{\psi}_k(s, s', a', z) \right)^2 \right], 
    \\
    &\qquad \text{where} \quad \hat{\psi}_k(s, s', a', z) \triangleq \phi_k(s') - \phi_k(s) + \gamma \bar{\psi}_k(s', a', z), \nonumber \\
    \pi_{k + 1} &\gets \argmax_{\pi} \mathbb{E}_{(s, z) \sim p^{\beta}(s, z), a \sim \pi(a \mid s, z)} \left[ \psi_k(s, a, z)^{\top} z \right],
\end{align*}
where $\psi$ is an estimation of $\psi^{\pi}$.
In practice, we optimize $\psi$ and $\pi$ for one gradient step iteratively.

\begin{figure}[t]
\vspace{-4.5em}
\begin{algorithm}[H]
{\footnotesize 
    \caption{Contrastive Successor Features} 
    \label{alg:csf}
    \begin{algorithmic}[1]
        \State{\textbf{Input}} state representations $\phi_{\theta}$, successor features $\psi_{\omega}$, skill-conditioned policy $\pi_{\eta}$, and target successor feature $\psi_{\bar{\omega}}$.
        \For{each iteration}
            \State Collect trajectory $\tau$ with $z \sim p(z)$ and $a \sim \pi_{\eta}(a \mid s, z)$, and then add $\tau$ to the replay buffer.
            \State Sample $\{ (s, a, s', z) \} \sim \text{replay buffer}$, $\{a'\} \sim \pi_{\eta}(a' \mid s', z)$, and $\{z'\} \sim p(z')$.
            \State $\gL(\theta) \leftarrow -\E_{(s, s', z)} \left[ (\phi_{\theta}(s') - \phi_{\theta}(s))^{\top} z \right] + \E_{(s, s')} \left[ \log \sum_{z'} e^{(\phi_{\theta}(s') - \phi_{\theta}(s))^{\top} z'} \right]$.
            \State $\gL(\omega) \leftarrow \E_{(s, a, s', a', z)} \left[ \left( \psi_{\omega}(s, a, z) - (\phi_{\theta}(s') -  \phi_{\theta}(s) + \gamma \psi_{\bar{\omega}}(s', a', z)) \right)^2 \right]$.
            \State $\gL(\eta) \leftarrow -\E_{ (s, z), a \sim \pi_{\eta}(a \mid s, z)} \left[ \psi_{\omega}(s, a, z)^{\top} z \right]$.
            \State Update $\theta$, $\omega$, and $\eta$ by taking gradients of $\gL(\theta)$, $\gL(\omega)$, and $\gL(\eta)$.
            \State Update $\bar{\omega}$ using exponential moving averages.
        \EndFor
        \State{\textbf{Return}} $\phi_{\theta}$, $\psi_{\omega}$, and $\pi_{\eta}$.
    \end{algorithmic}
    }
\end{algorithm}
\vspace{-3em}
\end{figure}

\textbf{Algorithm Summary.}
In Alg.~\ref{alg:csf}, we summarize CSF, our new algorithm.\arxiv{\footnote{Code: \href{https://github.com/Princeton-RL/contrastive-successor-features}{https://github.com/Princeton-RL/contrastive-successor-features}}}\iclr{\footnote{Code \& videos: \href{https://anonymous.4open.science/r/csf-3BF4/README.md}{https://anonymous.4open.science/r/csf-3BF4/README.md}}} Starting from an existing MISL algorithm (e.g., DIAYN~\citep{eysenbach2018diversity} and METRA~\citep{park2024metra}), implementing our algorithm requires making three simple changes:~\emph{(1)} learning state representations $\phi_{\theta}$ by minimizing an InfoNCE loss (excluding positive pairs in the denominator) between pairs of $(s, s')$ and $z$,~\emph{(2)} using a critic $\psi_{\omega}$ with $d$-dimensional outputs and replacing the scalar reward with the vector $\phi_{\theta}(s') - \phi_{\theta}(s)$,~\emph{(3)} sampling the action $a$ from the policy $\pi_{\eta}$ to maximize the inner product $\psi_\omega(s, a, z)^{\top} z$, where $\theta$, $\omega$, and $\eta$ denote parameters of neural networks.

Unlike CIC~\citep{laskin2022cic}, our method does not use the standard InfoNCE loss and instead employs a variant of it. Unlike VISR~\citep{Hansen2020Fast}, our method does not train the state representation $\phi$ using a skill discriminator. Unlike METRA, our method learns representations using the contrastive lower bound directly, avoids the Wasserstein distance and dual gradient descent optimization, {\iclr{\color{cyan}}and results in a simpler algorithm (see Appendix~\ref{subsec:csf-simplicity-compared-to-metra} for further discussions).}

\vspace{-1em}
\section{Experiments}
\label{sec:experiments}
\vspace{-0.5em}

The aims of our experiments are \emph{(1)} verifying the theoretical analysis in Sec.~\ref{sec:understanding-metra} experimentally, \emph{(2)} identifying several ingredients that are key to making MISL algorithms work well more broadly, and \emph{(3)} comparing our simplified algorithm~\newMethodName{} to prior work. Our experiments will use standard benchmarks introduced by prior work on skill learning.
All experiments show means and standard deviations across ten random seeds.

\vspace{-0.5em}
\subsection{METRA Constrains Representations in Expectation}
\vspace{-0.25em}
\label{subsec:metra-constrains-repr-in-expectation}

Sec.~\ref{subsec:connecting-metra-with-contrastive-learning} predicts that the optimal METRA representation satisfies its constraint $\E_p^{\beta}(s, s') \left[ \norm{\phi(s') - \phi(s)}_2^2 \right] = 1$ strictly (Prop.~\ref{prop:repr-loss-expected-transition}). We study whether this condition holds after training the algorithm for a long time.
To answer this question, we conduct didactic experiments with the state-based \texttt{Ant} from METRA~\citep{park2024metra} navigating in an open space. We set the dimension of $\phi$ to $d = 2$ such that visualizing the learned representations becomes easier. After training the METRA algorithm for 20M environment steps (50K gradient steps), we analyze the norm of the difference in representations $\norm{\phi(s') - \phi(s)}_2^2$.

We plot the histogram of $\norm{\phi(s') - \phi(s)}_2^2$ over 10K transitions randomly sampled from the replay buffer (Fig.~\ref{fig:repr-stats-diff}). The observation that the empirical average of $\norm{\phi(s') - \phi(s)}_2^2$ converges to $0.9884$ suggests that the learned representations are feasible. 
Stochastic gradient descent methods typically find globally optimal solutions on over-parameterized neural networks~\citep{du2019gradient}, making us conjecture that the learned representations are nearly optimal (Prop.~\ref{prop:repr-loss-expected-transition}). 
Furthermore, the spreading of the value of $\norm{\phi(s') - \phi(s)}_2^2$ implies that maximizing the METRA representation objective will~\emph{not} learn state representations $\phi$ that satisfy $\norm{\phi(s') - \phi(s)}_2^2 \leq 1$ for every $(s, s') \in \gS^{\beta}_\text{adj}$. These results help to explain what objective METRA's representations are optimizing.

\subsection{METRA Learns Contrastive Representations}
\label{subsec:metra-learns-contrastive-representations}
\vspace{-0.25em}

\begin{figure*}[t]
    \centering
    \vspace{-3.5em}
    \begin{subfigure}[t]{0.32\textwidth}
        \centering
        \includegraphics[width=\linewidth]{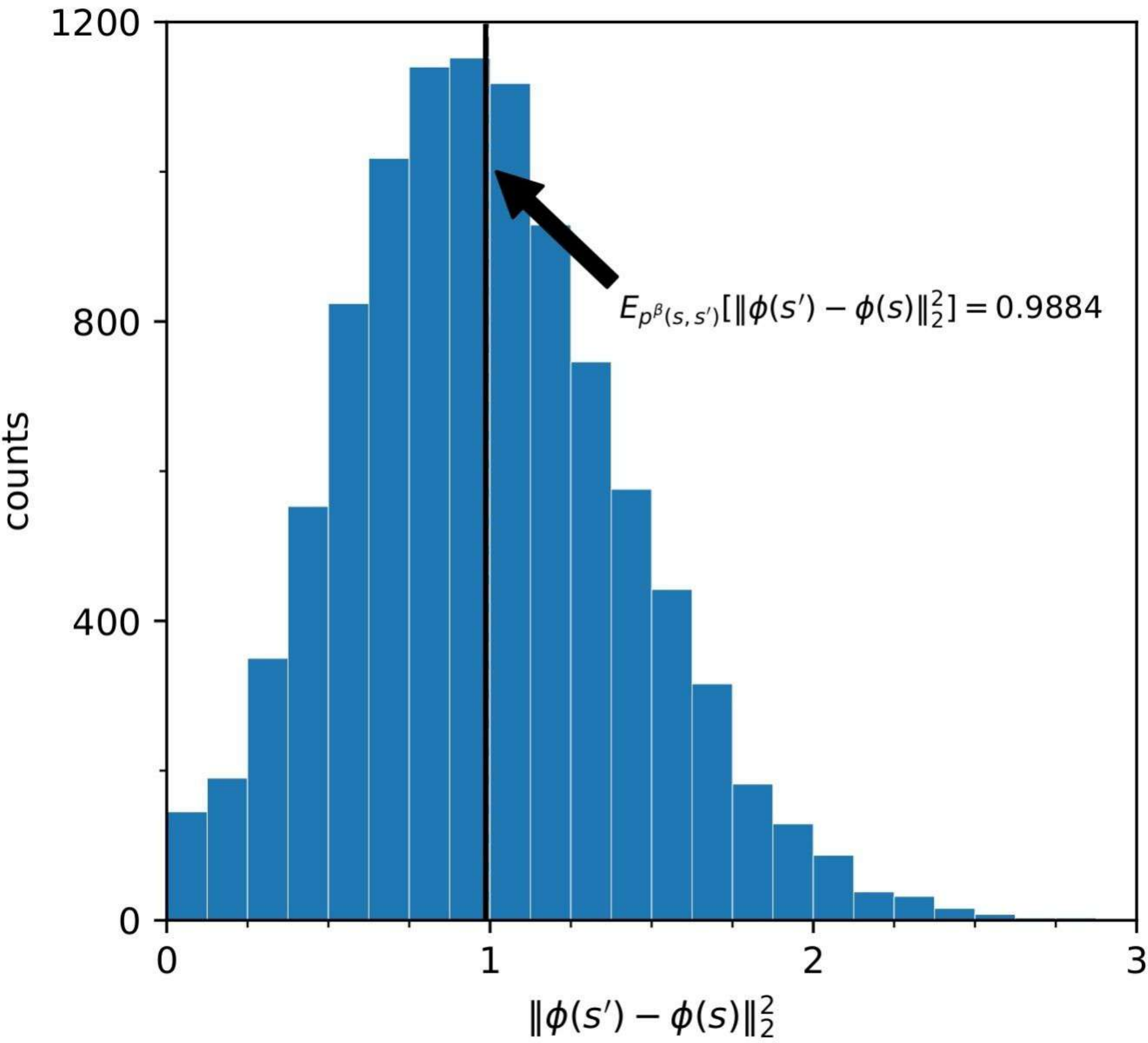}
        \caption{
            Expected constraint
        }
        \label{fig:repr-stats-diff}
        \vspace{-0.5em}
    \end{subfigure}
    \hfill
    \begin{subfigure}[t]{0.32\textwidth}
        \centering
        \includegraphics[width=\linewidth]{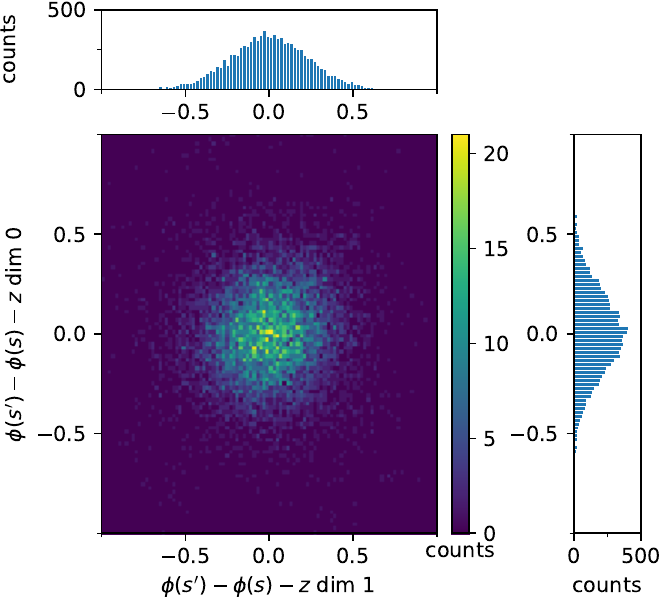}
        \caption{\footnotesize
        Gaussianity
        }
        \label{fig:repr-stats-cond-diff}
        \vspace{-0.5em}
    \end{subfigure}
    \hfill
    \begin{subfigure}[t]{0.32\textwidth}
        \centering
        \includegraphics[width=\linewidth]{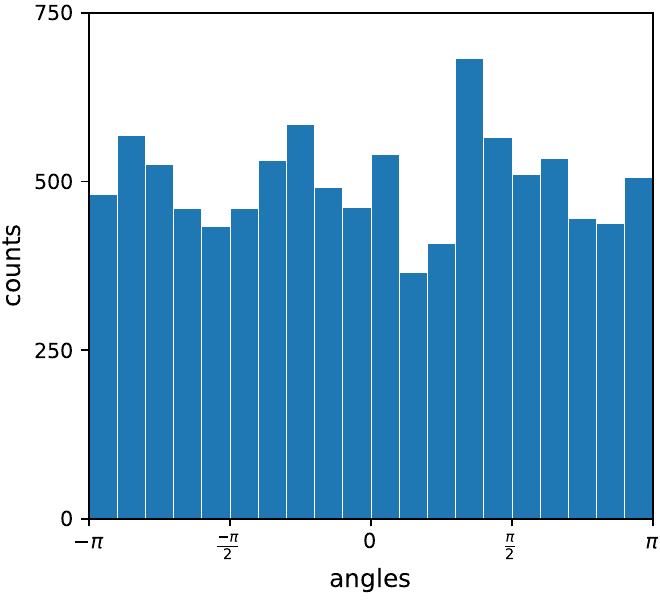}
        \caption{\footnotesize Uniformity}
        \label{fig:repr-stats-norm-diff}
        \vspace{-0.5em}
    \end{subfigure}
    \caption{ \footnotesize \textbf{Histograms of METRA representations.}~\emph{(a)}
    The expected distance of representations converges to $1.0$, helping to explain what objective METRA's representations are optimizing.~\emph{(b)} Given a latent skill, the conditional difference in representations ($\phi(s') - \phi(s) \mid z$) converges to an isotropic Gaussian distribution.~\emph{(c)} Taking the marginal over latent skills, the normalized difference in representations $\left(\frac{(\phi(s') - \phi(s))}{\norm{\phi(s') - \phi(s)}_2}\right)$ converges to a $\textsc{Unif}(\sS^{d - 1})$. These observations are consistent with our theoretical analysis (Cor.~\ref{coro:connect-metra-and-contrastive}), suggesting that METRA is performing a form of contrastive learning.
    }
    \label{fig:repr-stats}
    \vspace{-1em}
\end{figure*}

We next study connections between representations learned by METRA and those learned by contrastive learning empirically. Our analysis in Sec.~\ref{subsec:connecting-metra-with-contrastive-learning} reveals that the representation objective of METRA corresponds to the contrastive lower bound on $I^{\beta}(S, S'; Z)$. This analysis raises the question whether representations learned by METRA share similar structures to representations learned by contrastive losses~\citep{gutmann2010noise, ma2018noise, wang2020understanding}. 

To answer this question, we reuse the trained algorithm in Sec.~\ref{subsec:metra-constrains-repr-in-expectation} and visualize two important statistics: \emph{(1)} the conditional differences in representations $\phi(s') - \phi(s) - z$ and \emph{(2)} the normalized marginal differences in representations $(\phi(s') - \phi(s)) / \norm{\phi(s') - \phi(s)}_2$. The resulting histograms (Fig.~\ref{fig:repr-stats-cond-diff} \& ~\ref{fig:repr-stats-norm-diff}) indicate that the conditional differences in representations $\phi(s') - \phi(s) - z$ converges to an isotropic Gaussian in distribution while the normalized marginal differences in representations $(\phi(s') - \phi(s)) / \norm{\phi(s') - \phi(s)}_2$ converges 
to a uniform distribution on the $d$-dimensional unit hypersphere $\sS^{d - 1}$ in distribution. Prior work~\citep{wang2020understanding} has shown that representations derived from contrastive learning preserve properties similar to these observations. We conjecture that maximizing the contrastive lower bound on $I^{\beta}(S, S'; Z)$ directly has the same effect as maximizing the METRA representation objective. See Appendix~\ref{appendix:metra-repr-claims} for formal claims and connections.

\vspace{-1.0em}
\subsection{Ablation Studies}\label{subsec:ablation-studies}
\vspace{-0.5em}

We now study various design decisions of both METRA and CSF, aiming to identify some key factors that boost these MISL algorithms. We will conduct ablation studies on \texttt{Ant} again, comparing coverage of $(x, y)$ coordinates of different variants.

\textbf{(1) Contrastive learning recovers METRA's representation objective.} 
Our analysis (Sec.~\ref{subsec:connecting-metra-with-contrastive-learning}) and experiments (Sec.~\ref{subsec:metra-learns-contrastive-representations}) have shown that METRA learns contrastive representations. We now test whether we can retain the performance of METRA by simply replacing its representation objective with the contrastive lower bound (Eq.~\ref{eq:mi-contrastive-lower-bound}). Results in Fig.~\ref{fig:ablation-studies}~\figleft~suggest that using the contrastive loss (METRA-C) fully recovers the original performance, circumventing the Wasserstein dependency measure.

\begin{figure*}[t]
    \centering
    \includegraphics[width=\linewidth]{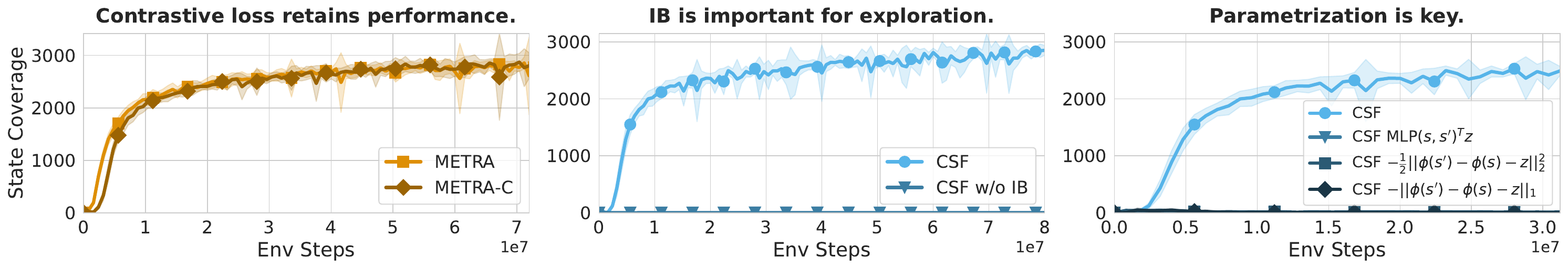}
    \vspace{-1.5em}
    \caption{\footnotesize 
        \textbf{Ablation studies.} \figleft \, Replacing the METRA representation loss with a contrastive loss retains performance. \emph{(Center)} Using an information bottleneck to define the intrinsic reward is important for MISL. \emph{(Right)} Choosing the right parameterization is crucial for good performance. Shaded areas indicate 1 std. dev.
        }
    \label{fig:ablation-studies}
    \vspace{-2em}
\end{figure*}

\textbf{(2) Maximizing the information bottleneck is important.} In Sec.~\ref{subsec:metra-intrinsic-reward}, we interpret the intrinsic reward in METRA as a lower bound on an information bottleneck. We conduct ablation experiments to study the effect of maximizing this information bottleneck over maximizing the mutual information directly, a strategy typically used by prior methods~\citep{eysenbach2018diversity, mendonca2021discovering, Hansen2020Fast}. Results in Fig.~\ref{fig:ablation-studies}~\figcenter~show that CSF failed to discover skills when only maximizing the mutual information (i.e., including the anti-exploration term). These results indicate that using the information bottleneck as the intrinsic reward may be important for MISL algorithms.

\textbf{(3) Parameterization is key for CSF.} 
When optimizing a lower bound on the mutual information $I^\pi(S, S'; Z)$ using a variational distribution, there are many ways to parametrize the critic $f(s, s', z)$. In Eq.~\ref{eq:variational-distribution-parameterization}, we chose the parameterization $(\phi(s') - \phi(s))^{\top} z$, but there are many other choices. Testing the sensitivity of this choice of parameterization allows us to determine whether a \emph{specific form} of the lower bound is important. In Fig.~\ref{fig:ablation-studies}, {\iclr{\color{cyan}}we study several variants of \newMethodName{} that use \emph{(1)} a monolithic network $\text{MLP}(s, s')^{\top} z$ , \emph{(2)} a Gaussian kernel ($-\frac{1}{2}||\phi(s') - \phi(s)||_2^2$), or \emph{(3)} a Laplacian kernel ($-||\phi(s') - \phi(s)||_1$) as the critic parameterization}. We find the alternative parameterizations are catastrophic for performance, suggesting that {\iclr{\color{cyan}}the inner product parameterization is key to CSF. We provide some insights for this parameterization in Appendix~\ref{appendix:inner-product-parameterization-insights}}.

\vspace{-0.25em}
\subsection{\newMethodName{} Matches SOTA for both Exploration and Downstream performance}
\label{subsec:comparison-with-prior-work}
\vspace{-0.25em}

\begin{figure*}[t]
    \vspace{-3.5em}
    \centering
    \includegraphics[width=\linewidth]{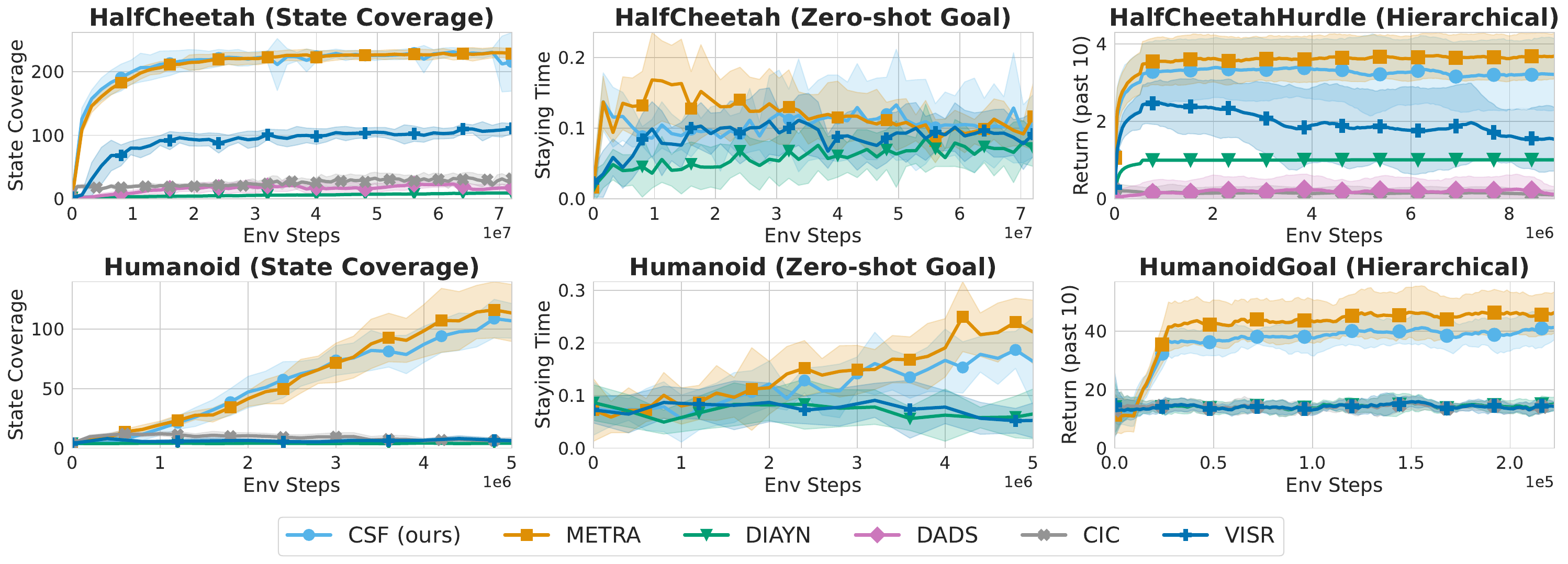}
    \caption{\footnotesize \textbf{\newMethodName{} performs on par with METRA.} 
    We compare \newMethodName{} with baselines on state coverage \emph{(left)}, zero-shot goal reaching \emph{(middle)}, and hierarchical control \emph{(right)}. \newMethodName{} performs roughly on par with METRA and outperforms all other baselines in most settings. Shaded areas indicate one standard deviation. Appendix Fig.~\ref{fig:state-space-coverage},~\ref{fig:goal-reaching}\&~\ref{fig:hierarchical-control} show the learning curves for all tasks. 
    }
    \label{fig:comparison-with-prior-work}
    \vspace{-1.7em}
\end{figure*}

Our final set of experiments compares CSF to prior MISL algorithms, measuring performance on both unsupervised exploration and solving downstream tasks.

\textbf{Experimental Setup.} We evaluate on the same five tasks as those used in~\citet{park2024metra} plus \texttt{Robobin} from LEXA~\citep{mendonca2021discovering}. For baselines, we also use a subset from~\citet{park2024metra} (METRA~\citep{park2024metra}, CIC~\citep{laskin2022cic}, DIAYN~\citep{eysenbach2018diversity}, and DADS~\citep{Sharma2020Dynamics-Aware}) along with VISR~\citep{Hansen2020Fast}. See Appendix~\ref{appendix:experimental-setup} for details.

\textbf{Exploration performance.} To measure the unsupervised exploration capabilities of each method, we compute the state coverage by counting the unique number of $(x, y)$ coordinates visited by the agent. Fig.~\ref{fig:comparison-with-prior-work} \emph{(left)} shows \newMethodName{} matches METRA on both \texttt{HalfCheetah} and \texttt{Humanoid}. See Appendix~\ref{appendix:exploration-performance} for full results on exploration.

\textbf{Zero-shot goal reaching.} In this setting, the agent infers the right skill given a goal without further training on the environment. 
We evaluate on the same set of six tasks and defer both the goal sampling and skill inference strategies to Appendix~\ref{appendix:zero-shot-goal-reaching}. We report the \emph{staying time fraction}, which is the number of time steps that the agent stays at the goal divided by the horizon length.
In Fig.~\ref{fig:comparison-with-prior-work} \emph{(middle)}, we find all methods to perform similarly on \texttt{HalfCheetah}, while METRA and \newMethodName{} perform best on \texttt{Humanoid}, with METRA performing slightly better on the latter. See Appendix~\ref{appendix:zero-shot-goal-reaching} for full results on zero-shot goal reaching.

\textbf{Hierarchical control.}
We train a hierarchical controller $\pi_h(z \mid s)$ that outputs latent skills $z$ as actions for every fixed number of time steps to maximize the discounted return in two downstream tasks from~\citet{park2024metra}, one of which requires to reach a specified goal (\texttt{HumanoidGoal}) and one requires jumping over hurdles (\texttt{HalfCheetahHurdle}). The results in Fig.~\ref{fig:comparison-with-prior-work} \emph{(right)} show \newMethodName{} and METRA are the best performing methods, showing mostly similar performance. See Appendix~\ref{appendix:hierarchical-control} for full results and details.

Taken together, \newMethodName{} is a competitive MISL algorithm that matches the current SOTA. On the full set of results (Appendices~\ref{appendix:exploration-performance},~\ref{appendix:zero-shot-goal-reaching}, and~\ref{appendix:hierarchical-control}), we find that \newMethodName{} continues to perform roughly on par with METRA on most tasks, though there are some tasks where CSF performs better and vice versa.

\vspace{-1.0em}
\section{Conclusion}
\vspace{-0.75em}

\label{sec:conclusion}
In this paper, we show our understanding of a current SOTA unsupervised skill discovery algorithm through the lens of MISL. Our analysis allowed the development of a simpler method \newMethodName{}, which performs on par with METRA in most settings. More broadly, we provide evidence that mutual information maximization can still be effective in building high-performing skill discovery algorithms.  

\textbf{Limitations.} While \newMethodName{} performs relatively well on the standard benchmarks, it is unclear how to \emph{scale} the performance to increasingly complex environments such as Craftax~\citep{matthewscraftax} or VIMA~\citep{jiang2022vima}, which present an increased number of objects, partial observability, stochasticity, and discrete action spaces. {\iclr{\color{cyan}}Another open question is how to perform scalable pre-training on large datasets, e.g., BridgeData V2~\citep{walke2023bridgedata} or YouCook2~\citep{zhou2018towards}, using MISL algorithms such as CSF to get both transferable state representations and diverse skill-conditioned policies.} We leave investigating these empirical scaling limits to future work.

\iclr{\section*{Reproducibility Statement}
We have included an anonymized version of the code to reproduce all experiments in the paper as part of the supplementary material. We have also made an anonymous online version available here: \href{https://anonymous.4open.science/r/csf-3BF4/README.md}{https://anonymous.4open.science/r/csf-3BF4/README.md}. In addition, we have dedicated several sections in the appendix to further ensure reproducibility of our results. Appendix~\ref{appendix:experimental-details} provides a detailed account of all experimental details, including GPU types, training times, hyperparameters and architectural details (Tables~\ref{tab:unsupervised-pretraining-hyperparameters},~\ref{table:skill-dimensions-and-environment}, and~\ref{tab:hierarchical-control-hyperparameters}), and detailed descriptions of the parameters of the various settings we use to compare with prior work (Appendices~\ref{appendix:exploration-performance},~\ref{appendix:zero-shot-goal-reaching},~\ref{appendix:hierarchical-control}). Finally, we have also included full proofs for the theoretical results stated in Proposition~\ref{prop:repr-loss-expected-transition},~\ref{prop:log-expected-exp-approximation}, and~\ref{prop:metra-intrinsic-reward} which can be found in Appendices~\ref{appendix:proof-of-repr-loss-expected-transition},~\ref{appendix:proof-of-log-expected-exp-approximation}, and~\ref{appendix:proof-of-metra-intrinsic-reward}, respectively.}

\section*{Reproducibility Statement}
We have made our code publicly available here: \href{https://github.com/Princeton-RL/contrastive-successor-features}{https://github.com/Princeton-RL/contrastive-successor-features}. In addition, we have dedicated several sections in the appendix to further ensure reproducibility of our results. Appendix~\ref{appendix:experimental-details} provides a detailed account of all experimental details, including GPU types, training times, hyperparameters and architectural details (Tables~\ref{tab:unsupervised-pretraining-hyperparameters},~\ref{table:skill-dimensions-and-environment}, and~\ref{tab:hierarchical-control-hyperparameters}), and detailed descriptions of the parameters of the various settings we use to compare with prior work (Appendices~\ref{appendix:exploration-performance},~\ref{appendix:zero-shot-goal-reaching},~\ref{appendix:hierarchical-control}). Finally, we have also included full proofs for the theoretical results stated in Proposition~\ref{prop:repr-loss-expected-transition},~\ref{prop:log-expected-exp-approximation}, and~\ref{prop:metra-intrinsic-reward} which can be found in Appendices~\ref{appendix:proof-of-repr-loss-expected-transition},~\ref{appendix:proof-of-log-expected-exp-approximation}, and~\ref{appendix:proof-of-metra-intrinsic-reward}, respectively.

\arxiv{\section*{Acknowledgements}
We thank the National Science Foundation (Grant No. 2239363) for providing funding for this work. Any opinions, findings, conclusions, or recommendations expressed in this material are those of the author(s) and do not necessarily reflect the views of the National Science Foundation. We thank Seohong Park for providing code for several of the baselines in the paper. In addition, we thank Qinghua Liu for providing feedback on early drafts of this paper. We also thank Princeton Research Computing.}

\clearpage
\appendix

\section{Theoretical Analysis}

\subsection{Mutual Information Maximization as a Min-Max Optimization Problem}
\label{appendix:mi-maximization-as-min-max-optimization}

Maximizing the mutual information $I^{\pi}(S, S'; Z)$ (Eq.~\ref{eq:mi-skill-transition-pi}) is more challenging than standard RL because the reward function $\log p^{\pi}(z \mid s, s')$ depends on the policy itself. To break this cyclic dependency, we introduce a variational distribution $q(z \mid s, s') \in Q \triangleq \{ q(z \mid s, s') \}$ to approximate the posterior $p^{\pi}(z \mid s, s')$, where we assume that the variational family $Q$ is expressive enough to cover the ground true distribution under any $\pi$:
\begin{assumption}
    For any skill-conditioned policy $\pi: \gS \times \gZ \mapsto \Delta(\gA)$, there exists $q^{\star}(z \mid s, s') \in Q$ such that $q^{\star}(z \mid s, s') = p^{\pi}(z \mid s, s')$.
    \label{assump:expressive-variational-family}
\end{assumption}
This assumption allows us to rewrite Eq.~\ref{eq:mi-skill-transition-pi} as
\begin{align*}
    \max_{\pi} \E_{p^{\pi}(s, s', z)}[\log p^{\pi}(z \mid s, s')] - \min_{q \in Q} \E_{p^{\pi}(s, s')} \left[ \KL \left( p^{\pi}(\cdot \mid s, s') \parallel q(\cdot \mid s, s') \right) \right],
\end{align*}
where $\KL \left( p^{\pi}(\cdot \mid s, s') \parallel q(\cdot \mid s, s') \right)$ is the KL divergence between distributions $p^{\pi}$ and $q$ and it satisfies $D_{\text{KL}}(p^{\pi}(\cdot \mid s, s') \parallel q(\cdot \mid s, s')) = 0 \Longleftrightarrow p^{\pi}(z \mid s, s') = q(z \mid s, s')$. The new max-min optimization problem can be solved iteratively by first choosing variational distribution $q(z \mid s, s')$ to fit the ground truth $p^{\pi}(z \mid s, s')$ and then choosing policy $\pi$ to maximize discounted return defined by the intrinsic reward $q(z \mid s, s')$:
\begin{align*}
    q_{k + 1} &\leftarrow \argmax_{q \in Q} \E_{p^{\pi_k}(s, s', z)} \left[ \log q(z \mid s, s') \right], \\
    \pi_{k + 1} &\leftarrow \argmax_{\pi} \E_{p^{\pi}(s, s', z)}[ \log q_{k}(z \mid s, s') ], 
\end{align*}
where $k$ indicates the number of updates. In practice, the data used to update $q$ are uniformly sampled from a replay buffer typically containing trajectories from historical policies. Thus, the behavioral policy is exactly the average of historical policies $\beta = \frac{1}{k} \sum_{i = 1}^k \pi_i(a \mid s, z)$ and the update rule for $q$ becomes
\begin{align*}
    q_{k + 1} \leftarrow \argmax_{q \in Q} \E_{p^{\beta}(s, s', z)}[\log q(z \mid s, s')].
\end{align*}

\subsection{Proof of Proposition~\ref{prop:repr-loss-expected-transition}}
\label{appendix:proof-of-repr-loss-expected-transition}

\propMetraReprObj*
\begin{proof}
    Suppose that the optimal $\phi^{\star}$ satisfies 
    \begin{align}
        0 \leq \E_{p^{\beta}(s, s')} \left[ \norm{\phi^{\star}(s') - \phi^{\star}(s)}_2^2 \right] = \alpha^2 < 1,
        \label{eq:opt-phi-ineq}
    \end{align}
    where $0 \leq \alpha < 1$. Then, there exists a $1 / \alpha > 1$ that scales the expectation in Eq.~\ref{eq:opt-phi-ineq} to exactly $1$:
    \begin{align*}
        \frac{1}{\alpha^2} \E_{p^{\beta}(s, s')} \left[ \norm{\phi^{\star}(s') - \phi^{\star}(s) }_2^2 \right] = 1.
    \end{align*}
    Note that, when $\E_{p(z) p^{\beta}(s, s' \mid z)} \left[ (\phi^{\star}(s') - \phi^{\star}(s))^{\top} z \right] \geq 0$, the $\phi^{\star} / \alpha$ will also scale the objective to a larger number
    \begin{align*}
        \frac{1}{\alpha} \E_{p(z) p^{\beta}(s, s' \mid z)} \left[ (\phi^{\star}(s') - \phi^{\star}(s))^{\top} z \right] \geq \E_{p(z) p^{\beta}(s, s' \mid z)} \left[ (\phi^{\star}(s') - \phi^{\star}(s))^{\top} z \right],
    \end{align*}
    which contradicts the assumption that $\phi^{\star}$ is optimal. When $\E_{p(z) p^{\beta}(s, s' \mid z)} \left[ (\phi^{\star}(s') - \phi^{\star}(s))^{\top} z \right] < 0$, taking $-\phi^{\star} / \alpha$ gives us the same result. Therefore, we conclude that the optimal $\phi^{\star}$ must satisfy $\E_{p^{\beta}(s, s')} \left[ \norm{\phi^{\star}(s') - \phi^{\star}(s)}_2^2 \right] = 1$.
\end{proof}

\subsection{Proof of Proposition~\ref{prop:log-expected-exp-approximation}}
\label{appendix:proof-of-log-expected-exp-approximation}

\propLogSumExpApproximation*
\begin{proof}
    We first compute $\log \E_p(z) \left[ e^{(\phi(s') - \phi(s))^{\top} z} \right]$ analytically,
    \begin{align}
        \log \E_p(z) \left[ e^{(\phi(s') - \phi(s))^{\top} z} \right] &= \log C_d(0) \int e^{(\phi(s') - \phi(s))^{\top} z} dz \nonumber \\
        &= \log \frac{C_d(0)}{C_d(\norm{\phi(s') - \phi(s)}_2)} \nonumber \\
        &\hspace{1em} + \log \int C_d(\norm{\phi(s') - \phi(s)}_2) e^{\norm{\phi(s') - \phi(s)}_2 \frac{(\phi(s') - \phi(s))^{\top} z}{\norm{\phi(s') - \phi(s)}_2}} dz \nonumber \\
        &\stackrel{\emph{(a)}}{=} \log \frac{C_d(0)}{C_d(\norm{\phi(s') - \phi(s)}_2)} \nonumber \\
        &= \log \frac{ \Gamma \left( d / 2 \right) (2\pi)^{d / 2} \gI_{d / 2 - 1}( \norm{\phi(s') - \phi(s)}_2 ) }{ 2\pi^{d / 2} \norm{\phi(s') - \phi(s)}_2^{d / 2 - 1} } \nonumber \\
        &= \log \frac{\Gamma(d / 2) 2^{d / 2 - 1} \gI_{d / 2 - 1}(\norm{\phi(s') - \phi(s)}_2)}{ \norm{\phi(s') - \phi(s)}_2^{d / 2 - 1} } \label{eq:log-modified-bessel},
    \end{align}
    where $\Gamma(\cdot)$ is the Gamma function, $\gI_{v}(\cdot)$ denotes the modified Bessel function of the first kind at order $v$, $C_d(\cdot)$ denotes the normalization constant for $d$-dimensional von Mises-Fisher distribution, and in~\emph{(a)} we use the definition of the density of vMF distributions.
    Applying Taylor expansion~\citep{abramowitz1968handbook} to Eq.~\ref{eq:log-modified-bessel} around $\norm{\phi(s') - \phi(s)}_2 = 0$ by using Mathematica~\citep{Mathematica} gives us a polynomial approximation
    \begin{equation*}
        \log \frac{\Gamma(d / 2) 2^{d / 2 - 1} \gI_{d / 2 - 1}(\norm{\phi(s') - \phi(s)}_2)}{ \norm{\phi(s') - \phi(s)}_2^{d / 2 - 1} } \\
        = \frac{1}{2d} \norm{\phi(s') - \phi(s)}_2^2 + O( \norm{\phi(s') - \phi(s)}_2^3 ) \\
    \end{equation*}
    Now we can simply set $\lambda_0(d) = \frac{1}{2d}$ to get
    \begin{equation*}
        \lambda_0(d) (1 - \norm{\phi(s') - \phi(s)}_2^2) \approx -\log \E_p(z) \left[ e^{(\phi(s') - \phi(s))^{\top} z} \right] + \text{const.}.
    \end{equation*}
    Hence, we conclude that $\lambda_0(d) (1 - \E_{p^{\beta}(s, s')} \left[\norm{\phi(s') - \phi(s)}_2^2\right])$ is a second-order Taylor approximation of $\text{LB}^{\beta}_{-}(\phi) = -\E_{p^{\beta}(s, s')} \left[\log \E_p(z) \left[ e^{(\phi(s') - \phi(s))^{\top} z} \right]\right]$ around $\norm{\phi(s') - \phi(s)}_2^2 = 0$ up to a constant factor of $\lambda_0(d)$.
    \end{proof}

\subsection{Proof of Proposition~\ref{prop:metra-intrinsic-reward}}
\label{appendix:proof-of-metra-intrinsic-reward}

\propMetraIntrinsicReward*
\begin{proof}
    We consider the mutual information between transition pairs and skills under the target policy $I^{\pi}(S, S'; Z)$. The standard variational lower bound~\citep{barber2004algorithm, poole2019variational} of $I^{\pi}(S, S'; Z)$ can we written as:
    \begin{align*}
        I^{\pi}(S, S'; Z) \geq h(Z) + \E_{p^{\pi}(s, s', z)}[\log \tilde{q}(z \mid s, s')],
    \end{align*}
    where $\tilde{q}(z \mid s, s')$ is an arbitrary variational approximation of $p^{\pi}(z \mid s, s')$. We can set $\log \tilde{q}(z \mid s, s')$ to be 
    \begin{align*}
        \log \tilde{q}(z \mid s, s') &= f(s, s', z) + \log p(z) - \log \E_{p(z)} \left[ e^{f(s, s', z)} \right],
    \end{align*}
    resulting in a lower bound:
    \begin{align*}
        I^{\pi}(S, S'; Z) \geq \underbrace{\E_{p^{\pi}(s, s', z)}[(\phi(s') - \phi(s))^{\top} z]}_{\text{LB}^{\pi}_+(\phi)} \underbrace{- \E_{p^{\pi}(s, s')} \left[ \log \E_{p(z)} \left[ e^{(\phi(s') - \phi(s))^{\top} z} \right] \right]}_{\text{LB}^{\pi}_-(\phi)},
    \end{align*}
    where $\text{LB}_+^{\pi}(\phi)$ is exactly the same as the RL objective $J(\pi)$ (Eq.~\ref{eq:rl-obj}). This lower bound is similar to Eq.~\ref{eq:mi-contrastive-lower-bound}, but it is under the target policy $\pi$ instead.
    
    Equivalently, we can write the RL objective as 
    \begin{align*}
        J(\pi) &= \E_{p(z) p^{\pi}(s_{+} = s \mid z) p^{\pi}(s' \mid s, z)} \left[ (\phi(s') - \phi(s))^{\top} z {\color{orange} - \log \E_{p(z')} \left[ e^{(\phi(s') - \phi(s))^{\top} z'} \right] } \right. \\
        & \hspace{11.5em} \left. {\color{orange} + \log \E_{p(z')} \left[ e^{(\phi(s') - \phi(s))^{\top} z'} \right] } \right] = \text{LB}^{\pi}(\phi) - \text{LB}^{\pi}_-(\phi)
    \end{align*}
    where the two log-expected-exps cancel with each other. We next focus on the additional $\text{LB}^{\pi}_-(\phi) = -\E_{p(z) p^{\pi}(s_{+} = s \mid z) p^{\pi}(s' \mid s, z) } \left[ \log \E_{p(z')} \left[ e^{(\phi(s') - \phi(s))^{\top} z'} \right] \right]$, which can be interpreted as a resubstitution entropy estimator of $\phi(s') - \phi(s)$~\citep{wang2020understanding, ahmad1976nonparametric}:
    \begin{align*}
        \text{LB}^{\pi}_-(\phi) &= -\E_{ p^{\pi}(s, s') } \left[ \log \E_{p(z)}\left[ e^{(\phi(s') - \phi(s))^{\top} z} \right] \right] \\
        &\stackrel{\emph{(a)}}{=} -\frac{1}{N} \sum_{i = 1}^N \left[ \log \left( \frac{1}{N} \sum_{j = 1}^N C_d( \norm{\phi(s'_i) - \phi(s_i)}_2) e^{ \norm{\phi(s'_i) - \phi(s_i)}_2 
        \frac{ \left( \phi(s'_i) - \phi(s_i) \right)^{\top} z_j}{ \norm{\phi(s'_i) - \phi(s_i)}_2 }}\right) \right. \\
        & \hspace{4.75em} \left. + \log \frac{1}{C_d( \norm{\phi(s'_i) - \phi(s_i)}_2 )} \right] \\
        &= -\frac{1}{N} \sum_{i = 1}^N \left( \log \hat{p}_{\text{vMF-KDE}}(\phi(s'_i) - \phi(s_i)) + \log \frac{1}{C_d( \norm{\phi(s'_i) - \phi(s_i)}_2 )} \right) \\
        &= \hat{h}^{\pi}(\phi(S') - \phi(S)) - \E_{p^{\pi}(s, s')} \left[ \log \frac{1}{C_d( \norm{\phi(s'_i) - \phi(s_i)}_2 )} \right] \\
        &\stackrel{\emph{(b)}}{=} \hat{I}^{\pi}(S, S'; \phi(S') - \phi(S)) - \E_{p^{\pi}(s, s')} \left[ \log \frac{1}{C_d( \norm{\phi(s'_i) - \phi(s_i)}_2 )} \right] \\
        &\stackrel{\emph{(c)}}{\approx} \hat{I}^{\pi}(S, S'; \phi(S') - \phi(S)) - \lambda_0(d) \E_{p^{\pi}(s, s')} \left[ \norm{\phi(s') - \phi(s)}_2^2 \right] + \text{const.} \\
        &\stackrel{\emph{(d)}}{\approx} \hat{I}^{\pi}(S, S'; \phi(S') - \phi(S)) + \text{const.},
    \end{align*}
    where $C_d(\cdot)$ denotes the normalization constant for $d$-dimensional von Mises-Fisher distribution, in $\emph{(a)}$ we use Monte Carlo estimator with $N$ transitions and skills $\{ (s_i, s'_i, z_i)\}_{i = 1}^N$ to rewrite the expectation, in $\emph{(b)}$ we replace the entropy estimator $\hat{h}^{\pi}$ with the mutual information estimator $\hat{I}^{\pi}$ since $\phi(s') - \phi(s)$ is a deterministic function of $(s, s')$, in $(c)$ we apply the same approximation in Prop.~\ref{prop:log-expected-exp-approximation}, and in $(d)$ the expected squared norm is replaced by $1.0$, {\iclr{\color{cyan}}assuming that Prop.~\ref{prop:repr-loss-expected-transition} holds}. Taken together, we conclude that maximizing the RL objective $J(\pi)$ is approximately equivalent to maximizing a lower bound on the information bottleneck $I^{\pi}(S, S'; Z) - I^{\pi}(S, S'; \phi(S') - \phi(S) )$.
\end{proof}

{\iclr{\color{cyan}}\subsection{Mutual Information Objectives used in Prior Methods}
\label{appendix:prior-mi-objs}

Prior MISL methods~\citep{eysenbach2018diversity, gregor2016variational, Sharma2020Dynamics-Aware, Hansen2020Fast, campos2020explore} adopt the min-max optimization procedure (Eq.~\ref{eq:q-update-beta} \&~\ref{eq:pi-update}) and use the same functional form of the lower bound on different variants of the mutual information as their objectives. We elaborate on which mutual information each prior method optimizes next.

For DIAYN~\citep{eysenbach2018diversity} and VISR~\citep{Hansen2020Fast}, both representation learning and policy learning objectives are (up to some constants) lower bounds on $I(S; Z) \gtrsim -\E_{p(z)}[\log p(z)] + \E_{p(s, z)}[ \log q(z \mid s) ]$, where $\gtrsim$ denotes $>$ up to constant scaling or shifting. See Eq. 2 $\&$ 3 of~\citep{eysenbach2018diversity} and Eq. 9 of~\citep{Hansen2020Fast} for details.

For VIC~\citep{gregor2016variational}, both representation learning and policy learning objectives are lower bounds on $I(S_T; Z \mid S_0) \geq -E_{p(z \mid s_0)}[ \log p(z \mid s_0) ] + \E_{p(s_T, z \mid s_0)}[\log q(z \mid s_0, s_T)]$, where $S_0$ denotes the random variable for initial states and $S_T$ denotes the random variable for terminal states. See Eq. 3 of~\citep{gregor2016variational} for details.

For DADS~\citep{Sharma2020Dynamics-Aware}, both representation learning and policy learning objectives are (approximate) lower bounds on $I(S'; Z \mid S) \geq -\E_{p(s' \mid s)}[\log p(s' \mid s)] + \E_{p(s', z \mid s)}[\log q(s' \mid s, z)]$. See Eq. 5 $\&$ 6 of~\citep{Sharma2020Dynamics-Aware} for details.
}

\subsection{The effect of the scaling coefficient $\xi$}
\label{appendix:xi-discussion}
{\iclr{\color{cyan}}

In Sec.~\ref{subsec:learning-representations-through-contrastive-learning}, we introduce a coefficient $\xi$ to scale the negative term $\text{LB}^{\beta}_{-}(\phi)$ in the contrastive lower bound and find that a proper choice of $\xi$ improves the performance of CSF empirically (Appendix~\ref{appendix:xi-ablation}). This coefficient has an effect similar to the tradeoff coefficient used in information bottleneck optimization ($\beta$ in~\citep{alemi2017deep}). We also note that when setting $\xi \geq 1$, the new representation objective $\text{LB}_{+}^{\beta}(\phi) + \xi \text{LB}_{-}^{\beta}(\phi)$ is still a (scaled) contrastive lower bound on the mutual information $I^{\beta}(S, S'; Z)$. The reason is that $\text{LB}_{-}^{\beta}(\phi)$ is always non-positive:
\begin{align*}
\text{LB}_{-}^{\beta}(\phi) &= - \mathbb{E}_{p^{\beta}(s, s')} \left[ \log \mathbb{E}_{p(z')} \left[ e^{(\phi(s') - \phi(s))^{\top} z'} \right] \right] \\
&\stackrel{\emph{(a)}}{\leq} - \mathbb{E}_{p^{\beta}(s, s')} \left[ \log  e^{ (\phi(s') - \phi(s))^{\top} \mathbb{E}_{p(z')}[z'] } \right] \\
&\stackrel{\emph{(b)}}{=} - \mathbb{E}_{p^{\beta}(s, s')} \left[ \log e^{0} \right] \\
&= 0,
\end{align*}
where in $\emph{(a)}$ we apply Jensen's inequality and in $\emph{(b)}$ we use the symmetry of $\textsc{Unif}(\mathbb{S}^{d - 1})$.
Therefore, for any $\xi \geq 1$, we have $\text{LB}^{\beta}_{+}(\phi) + \xi \text{LB}^{\beta}_{-}(\phi) \leq \text{LB}^{\beta}_{+}(\phi) + \text{LB}^{\beta}_{-}(\phi)$.
}

\subsection{A General Mutual Information Skill Learning Framework}
\label{appendix:general-misl-algo}

The general mutual information skill learning algorithm alternates between~\emph{(1)} collecting data,~\emph{(2)} learning state representation $\phi$ by maximizing a lower bound on the mutual information $I^{\beta}(S, S'; Z)$ under the behavioral policy $\beta$,~\emph{(3)} relabeling the intrinsic reward as a lower bound on the information bottleneck $I^{\pi}(S, S'; Z) - I^{\pi}(S, S'; \phi(S') - \phi(S))$, and finally~\emph{(4)} using an off-the-shelf off policy RL algorithm to learning the skill-conditioned policy $\pi$. We show the pseudo-code of this algorithm in Alg.~\ref{alg:general-misl}.

\begin{algorithm}[t]
    \caption{A General Mutual Information Skill Learning Framework} 
    \label{alg:general-misl}
    \begin{algorithmic}[1]
        \State{\textbf{Input}} state representations $\phi: \gS \mapsto \R^{d}$, latent skill distribution $p(z)$, and skill-conditioned policy $\pi: \gS \times \gZ \mapsto \Delta(\gA)$.
        \For{each iteration}
            \State Collect trajectory $\tau$ with $z \sim p(z)$ and $a \sim \pi(a \mid s, z)$, and then add $\tau$ to the replay buffer.
            \State Sample $B = \{ (s, s', z) \}$ from the replay buffer.
            \State Update $\phi$ by maximizing a lower bound on $I^{\beta}(S, S'; Z)$ constructed using $B$.
            \State Relabel the intrinsic reward as a lower bound on $I^{\pi}(S, S'; Z) - I^{\pi}(S, S'; \phi(S') - \phi(S))$.
            \State Update $\pi$ using an off-policy RL algorithm with $B \cup \{ r(s, s', z) \}$.
        \EndFor
        \State{\textbf{Return}} $\phi^{\star}$ and $\pi^{\star}$.
    \end{algorithmic}
\end{algorithm}

\subsection{Connection Between Representations Learned by METRA and Contrastive Representations}

\label{appendix:metra-repr-claims}

In our experiments (Sec.~\ref{subsec:metra-learns-contrastive-representations}), we sample 10K pairs of $(s, s', z)$ from the replay buffer and use them to visualize the histograms of conditional differences in representations $\phi(s') - \phi(s) - z$ and normalized marginal differences in representations $(\phi(s') - \phi(s)) / \norm{\phi(s') - \phi(s)}_2$. The resulting histograms (Fig.~\ref{fig:repr-stats} \figcenter \, \& \figright) indicate two intriguing properties of representations learned by METRA. First, given a set of skills $\{z\}$, the differences in representations subtracting the corresponding skills $\phi(s') - \phi(s) - z$ converges to an isotropic Gaussian distribution:
\begin{claim}
    The state representations $\phi$ learned by METRA satisfies that $\phi(s') - \phi(s) - z \xrightarrow{\text{d}} \gN(0, \sigma_\phi^2 I)$, or, equivalently, $\phi(s') - \phi(s) \mid z \xrightarrow{\text{d}} \gN(z, \sigma_\phi^2 I)$, where $\xrightarrow{\text{d}}$ denotes convergence in distribution and $\sigma_{\phi}$ is the standard deviation of the isotropic Gaussian.
    \label{claim:diff_phi_minus_z}
\end{claim}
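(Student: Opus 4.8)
The plan is to read off the conditional law of $\phi(S')-\phi(S)$ given $Z$ from the optimality conditions of the contrastive bound, via Bayes' rule plus a Taylor expansion. First I would use Cor.~\ref{coro:connect-metra-and-contrastive}: a converged METRA representation $\phi^\star$ (approximately) maximizes the contrastive lower bound $\text{LB}^\beta(\phi)$, which is the $I_{\text{UBA}}$ bound of~\citet{poole2019variational} evaluated at the critic $f_\phi(s,s',z) = (\phi(s')-\phi(s))^\top z$. Since $I_{\text{UBA}}(f_\phi) = I^\beta(S,S';Z) - \E_{p^\beta(s,s')}\!\left[\KL\!\left(p^\beta(\cdot\mid s,s') \,\|\, q_\phi(\cdot\mid s,s')\right)\right]$ with $q_\phi(z\mid s,s') \propto p(z)e^{(\phi(s')-\phi(s))^\top z}$, maximizing over $\phi$ is the same as projecting, in KL, the true posterior onto the family $\{\,z\mapsto q_\phi(z\mid s,s')\,\}$. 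Because $p(z)=\textsc{Unif}(\sS^{d-1})$ is constant, each $q_\phi(\cdot\mid s,s')$ is a von Mises--Fisher law with natural parameter $u := \phi(s')-\phi(s)$; assuming the dynamics are rich enough that this projection is (near-)exact, I obtain $p^\beta(z\mid s,s') \propto_z e^{(\phi^\star(s')-\phi^\star(s))^\top z}$.

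Second I would flip the conditioning. Write $C_d(\cdot)$ for the vMF normalizer and let $g$ be the marginal density of $U := \phi^\star(S')-\phi^\star(S)$ under $p^\beta$. Since $p^\beta(z\mid s,s')$ depends on $(s,s')$ only through $u$, pushing the joint forward along $(s,s')\mapsto u$ and applying Bayes' rule gives $p^\beta(u \mid z) \propto g(u)\, C_d(\|u\|_2)\, e^{u^\top z}$, with the $z$-dependence of the normalizer dropping out by rotational symmetry. Next I would argue that the prefactor $u\mapsto g(u)C_d(\|u\|_2)$ is, to second order, a centered isotropic Gaussian kernel: the uniformity of the normalized differences (Fig.~\ref{fig:repr-stats-norm-diff}) makes $g$ rotationally symmetric, $g(u)=\tilde g(\|u\|_2)$; Prop.~\ref{prop:log-expected-exp-approximation} gives $\log C_d(\|u\|_2) = \text{const} - \tfrac{1}{2d}\|u\|_2^2 + O(\|u\|_2^3)$ near $0$; and the expected-distance constraint of Prop.~\ref{prop:repr-loss-expected-transition} fixes the second moment of $U$. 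Writing $\log\!\big(\tilde g(\|u\|_2)C_d(\|u\|_2)\big) = a - b\|u\|_2^2 + O(\|u\|_2^3)$ with $b>0$ and completing the square then yields $p^\beta(u\mid z) \propto \exp\!\big(-b\|u - z/(2b)\|_2^2\big)$, so $U\mid z$ converges to an isotropic Gaussian with mean proportional to $z$; absorbing the remaining scale into the normalization of $\phi$ (and into $\sigma_\phi$) gives $\phi(s')-\phi(s)\mid z \xrightarrow{\text{d}} \gN(z,\sigma_\phi^2 I)$, equivalently $\phi(s')-\phi(s) - z \xrightarrow{\text{d}} \gN(0,\sigma_\phi^2 I)$, consistent with Fig.~\ref{fig:repr-stats-cond-diff}.

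The step I expect to be the main obstacle is Step 3: establishing that the marginal $g$ is (approximately) Gaussian, i.e.\ that the ``noise'' in $\phi(s')-\phi(s)$ orthogonal to $z$ is Gaussian rather than something else. The objective alone does not force this, since the quadratically approximated contrastive loss, optimized over an \emph{unconstrained} map $u = \phi(s')-\phi(s)$, is maximized pointwise by the degenerate choice $u = dz$; the nondegenerate spread is an artifact of the constraint that a \emph{single} $\phi$ is shared across all (continuously related) transitions together with the finite capacity of $\phi$. I would handle this with a maximum-entropy argument --- among rotationally symmetric laws whose second moment is pinned by Prop.~\ref{prop:repr-loss-expected-transition}, the isotropic Gaussian is the differential-entropy maximizer, and the over-parameterized, noisy optimization (cf.\ the global-optimum discussion around Fig.~\ref{fig:repr-stats-diff}) naturally settles at this maximum-entropy fixed point --- or, more modestly, adopt the Gaussian noise model as a modeling assumption and verify self-consistency with Steps~1--2.
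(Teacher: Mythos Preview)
The paper does not provide a proof of this statement. It is deliberately labeled a \emph{Claim} (as opposed to a Proposition or Lemma) and is offered purely as an empirical observation: in Appendix~\ref{appendix:metra-repr-claims} the authors write that the histograms of Fig.~\ref{fig:repr-stats-cond-diff} ``indicate'' the Gaussianity, and in Sec.~\ref{subsec:metra-learns-contrastive-representations} they say only that these observations are ``consistent with'' the contrastive interpretation and explicitly use the word ``conjecture.'' The only formal work the paper does around this claim is Lemma~\ref{lemma:isotropic-gaussian-to-vmf}, which shows that \emph{if} Claim~\ref{claim:diff_phi_minus_z} holds then Claim~\ref{claim:normalized_diff_phi} follows; no derivation of Claim~\ref{claim:diff_phi_minus_z} itself is attempted.

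Your proposal therefore goes well beyond what the paper does, and the gap you flag in Step~3 is real and, I think, not closable from the ingredients available in the paper. The objective pins only the second moment of $U=\phi(s')-\phi(s)$ (Prop.~\ref{prop:repr-loss-expected-transition}) and, via the Taylor expansion, a quadratic penalty on $\|U\|_2$; neither forces the \emph{law} of $U\mid z$ to be Gaussian rather than any other radially symmetric distribution with the same second moment. The maximum-entropy heuristic is not a proof here: nothing in the METRA or contrastive objective is maximizing the entropy of $U$, and the appeal to ``noisy over-parameterized optimization settling at the max-entropy fixed point'' is folklore, not something derivable from the paper's assumptions. There is also a circularity issue: you invoke the uniformity of $U/\|U\|_2$ (Fig.~\ref{fig:repr-stats-norm-diff}, i.e.\ Claim~\ref{claim:normalized_diff_phi}) to get rotational symmetry of $g$, but in the paper Claim~\ref{claim:normalized_diff_phi} is \emph{derived from} Claim~\ref{claim:diff_phi_minus_z} via Lemma~\ref{lemma:isotropic-gaussian-to-vmf}, not established independently. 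The honest resolution is the one the paper takes: state Gaussianity as an empirically supported conjecture, not a theorem.
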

Second, taking the marginal over all possible skills, the normalized difference in representations $(\phi(s') - \phi(s)) / \norm{\phi(s') - \phi(s)}_2$ converges to a uniform distribution on the $d$-dimensional unit hypersphere $\sS^{d - 1}$:
\begin{claim}
    The state representations $\phi$ learned by METRA also satisfy $\frac{\phi(s') - \phi(s)}{\norm{\phi(s') - \phi(s)}_2} \xrightarrow{\text{d}} \textsc{Unif}(\sS^{d - 1})$.
    \label{claim:normalized_diff_phi}
\end{claim}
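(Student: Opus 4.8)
The plan is to deduce this claim from Claim~\ref{claim:diff_phi_minus_z} by a rotational-symmetry argument. Write $\Delta\phi := \phi(s') - \phi(s)$. By Claim~\ref{claim:diff_phi_minus_z}, in the limit the conditional law of $\Delta\phi$ given $z$ is $\gN(z, \sigma_\phi^2 I)$, and since the skill is drawn from the fixed prior $z \sim \textsc{Unif}(\sS^{d-1})$, the limiting marginal law of $\Delta\phi$ is the Gaussian mixture $\nu(\cdot) = \int_{\sS^{d-1}} \gN(\cdot\,;z, \sigma_\phi^2 I)\, p(z)\, dz$ on $\R^d$. If $\sigma_\phi = 0$ the claim is immediate, since then $\Delta\phi = z$ already lies on $\sS^{d-1}$; so assume $\sigma_\phi > 0$, in which case $\nu$ has a strictly positive Lebesgue density, and in particular $\nu(\{0\}) = 0$, so $\Delta\phi / \norm{\Delta\phi}_2$ is well defined $\nu$-almost surely.

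The conceptual heart is that $\nu$ is invariant under the orthogonal group $O(d)$. Indeed, for $R \in O(d)$, pushing $\gN(z, \sigma_\phi^2 I)$ forward by $x \mapsto Rx$ gives $\gN(Rz, \sigma_\phi^2 R R^{\top}) = \gN(Rz, \sigma_\phi^2 I)$, since $R$ is orthogonal; changing variables $z' = Rz$ and using that $\textsc{Unif}(\sS^{d-1})$ is itself $O(d)$-invariant shows the pushforward of $\nu$ by $R$ equals $\nu$. Hence $\Delta\phi$ is a rotationally symmetric random vector, and a standard fact finishes the argument: if $X \in \R^d$ is rotationally symmetric with $\Pr(X = 0) = 0$, then $X / \norm{X}_2 \sim \textsc{Unif}(\sS^{d-1})$, because the law of $X/\norm{X}_2$ is an $O(d)$-invariant Borel probability measure on $\sS^{d-1}$ and the uniform distribution is the unique such measure (the action of $O(d)$ on the sphere is transitive). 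Applying this with $X$ distributed as $\nu$ gives that the normalization of the limiting variable is uniform on $\sS^{d-1}$.

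To get the statement as convergence in distribution rather than merely a property of the limit law, I would combine the above with a continuous-mapping argument: Claim~\ref{claim:diff_phi_minus_z} yields weak convergence of the law of $\Delta\phi$ to $\nu$ (passing from the conditional weak convergence to the marginal one by testing against bounded continuous functions and integrating over the fixed prior $p(z)$, with dominated convergence), the map $x \mapsto x/\norm{x}_2$ is continuous on $\R^d \setminus \{0\}$, and $\nu$ charges the discontinuity set $\{0\}$ with probability zero; the continuous mapping theorem then gives $\Delta\phi / \norm{\Delta\phi}_2 \xrightarrow{\text{d}} \textsc{Unif}(\sS^{d-1})$. (An alternative route, using Cor.~\ref{coro:connect-metra-and-contrastive} and the known uniformity-on-the-hypersphere property of InfoNCE minimizers~\citep{wang2020understanding}, would reach the same conclusion but is less self-contained.)

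The main obstacle is not the core symmetry argument, which is short, but the two ``convergence'' bookkeeping steps: (i) upgrading the conditional weak convergence asserted in Claim~\ref{claim:diff_phi_minus_z} to marginal weak convergence of $\Delta\phi$, which needs a mild uniform-integrability/domination condition over $z$; and (ii) applying the continuous mapping theorem across the singularity of $x \mapsto x/\norm{x}_2$ at the origin, which is legitimate precisely because the limit law $\nu$ is absolutely continuous and hence assigns mass zero to $\{0\}$. A secondary point to handle explicitly is the degenerate case $\sigma_\phi = 0$, where the claim holds trivially but the normalization/continuity reasoning above does not literally apply.
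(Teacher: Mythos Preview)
Your argument is correct and takes a genuinely different route from the paper. The paper first proves a lemma (Lemma~\ref{lemma:isotropic-gaussian-to-vmf}) that restricting an isotropic Gaussian $\gN(\mu, \sigma^2 I)$ to the sphere $\{\|x\|_2 = \|\mu\|_2\}$ yields a von Mises--Fisher distribution $\textsc{vMF}(\mu/\|\mu\|_2, \|\mu\|_2/\sigma^2)$; it then invokes this lemma (explicitly writing ``we conjecture'') to posit that $\frac{\Delta\phi}{\|\Delta\phi\|_2}\mid z$ follows $\textsc{vMF}(z, 1/\sigma_\phi^2)$, and finally integrates this conditional vMF density against the uniform prior $p(z)$ to obtain the constant marginal density $C_d(0)$ on the sphere. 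You instead pass to the marginal law $\nu$ of $\Delta\phi$ in $\R^d$, observe its $O(d)$-invariance, and conclude uniformity of $\Delta\phi/\|\Delta\phi\|_2$ from the uniqueness of the rotation-invariant probability measure on $\sS^{d-1}$. Your route is both shorter and more rigorous: the paper's lemma concerns \emph{conditioning} on a sphere, whereas the map $x \mapsto x/\|x\|_2$ is a \emph{projection}, and for a Gaussian with nonzero mean these give different distributions (vMF versus projected normal), a gap the paper does not close and instead flags as a conjecture. What the paper's route buys is a named intermediate object for the conditional law on the sphere, which your symmetry argument neither needs nor produces; but for the marginal claim as stated, your approach is the cleaner one, and your explicit continuous-mapping bookkeeping for the convergence-in-distribution statement is more careful than anything the paper provides.
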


We next propose a Lemma that relates an isotropic Gaussian distribution to a von Mises–Fisher distribution~\citep{wiki:Von_Mises–Fisher_distribution} and then draw the connection between Claim~\ref{claim:diff_phi_minus_z} and Claim~\ref{claim:normalized_diff_phi}.
\begin{lemma}
    Given an $n$-dimensional isotropic Gaussian distribution $\gN(\mu, \sigma^2 I)$ with $\norm{\mu}_2 = r_{\mu}$, a von Mises–Fisher distribution $\textsc{vMF}\left( \mu / r_\mu, r_\mu / \sigma^2 \right)$ can be obtained by restricting the support to be a hypersphere with radius $r_\mu$, i.e., $\{ x: \norm{x}_2 = r_\mu \}$.
\end{lemma}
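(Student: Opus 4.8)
The plan is to compute the conditional law of $X \sim \gN(\mu, \sigma^2 I)$ given $\norm{X}_2 = r_\mu$ and to recognize the resulting density as that of a von Mises--Fisher distribution. First I would make precise what ``restricting the support to the hypersphere of radius $r_\mu$'' means: writing the Lebesgue measure on $\R^n$ in polar coordinates, $x = \rho\, u$ with $\rho > 0$ and $u \in \sS^{n-1}$, one has $dx = \rho^{n-1}\, d\rho\, d\sigma(u)$, where $\sigma$ is the surface measure on the unit sphere. Disintegrating the Gaussian law against the radius map $x \mapsto \norm{x}_2$ then shows that the conditional distribution on $\{x : \norm{x}_2 = r_\mu\}$ has a density, with respect to the uniform measure on that sphere, proportional to the Gaussian density itself restricted to that sphere: the radial Jacobian $\rho^{n-1}$ is evaluated at the fixed value $\rho = r_\mu$ and the Gaussian normalizing constant is global, so both are absorbed into a new normalizer.

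Second, I would expand the exponent using $\norm{x - \mu}_2^2 = \norm{x}_2^2 - 2\langle x, \mu\rangle + \norm{\mu}_2^2$. On the sphere of radius $r_\mu$ the term $\norm{x}_2^2 = r_\mu^2$ is constant, and by hypothesis $\norm{\mu}_2^2 = r_\mu^2$ is also constant; hence, after absorbing all $x$-independent factors, the restricted density is proportional to $\exp\!\big( \langle x, \mu\rangle / \sigma^2 \big) = \exp\!\big( \tfrac{r_\mu}{\sigma^2}\, \langle x,\ \mu / r_\mu \rangle \big)$. This is exactly the kernel of a von Mises--Fisher density with unit mean direction $\mu / r_\mu$ and concentration $r_\mu / \sigma^2$ (the same vMF parametrization used in Appendix~\ref{appendix:proof-of-log-expected-exp-approximation}, now on a sphere of radius $r_\mu$ rather than the unit sphere). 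Since the integral of this kernel over the sphere is finite and positive, normalizing yields precisely $\textsc{vMF}(\mu / r_\mu, r_\mu / \sigma^2)$, as claimed.

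I expect the only real obstacle to be the first, measure-theoretic step: because $\{\norm{X}_2 = r_\mu\}$ has probability zero, ``conditioning'' cannot be read off the elementary definition and must be justified via a disintegration (equivalently, the coarea formula, or a change to spherical coordinates), together with the small check that neither the radial Jacobian nor the ambient normalizing constant varies over the sphere. Once that is in place, everything else is a one-line algebraic identity: the hypothesis $\norm{\mu}_2 = r_\mu$ is exactly what lets the $x$-independent pieces be discarded and what makes $\mu / r_\mu$ a unit vector, so the vMF parameters emerge in the stated clean form. (More generally, on a sphere of any radius the same computation gives a vMF with mean direction $\mu / \norm{\mu}_2$; the hypothesis $\norm{\mu}_2 = r_\mu$ merely normalizes the mean direction and lets the concentration be written as $r_\mu / \sigma^2$.)
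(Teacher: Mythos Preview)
Your proposal is correct and follows essentially the same route as the paper: expand $\norm{x-\mu}_2^2$, note that $\norm{x}_2^2$ and $\norm{\mu}_2^2$ are constant on the sphere of radius $r_\mu$, and identify the remaining exponential as the vMF kernel with mean direction $\mu/r_\mu$ and concentration $r_\mu/\sigma^2$. The only difference is that you take more care with the measure-theoretic meaning of ``restricting to the sphere'' via disintegration, whereas the paper simply plugs $\norm{x}_2 = r_\mu$ into the Gaussian density and renormalizes without further comment.
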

\begin{proof}
    The probability density function of $\gN(\mu, \sigma^2 I)$ be written as
    \begin{align*}
        p(x) &= \frac{1}{(2 \pi \sigma)^{\frac{n}{2}}} \exp \left( -\frac{1}{2 \sigma^2} (x - \mu)^{\top} (x - \mu) \right) \\
        &= \frac{1}{(2 \pi \sigma)^{\frac{n}{2}}} \exp \left( -\frac{1}{2 \sigma^2} \left( \norm{x}_2^2 - 2 x^{\top} \mu + \norm{\mu}^2_2 \right) \right)
    \end{align*}
    When conditioning on $\norm{x}_2 = r_\mu$, we have 
    \begin{align*}
        \frac{1}{(2 \pi \sigma)^{\frac{n}{2}}} \exp \left( -\frac{1}{2 \sigma^2} \left( \norm{x}_2^2 - 2 x^{\top} \mu + \norm{\mu}^2_2 \right) \right) &= \frac{1}{(2 \pi \sigma)^{\frac{n}{2}}} \exp \left( -\frac{1}{2 \sigma^2} \left( 2r_\mu^2 - 2 x^{\top} \mu \right) \right) \\
        &= \frac{1}{(2 \pi \sigma)^{\frac{n}{2}}} \exp \left( \frac{r_\mu}{\sigma^2} \cdot \frac{\mu^{\top} x}{r_\mu} - \frac{r_\mu^2}{\sigma^2} \right) \\
        &\propto \exp \left( \frac{r_\mu}{\sigma^2} \cdot \frac{\mu^{\top} x}{r_\mu} \right).
    \end{align*}
    After recomputing the normalizing constant, we recover the probability density function of the von Mises-Fisher distribution $\textsc{vMF}\left( \mu / r_\mu, r_\mu / \sigma^2 \right)$.
    \label{lemma:isotropic-gaussian-to-vmf}
\end{proof}
Since the didactic experiments in Sec.~\ref{subsec:metra-learns-contrastive-representations} have shown that $\phi(s') - \phi(s) \mid z$ converges to a Gaussian distribution $\gN(z, \sigma_\phi^2 I)$ (Claim~\ref{claim:diff_phi_minus_z}) and note that $\norm{z}_2 = 1$, by applying Lemma~\ref{lemma:isotropic-gaussian-to-vmf}, we conjecture that restricting $\phi(s') - \phi(s)$ within $\{ \norm{\phi(s') - \phi(s)}_2 = 1 \}$ produces a von Mises-Fisher distribution, i.e., $ \left. \frac{ \phi(s') - \phi(s) }{ \norm{\phi(s') - \phi(s)}_2 } \right\rvert z \xrightarrow{d} \textsc{vMF}(z, 1 / \sigma_\phi^2)$. Furthermore, we can derive the marginal density of $\frac{ \phi(s') - \phi(s) }{ \norm{\phi(s') - \phi(s)}_2 }$,
\begin{align*}
    p \left( \frac{ \phi(s') - \phi(s) }{ \norm{\phi(s') - \phi(s)}_2 } \right) &= \int p(z) p \left( \left. \frac{ \phi(s') - \phi(s) }{ \norm{\phi(s') - \phi(s)}_2 } \right\rvert z \right) dz \\
    &\stackrel{\emph{(a)}}{=} C_d(0) \int C_d \left( \frac{1}{\sigma_\phi^2} \right) \exp \left( \frac{1}{\sigma_\phi^2} \cdot \frac{ (\phi(s') - \phi(s))^{\top} z }{ \norm{\phi(s') - \phi(s)}_2 }  \right) dz \\
    &= C_d(0),
\end{align*}
where in $\emph{(a)}$ we use the symmetric property of the density function of von Mises-Fisher distributions. Crucially, the marginal density indicates that $\frac{ \phi(s') - \phi(s) }{ \norm{\phi(s') - \phi(s)}_2 }$ follows a uniform distribution $\textsc{Unif}(\sS^{d - 1})$, which is exactly the observation in our experiments (Claim~\ref{claim:normalized_diff_phi}).

{\iclr{\color{cyan}}
\subsection{Insights for the inner production critic parameterization}
\label{appendix:inner-product-parameterization-insights}

Our ablation studies in Sec.~\ref{subsec:ablation-studies} shows that the inner product parameterization of the critic $f(s, s', z) = (\phi(s') - \phi(s))^{\top}$ is important for CSF (Fig.~\ref{fig:ablation-studies} \figright). There are two explanations for these observations. 
First, the inner product parameterization tries to push together the difference of the representation of transitions $\phi(s') - \phi(s)$ and the corresponding skill $z$, instead of focusing on representations of individual states $\phi(s)$ or $\phi(s')$. This intuition is inline with the observation from prior work that mutual information $I(S; Z)$ is invariant to bijective mappings on $S$ (See Fig. 2 of~\citep{park2024metra}), e.g., translation and scaling, indicating that maximizing the mutual information between change of states and skills ($I(S, S'; Z)$) encourages better state space coverage. 
Second, the inner product parameterization allows us to analytically compute the second-order Taylor approximation in Proposition 2, drawing the connection between the METRA representation objective and contrastive learning.
}

\subsection{Intuition for zero-shot goal inference}
\label{appendix:intuition-zero-shot-goal-inference}

In the zero-shot goal-reaching setting, we want to figure out the corresponding latent skill $z$ when given a goal state or image $g$; a problem which can be cast as inferring the $z \in \gZ$ that maximizes the posterior $p^{\pi}(z \mid s, g)$. Since the ground truth posterior $p^{\pi}(z \mid s, g)$ is unknown, a typical workaround is first estimating a variational approximation of $p^{\pi}(z \mid s, g)$ and then maximizing the variational posterior $q(z \mid s, g)$. We provide an intuition for zero-shot goal inference by specifying the variational posterior as 
\begin{align*}
    q(z \mid s, g) \triangleq \frac{p(z) e^{(\phi(g) - \phi(s))^{\top} z}}{ \E_{p(z)} \left[ e^{(\phi(g) - \phi(s))^{\top} z'}  \right]}
\end{align*}
and solving the optimization problem in the latent skill space $\gZ$
\begin{align*}
    \argmax_{z \in \gZ} \; \log q(z \mid s, g),
\end{align*}
or equivalently,
\begin{align*}
    \argmax_{z} \; (\phi(g) - \phi(s))^{\top} z \quad \text{s.t.} \; \norm{z}_2^2 = 1.
\end{align*}
Taking derivative of the Lagrangian and setting it to zero, the analytical solution is exactly $z^{\star} = \frac{\phi(g) - \phi(s)}{\norm{\phi(g) - \phi(s)}_2}$, suggesting that the heuristic used by prior methods and our algorithm can be understood as a maximum a posteriori (MAP) estimation.

\section{Experimental Details}
\label{appendix:experimental-details}

All experiments were run on a combination of GPUs consisting of NVIDIA GeForce RTX 2080 Ti, NVIDIA RTX A5000, NVIDIA RTX A6000, and NVIDIA A100. All experiments took at most 1 day to run to completion.

{\iclr{\color{cyan}}

\subsection{Simplicity of CSF compared to METRA}
\label{subsec:csf-simplicity-compared-to-metra}
The main differences between our method and METRA are (1) directly using the contrastive lower bound on the mutual information $I^{\beta}(S, S'; Z)$ as the representation objective, and (2) learning a policy by estimating the successor features, which is a vector-valued critic, instead of a scalar Q in an actor-critic style. Our method can be implemented based on METRA by making three changes (see algorithm summary in Sec.~\ref{subsec:learning-a-policy-with-successor-features}). Since the policy learning step only requires changing the output dimension of neural networks, CSF reduces the number of hyperparameters in the representation learning step compared to METRA. Next, we provide a hyperparameter comparison between CSF and METRA.

On the one hand, since our algorithm prevents the dual gradient descent optimization in the METRA representation objective, we do not have the slack variable $\epsilon$, do not have to specify the initial value of the dual variable $\lambda$, and remove the dual variable optimizer with its learning rate, resulting in three fewer hyperparameters. On the other hand, we introduce one coefficient $\xi$ to scale the negative term $\text{LB}^{\beta}_{-}(\phi)$ in the contrastive lower bound, which has an effect similar to the tradeoff coefficient used in information bottleneck optimization ($\beta$ in~\citep{alemi2017deep}). Taken together, CSF uses three less hyperparameters than METRA and introduces one hyperparameter to balance the positive term ($\text{LB}^{\beta}_{+}(\phi)$) and the negative term ($\text{LB}^{\beta}_{-}(\phi)$) in the representation objective.
}

\subsection{Experimental Setup}
\label{appendix:experimental-setup}

\begin{table}[t]
\caption{\textbf{\newMethodName{} hyperparameters for unsupervised pretraining.}}
     \label{tab:unsupervised-pretraining-hyperparameters}
        \centering
        \begin{tabular}{ ll } 
         \toprule
         \textbf{Hyperparameter} & \textbf{Value} \\ 
         \midrule
         Learning rate & 0.0001 \\
         Horizon & 200, except for 50 in \texttt{Kitchen} \\
         Parallel workers & 8, except for 10 in \texttt{Robobin} \\
         State normalizer & used in state-based environments only \\
         Replay buffer batch size & 256 \\
         Gradient updates per trajectory collection round & 50 (\texttt{Ant}, \texttt{Cheetah}), 200 (\texttt{Humanoid}, \\
         & \texttt{Quadruped}), 100 (\texttt{Kitchen}, \texttt{Robobin})  \\
         Frame stack & 3 for image-based, n/a for state-based \\
         Trajectories per data collection round & 8, except for 10 in \texttt{Robobin} \\
         Automatic entropy tuning & yes \\
         $\xi$ (scales second term in Eq.~\ref{eq:phi-update}) & 5 \\
         Number of negative $z$s to compute $\text{LB}_-(\phi)$ & 256 (in-batch negatives) \\
         EMA $\tau$ (target network) & $5e^{-3}$ \\
         $\phi$, $\pi$, $\psi$ network hidden dimension & 1024 \\
         $\phi$, $\pi$, $\psi$ network number of layers & 1 input, 1 hidden, 1 output \\
         $\phi$, $\pi$, $\psi$ network nonlinearity & relu ($\phi$), tanh ($\pi$), relu ($\psi$) \\
         \bottomrule
        \end{tabular}
\end{table}

\paragraph{Environments.}  We choose to evaluate on the following six tasks: \texttt{Ant} and \texttt{HalfCheetah} from Gym~\citep{mujoco,brockman2016openai}, \texttt{Quadruped} and \texttt{Humanoid} from DeepMind Control (DMC) Suite~\citep{tassa2018deepmind}, and \texttt{Kitchen} and \texttt{Robobin} from LEXA~\citep{mendonca2021discovering}. We choose these six tasks to be consistent with the original METRA work~\citep{park2024metra}. In addition, we added \texttt{Robobin} as another manipulation task since the original five tasks are all navigation tasks except for \texttt{Kitchen}. The observations are state-based in \texttt{Ant} and \texttt{HalfCheetah} and $64 \times 64$ RGB images of the scene in all other tasks.

\paragraph{Baselines.} We consider five baselines. \textbf{(1) METRA~\citep{park2024metra}} is the state-of-the-art approach which provides the motivation for deriving CSF. \textbf{(2) CIC~\citep{laskin2022cic}} uses a rank-based contrastive loss (InfoNCE) to learn representations of transitions and then maximizes a state entropy estimate constructed using these representations. \textbf{(3) DIAYN~\citep{eysenbach2018diversity}} represents a broad category of methods that first learn a parametric discriminator $q(z \mid s, s')$ (or $q(z \mid s)$) to predict latent skills from transitions and then construct the reverse variational lower bound on mutual information~\citep{campos2020explore} as an intrinsic reward. \textbf{(4) DADS~\citep{Sharma2020Dynamics-Aware}} builds upon the forward variational lower bound on mutual information~\citep{campos2020explore} which requires maximizing the state entropy $h(S)$ to encourage state coverage while minimizing the conditional state entropy $h(S \mid Z)$ to distinguish different skills. There is a family of methods studying variational approximations of $h(S)$ and $h(S \mid Z)$~\citep{campos2020explore, liu2021aps, laskin2022cic, lee2019efficient, Sharma2020Dynamics-Aware} of which DADS is a representative. \textbf{(5) VISR~\citep{Hansen2020Fast}} is similar to DIAYN in that it also trains the representations $\phi$ by learning a discriminator to maximize the likelihood of a skill given a state, though the discriminator is parametrized as a vMF distribution. In addition, VISR learns successor features that allow it to perform GPI as well as fast task adaptation after unsupervised pretraining. Note that our version of VISR does not include GPI since we evaluate on continuous control environments.

\subsection{Exploration performance}\label{appendix:exploration-performance}
Please see Fig.~\ref{fig:state-space-coverage} for the full set of exploration results. We can see that \newMethodName{} continues to perform on par with METRA, while sometimes outperforming METRA (\texttt{Robobin}) and sometimes underperforming METRA (\texttt{Quadruped}). 

For \newMethodName{}, all tasks were trained with continuous $z$ sampled from a uniform vMF distribution and $\lambda = 5$. METRA also uses a continuous $z$ sampled from a uniform vMF distribution for all environments except for \texttt{HalfCheetah} and \texttt{Kitchen}, where we used a one-hot discrete $z$, consistent with the original work~\citep{park2024metra}. CIC uses a continuous $z$ sampled from a standard Gaussian for all environments. DIAYN uses a one-hot discrete $z$ for all environments. DADS uses a continuous $z$ sampled from a uniform distribution on $[-1, 1]$ for all environments. Finally, VISR uses a continuous $z$ sampled from a uniform vMF distribution for all environments. Please refer to Table~\ref{table:skill-dimensions-and-environment} for a full overview of skill dimensions per method and environment. A table with all relevant hyperparameters for the unsupervised training phase can be found in Table~\ref{tab:unsupervised-pretraining-hyperparameters}.

\begin{table*}[tb]
\caption{\textbf{Skill dimensions per method and environment.} We list the skill dimension for all methods and environments reported in the paper.} 
\label{table:skill-dimensions-and-environment}
\begin{center}
\begin{tabular}{ l|c|c|c|c|c|c } 
 \toprule
  & \textbf{Ant} & \textbf{HalfCheetah} & \textbf{Quadruped} & \textbf{Humanoid} & 
  \textbf{Kitchen} &
  \textbf{Robobin} \\
 \midrule 
 \textbf{\newMethodName{}} & 2 & 2 & 4 & 8 & 4 & 9 \\
 \midrule
 \textbf{METRA} & 2 & 16 & 4 & 2 & 24 & 9 \\
 \midrule
 \textbf{DIAYN} & 50 & 50 & 50 & 50 & 50 & 50 \\
 \midrule
 \textbf{DADS} & 3 & 3 & - & - & - & - \\
 \midrule
 \textbf{CIC} & 64 & 64 & 64 & 64 & 64 & 64 \\
 \midrule
 \textbf{VISR} & 5 & 5 & 5 & 5 & 5 & 5 \\
 \bottomrule
\end{tabular}
\end{center}
\end{table*}

\begin{figure*}[t]
    \centering
    \includegraphics[width=\linewidth]{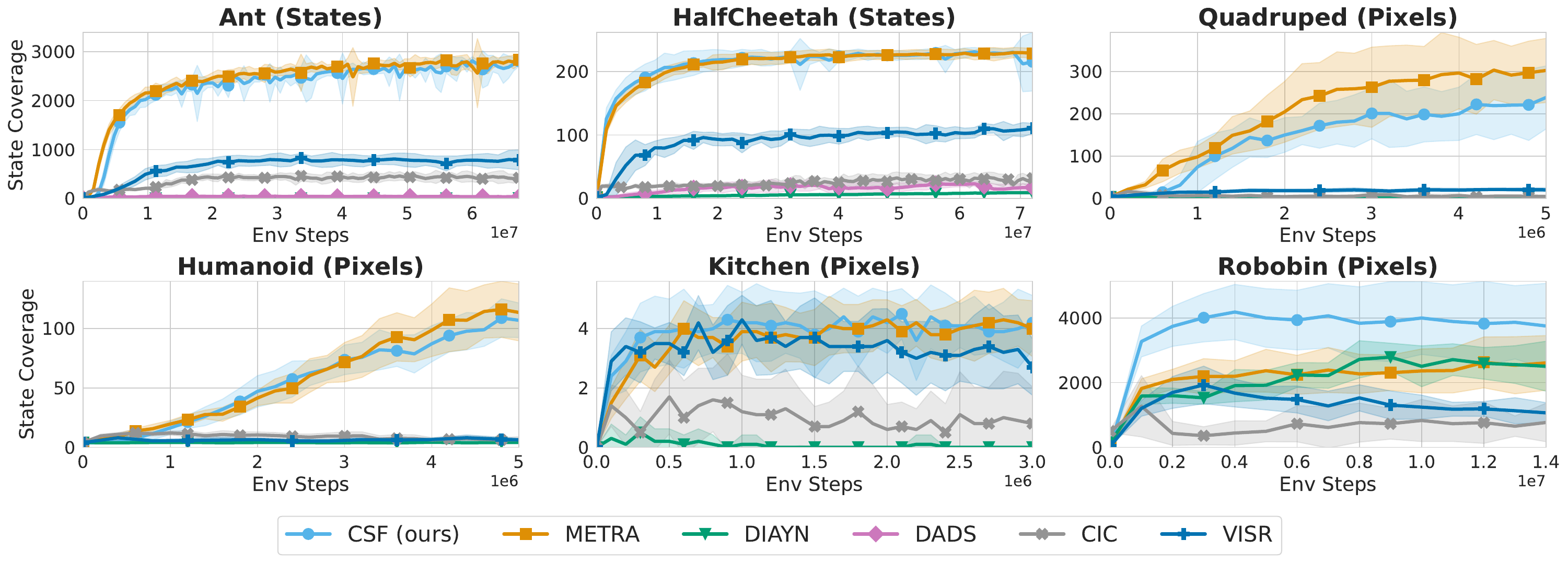}
    \caption{\textbf{State space coverage.} 
    We plot the unique number of coordinates visited by the agent, except for the Kitchen where we plot the task coverage. We find 
    \newMethodName{} matches the prior state-of-the-art MISL algorithms on $4 / 6$ tasks, and strongly outperforms METRA in \texttt{Robobin}. Shaded areas indicate one standard deviation.
    }
    \label{fig:state-space-coverage}
\end{figure*}

\subsection{Zero-shot goal reaching}
\label{appendix:zero-shot-goal-reaching}

Please see Fig.~\ref{fig:goal-reaching} for the full set of goal-reaching results. We find \newMethodName{} to generally perform closely to METRA, though slightly underperforming in \texttt{Quadruped}, \texttt{Humanoid}, and \texttt{Kitchen}. In \texttt{Ant}, however, \newMethodName{} outperforms METRA.

\paragraph{Goal sampling.} We closely follow the setup in~\citet{park2024metra}. For all baselines, 50 goals are randomly sampled from $[-50, 50]$ in \texttt{Ant}, $[100, 100]$ in \texttt{HalfCheetah}, $[-15, 15]$ in \texttt{Quadruped}, and $[-10, 10]$ in \texttt{Humanoid}. In \texttt{Kitchen}, we sample 50 times at random from the following built-in tasks: BottomBurner, LightSwitch, SlideCabinet, HingeCabinet, Microwave, and Kettle. In \texttt{Robobin}, we sample 50 times at random from the following built-in tasks: ReachLeft, ReachRight, PushFront, and PushBack.

\paragraph{Evaluation} Unlike prior methods~\citep{park2022lipschitz, park2024metra, Sharma2020Dynamics-Aware, mendonca2021discovering}, we choose the \emph{staying time fraction} instead of the \emph{success rate} as our evaluation metric. The staying time indicates the number of time steps that the agent stays at the goal divided by the horizon length, while the success rate simply indicates whether the agent reaches the goal at \emph{any} time step. Importantly, a high success rate does not necessarily imply a high staying time fraction (e.g., the agent might overshoot the goal after success).

\paragraph{Skill inference.} Prior work~\citep{park2022lipschitz, park2024metra, park2023controllability} has proposed a simple inference method by setting the skill to the difference in representations $z = \frac{\phi(g) - \phi(s)}{\norm{\phi(g) - \phi(s)}_2}$, where $g$ indicates the goal. We choose to use the same approach for CSF and METRA and provide some theoretical intuition for this strategy in Appendix~\ref{appendix:intuition-zero-shot-goal-inference}. For DIAYN, we follow prior work~\citep{park2024metra} and set $z = \text{one\_hot}[\argmax_i q(z | g)_i]$.

\begin{figure*}[t]
    \centering
    \includegraphics[width=\linewidth]{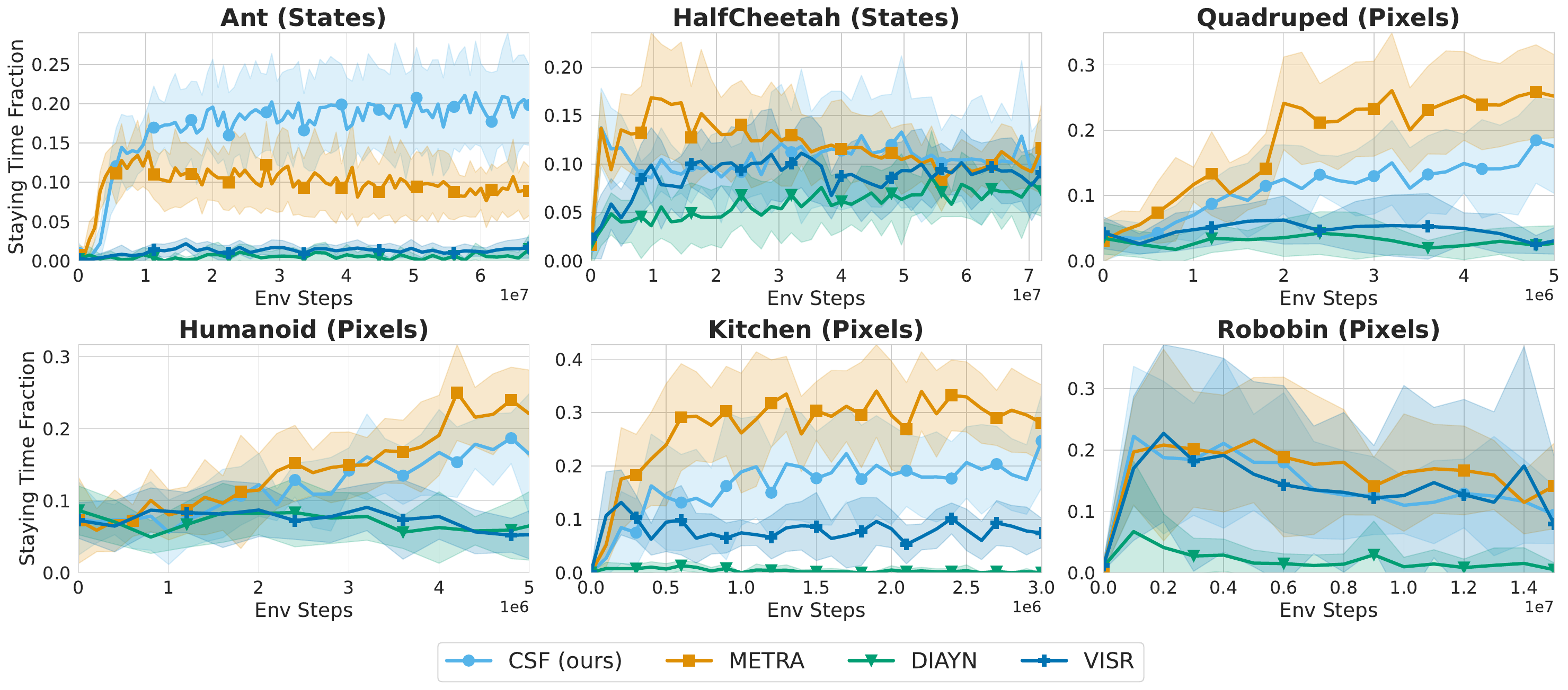}
    \caption{\textbf{Goal reaching.} We compare \newMethodName{} with baselines on goal-reaching tasks. We find that \newMethodName{} achieves strong performance on \texttt{Ant} and mostly outperforms DIAYN and VISR. However, \newMethodName{} lags a bit behind METRA on \texttt{Quadruped}, \texttt{Kitchen}, and \texttt{Humanoid}. All means and standard deviations are computed across ten random seeds. Shaded areas indicate one standard deviation.}
    \label{fig:goal-reaching}
\end{figure*}

\subsection{Hierarchical control}
\label{appendix:hierarchical-control}

\begin{table}[t]
\caption{\textbf{\newMethodName{} hyperparameters for hierarchical control.}}
     \label{tab:hierarchical-control-hyperparameters}
        \centering
        \begin{tabular}{ ll } 
         \toprule
         \textbf{Hyperparameter} & \textbf{Value} \\ 
         \midrule
         Learning rate & 0.0001 \\
         Option timesteps length & 25 \\
         Total horizon length & 200 \\
         Parallel workers & 8 \\
         Trajectories per data collection round & 8, except for \texttt{Cheetah} where we use 64 \\
         Algorithm & SAC, except for \texttt{Cheetah} where we use PPO \\
         State normalizer & used in state-based environments only \\
         Replay buffer batch size & 256 \\
         Gradient updates per trajectory collection round & 50, except for \texttt{Cheetah} where we use 10 \\
         Frame stack & 3 for image-based, n/a for state-based \\
         $\pi$ (parent, child) networks hidden dimension & 1024 \\
         $\pi$ (parent, child) networks number of layers & 1 input, 1 hidden, 1 output \\
         $\pi$ (parent, child) networks nonlinearity & tanh \\
         Child policy frozen? & yes \\
         \bottomrule
        \end{tabular}
\end{table}

Please see Fig.~\ref{fig:hierarchical-control} for the full set of hierarchical control results. We find \newMethodName{} to perform closely to METRA in most environments, though it outperforms METRA on \texttt{AntMultiGoal} and underperforms METRA on \texttt{QuadrupedGoal}. \newMethodName{} outperforms all other baselines on all environments.

We use SAC~\citep{haarnoja2018soft} for \texttt{AntMultiGoal}, \texttt{HumanoidGoal}, and \texttt{QuadrupedGoal}. We use PPO~\citep{schulman2017proximal} for \texttt{CheetahGoal} and \texttt{CheetahHurdle}. For all state-based environments, we initialize (and freeze) the child policy with a checkpoint trained with 64M environment steps. For image-based environments, we use checkpoints trained with 4.8M environments. A table with all relevant hyperparameters for training the hierarchical control policy can be found in Table~\ref{tab:hierarchical-control-hyperparameters}.

\begin{figure*}[t]
    \centering
    \includegraphics[width=\linewidth]{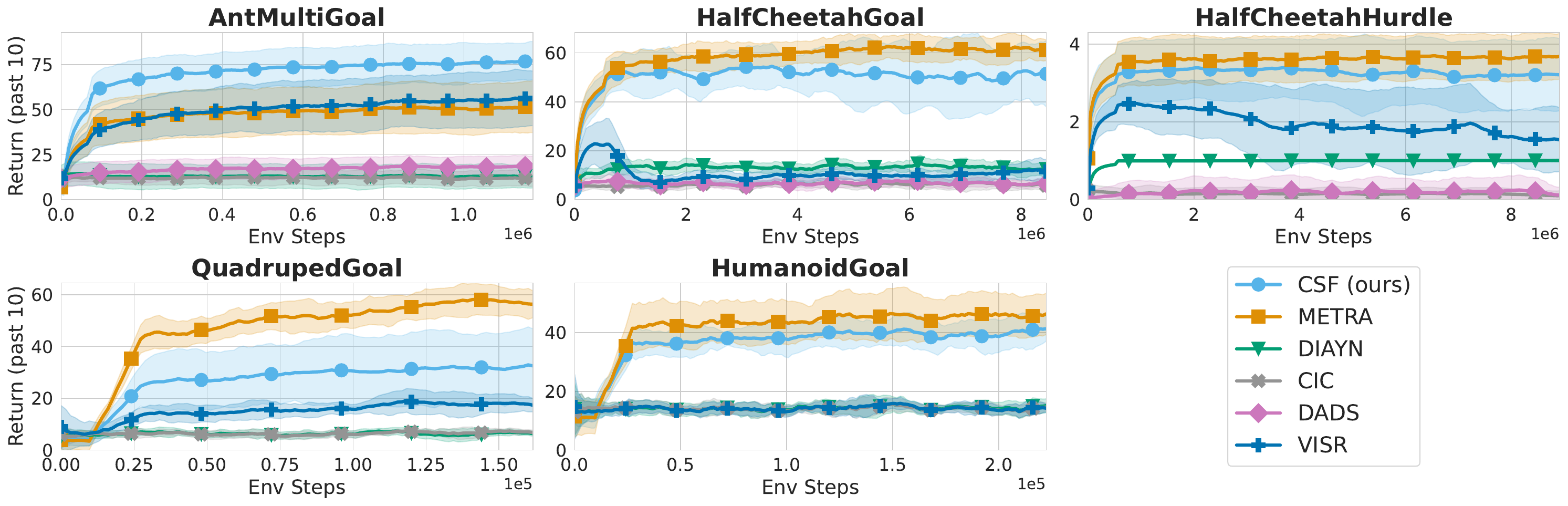}
    \caption{\textbf{Hierarchical control.} We compare \newMethodName{} with baselines on hierarchical control tasks using returns averaged over the 10 past evaluations. We find~\newMethodName{} to perform mostly competitively compared to METRA, outperforming METRA in \texttt{AntMultiGoal}, but underperforming in \texttt{QuadrupedGoal} (and to a small extent in \texttt{HalfCheetahGoal} and \texttt{HumanoidGoal}). All means and standard deviations are computed across ten random seeds. Shaded areas indicate one standard deviation.}    \label{fig:hierarchical-control}
\end{figure*}

\section{Additional Experiments}
\label{appendix:additional-exps}

\subsection{Quadratic approximation of $\textmd{LB}^{\beta}_2(\phi)$}
\label{appendix:quadratic-approx-of-lb2}

We conduct experiments to study the accuracy of the quadratic approximation in Prop.~\ref{prop:log-expected-exp-approximation} in practice. To answer this question, we reuse the METRA algorithm trained on the didactic \texttt{Ant} environment and compare $\log \E_{p(z)}[e^{(\phi(s') - \phi(s))^{\top} z}]$ against $\norm{\phi(s') - \phi(s)}_2^2$. We can compute $\log \E_{p(z)}[e^{(\phi(s') - \phi(s))^{\top} z}]$ analytically because $d = 2$ in our experiments. Results in Fig.~\ref{fig:log-sum-exp-approximation} shows a clear linear relationship between $\log \E_{p(z)}[e^{(\phi(s') - \phi(s))^{\top} z}]$ and $\norm{\phi(s') - \phi(s)}_2^2$, suggesting that the slope of the least squares linear regression is near the theoretical prediction, i.e., $\lambda_0(d) = \frac{1}{2d} = 0.25 \approx 0.2309$. We conjecture that this linear relationship still exists for higher-dimensional $d$ and, therefore, the second-order Taylor approximation proposed by Prop.~\ref{prop:log-expected-exp-approximation} is practical.

\begin{figure}[t]
    \centering
    \begin{minipage}{0.49\textwidth}
        \centering
        \includegraphics[width=\linewidth]{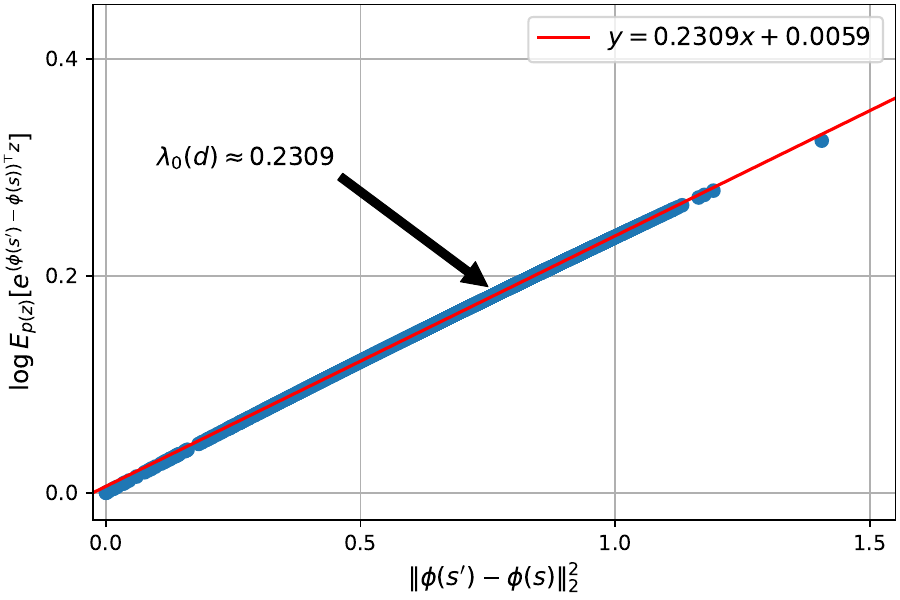}
        \caption{$\lambda_0(d) \norm{\phi(s') - \phi(s)}_2^2$ is a second order Taylor approximation of $\log \E_{p(z)}[e^{(\phi(s') - \phi(s))^{\top} z}]$, where $\lambda_0(d) = \frac{1}{2d}$.}
        \label{fig:log-sum-exp-approximation}
    \end{minipage}%
    \hfill
    \begin{minipage}{0.49\textwidth}
        \vspace{-0.15em}
        \centering
        \includegraphics[width=\linewidth]{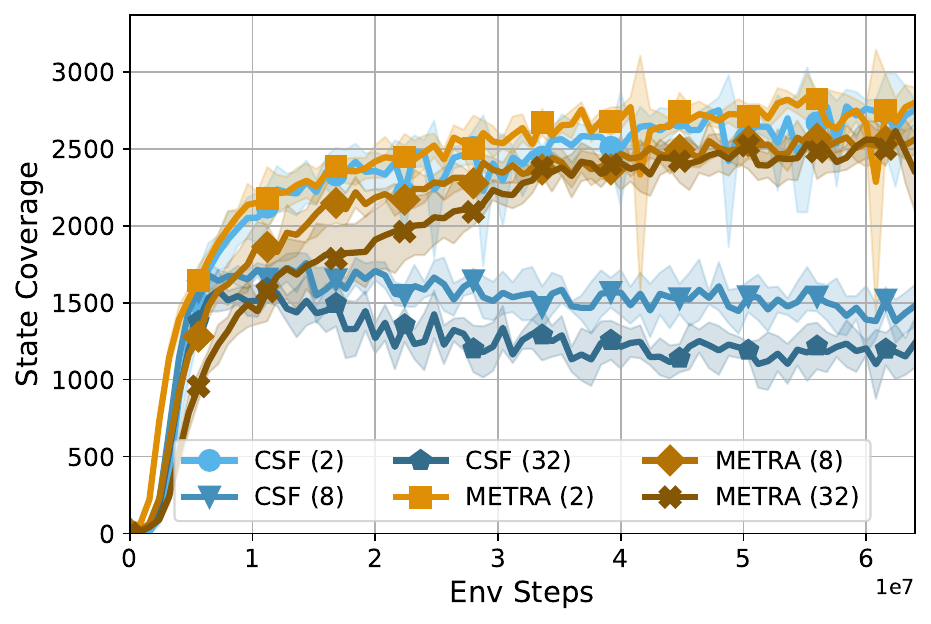}
        \caption{\textbf{Ablation of the skill dimension.} \newMethodName{} and METRA (to a lesser extent) are sensitive to the dimension of skill (indicated in parentheses).}
        \label{fig:importance-of-skill-dimension}
    \end{minipage}
\end{figure}

\subsection{METRA and \newMethodName{} are Sensitive to the Skill Dimension}

METRA leverages different skill dimensions for different environments. This caused us to investigate the impact of the skill dimension on exploration performance. In Fig.~\ref{fig:importance-of-skill-dimension}, we find that both METRA (to a lesser extent) and~\newMethodName{} are quite sensitive to the skill dimension. We conclude that the skill dimension is a key parameter to tune for practitioners when training their MISL algorithm.

{\iclr{\color{cyan}}

\subsection{Sensitivity of CSF to the scaling coefficient $\xi$}
\label{appendix:xi-ablation}

\begin{wrapfigure}[14]{R}{0.5\linewidth}
    \vspace{-2em}
    \centering
    \includegraphics[width=\linewidth]{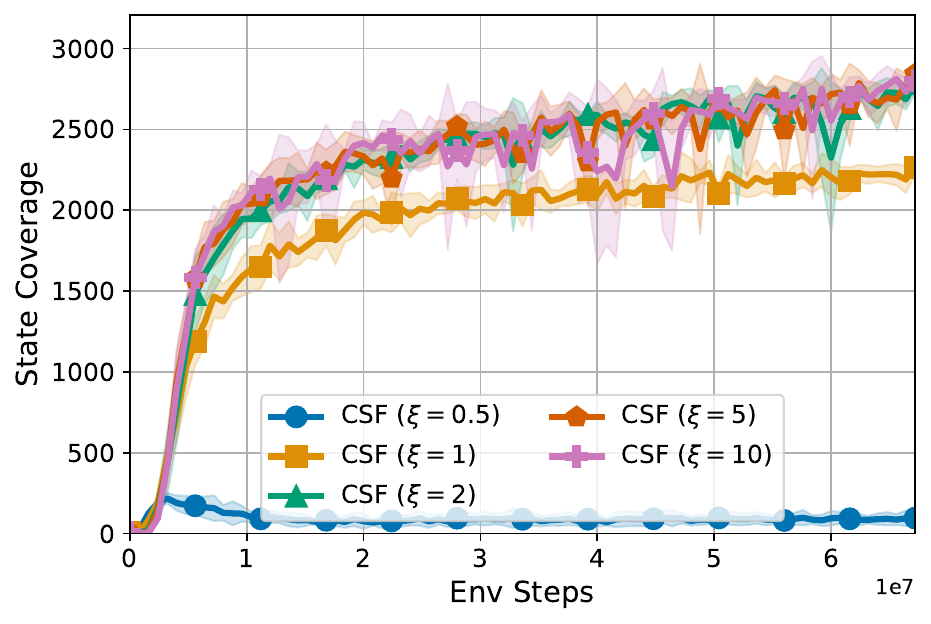}
    \vspace{-2em}
    \caption{\textbf{The effect of $\xi$ in CSF.} Increasing $\xi$ to slightly higher values (2, 5, 10) boosts the performance of CSF, while a $\xi < 1$ hurts performance substantially.}
    \label{fig:csf-xi-sensitivity}
\end{wrapfigure}

We conduct ablation experiments to study the effect of the scaling coefficient $\xi$ on the negative term of the contrastive lower bound ($\text{LB}_{-}^{\beta}(\phi)$ as well as investigating our theoretical prediction for selecting $\xi \geq 1$ in Appendix~\ref{appendix:xi-discussion}. We compare the state coverage of different variants of CSF with $\xi$ choosing from $\{0.5, 1, 2, 5, 10\}$, plotting the mean and standard deviation over 5 random seeds. Results in Fig.~\ref{fig:csf-xi-sensitivity} suggest that increasing $\xi$ to higher values (2, 5, 10) boosts the state coverage of CSF, while a $\xi \leq 1$ hurts the performance. We choose to use $\xi = 5$ in our benchmark experiments.

\subsection{Sensitivity of METRA to the dual variable, the slack, and the dual learning rate}

In Fig~\ref{fig:metra-sensitivity}, we study the impact on the state coverage of METRA in \texttt{Ant} for various hyperparameters. Specifically, we vary the initial value of the dual variable $\lambda$, the slack variable $\epsilon$, and the learning rate of the dual variable. We find that METRA is robust to all three hyperparameters. All experiments plot the mean and standard deviation over 2 random seeds. 

\begin{figure*}[t] 
\centering

  \begin{subfigure}{0.33\linewidth}
  {
    \includegraphics[width=\linewidth,page=1]{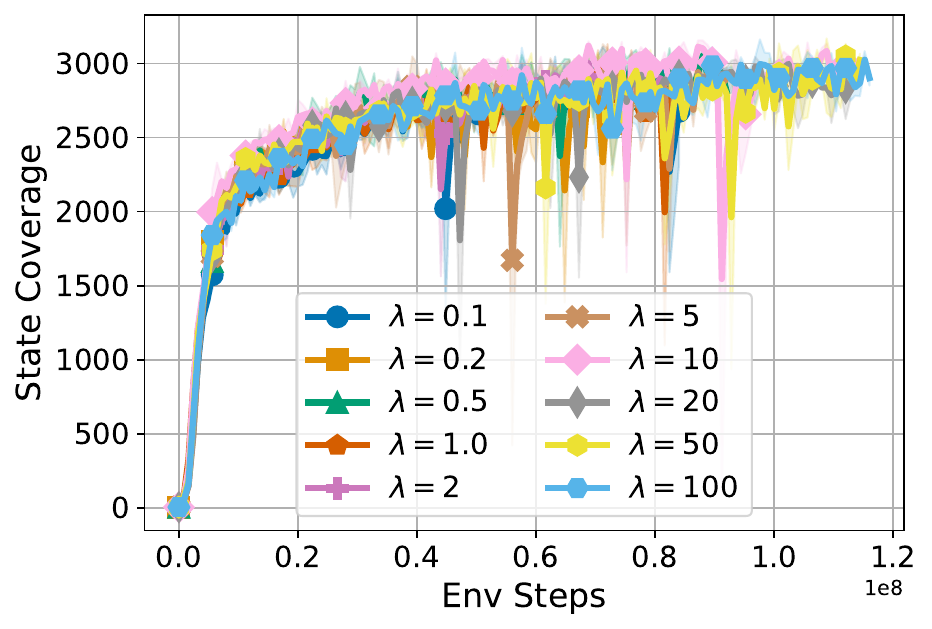}%
    }
    \caption{$\lambda$ (dual variable initial value)}
  \end{subfigure}%
  \begin{subfigure}{0.33\linewidth}
  {
  \includegraphics[width=\linewidth,page=1]{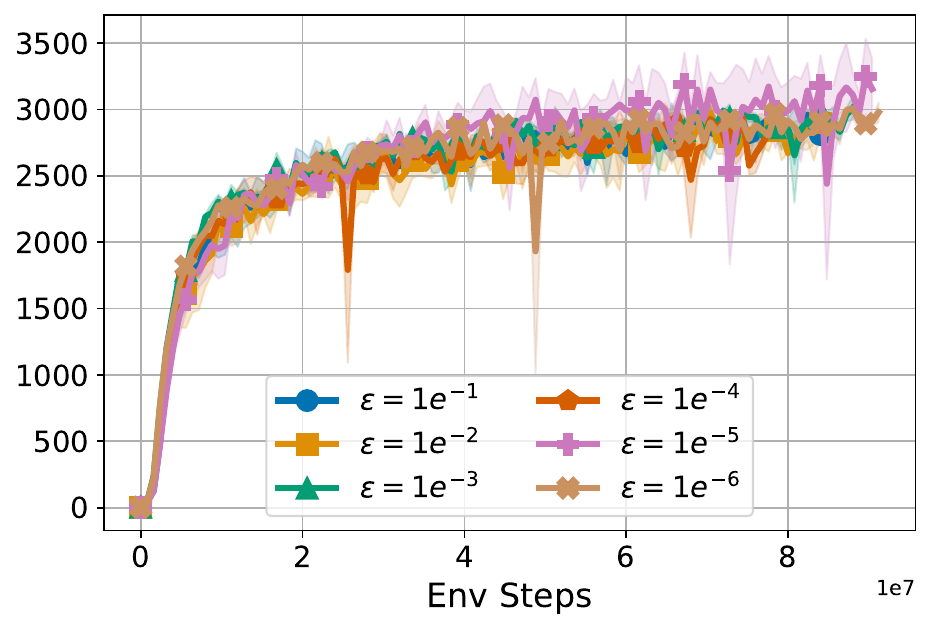}%
    }
    \caption{$\epsilon$ (slack variable)}
  \end{subfigure}%
  \begin{subfigure}{0.33\linewidth}
  {
  \includegraphics[width=\linewidth,page=1]{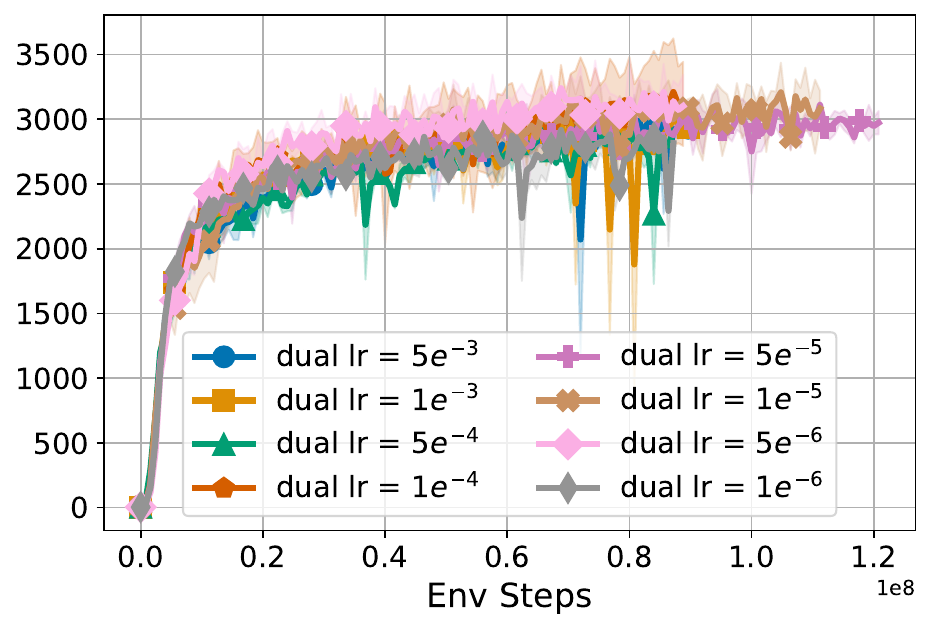}%
    }
    \caption{Learning rate of the dual variable}
  \end{subfigure}
  
  \vspace{-2pt}
  \caption{METRA is robust to the initial value of the dual variable $\lambda$, the slack variable $\epsilon$, and the learning rate of the dual variable.}
  \label{fig:metra-sensitivity}
\end{figure*}

\subsection{Skill Visualizations}\label{appendix:skill-visualizations}

In Fig.~\ref{fig:skill-viz}, we visualize $(x, y)$ trajectories of different skills learned by CSF with different colors. We find that~\newMethodName{} learns diverse locomotion behaviors. Videos of learned skills on different tasks can be found on \href{https://princeton-rl.github.io/contrastive-successor-features}{https://princeton-rl.github.io/contrastive-successor-features}.

\begin{figure*}[t]
    \centering
    \begin{subfigure}{0.5\textwidth}
        \centering
        \includegraphics[width=\linewidth]{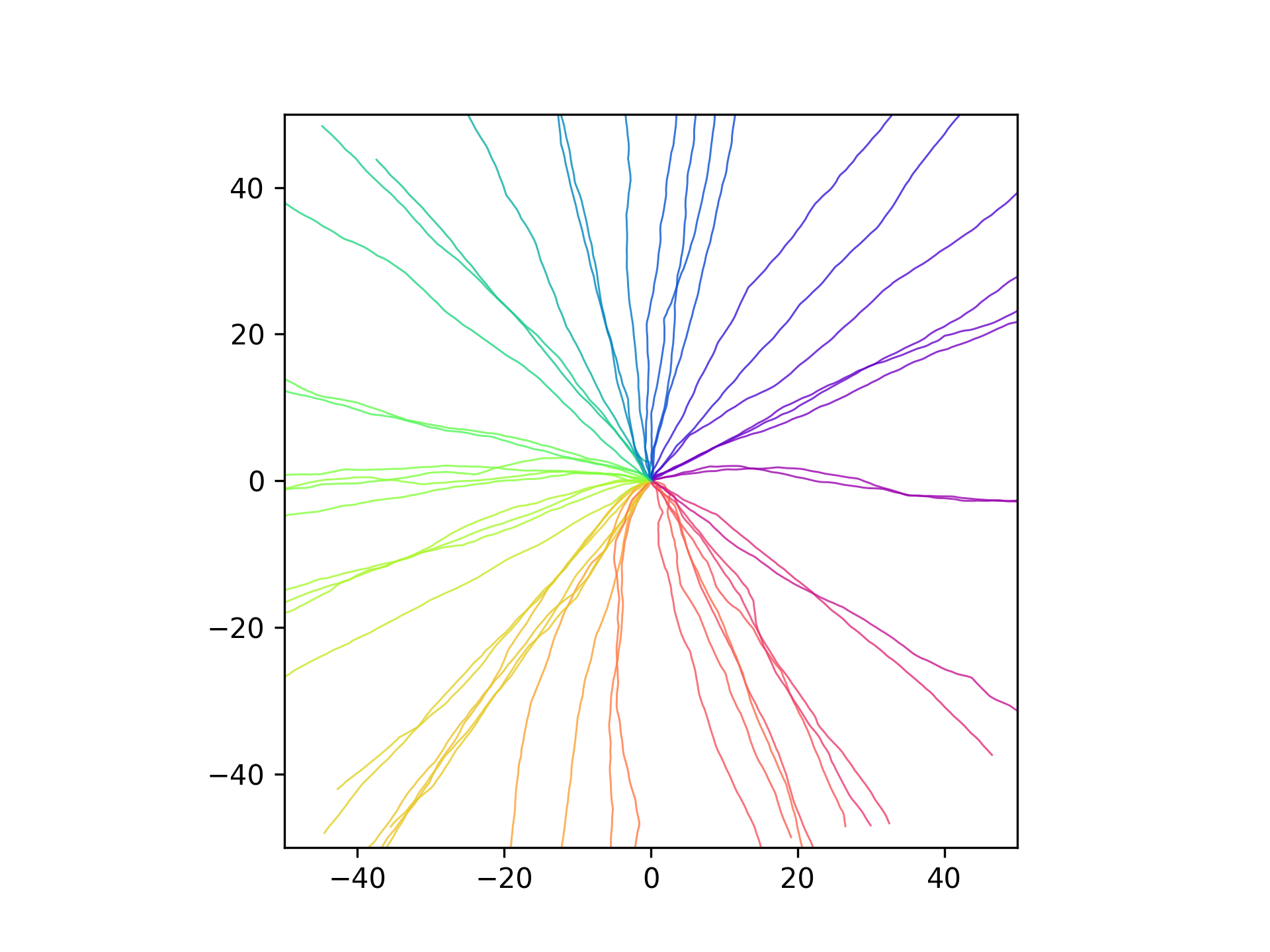}
        \caption{
            \texttt{Ant}
        }
        \label{fig:ant-viz}
    \end{subfigure}%
    \begin{subfigure}{0.5\textwidth}
        \centering
        \includegraphics[width=\linewidth]{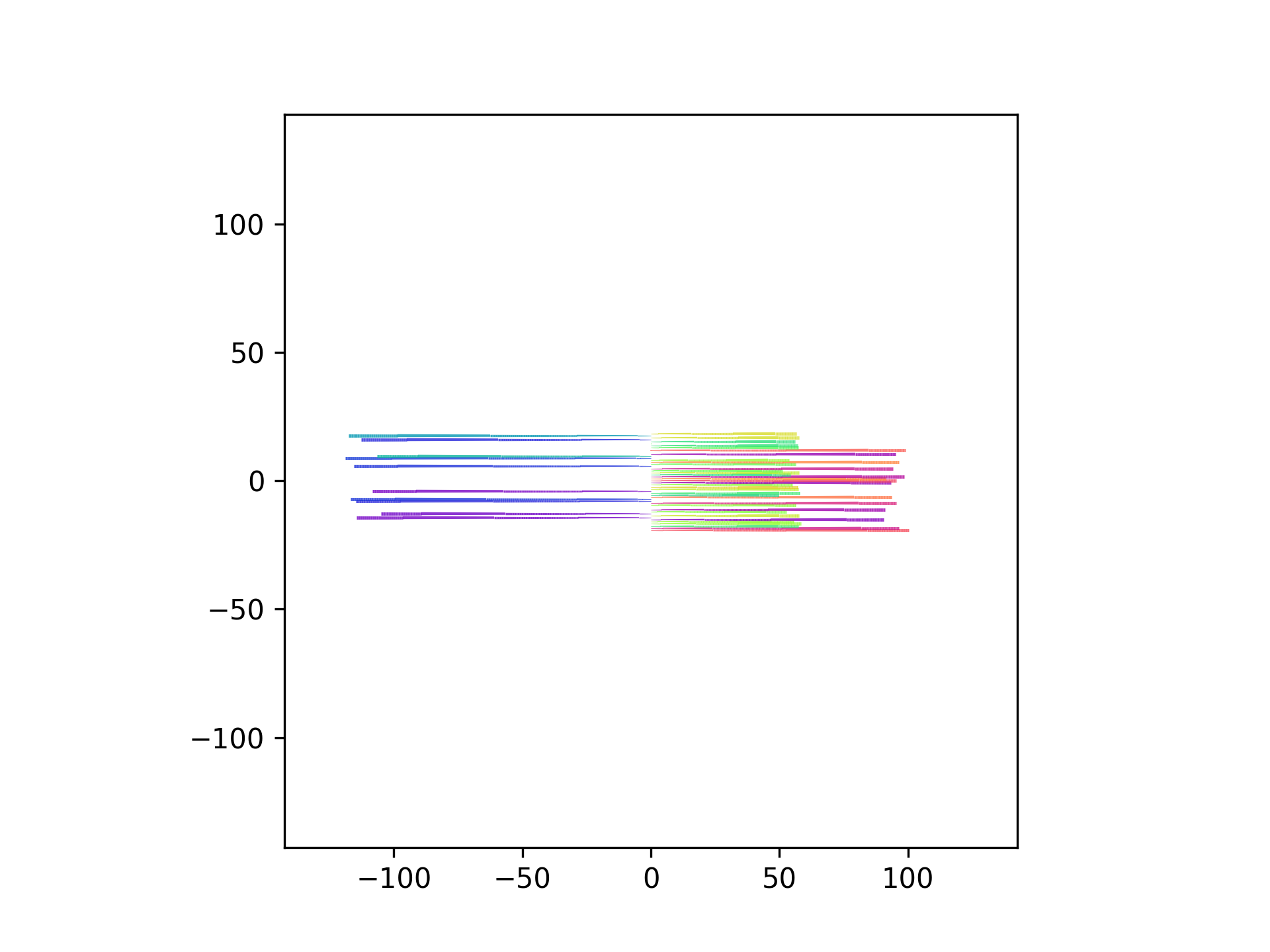}
        \caption{
        \texttt{Cheetah}
        }
        \label{fig:cheetah-viz}
    \end{subfigure}%
    \hfill
    \begin{subfigure}{0.5\textwidth}
        \centering
        \includegraphics[width=\linewidth]{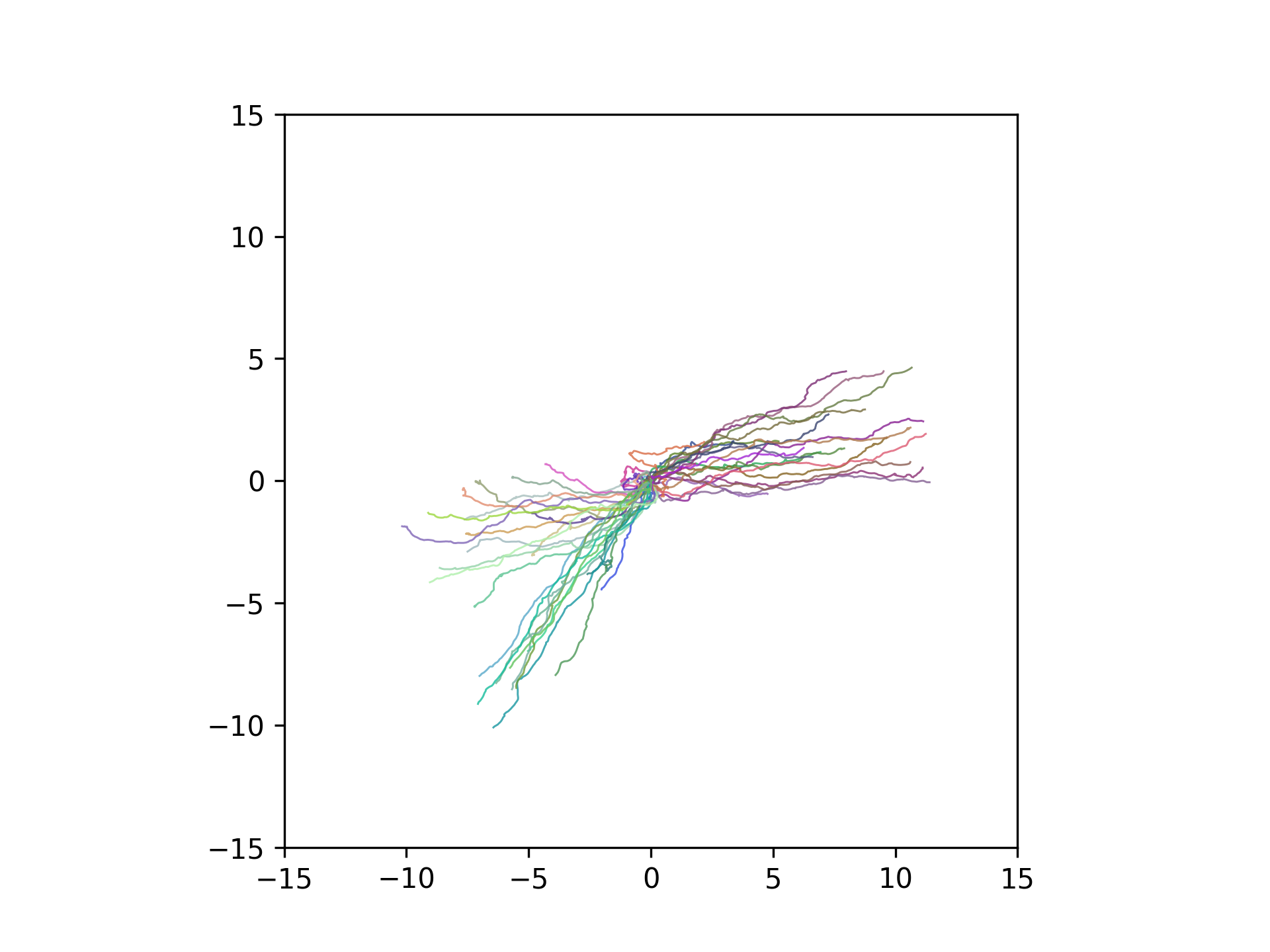}
        \caption{\texttt{Humanoid}}
        \label{fig:humanoid-viz}
    \end{subfigure}%
    \begin{subfigure}{0.5\textwidth}
        \centering
        \includegraphics[width=\linewidth]{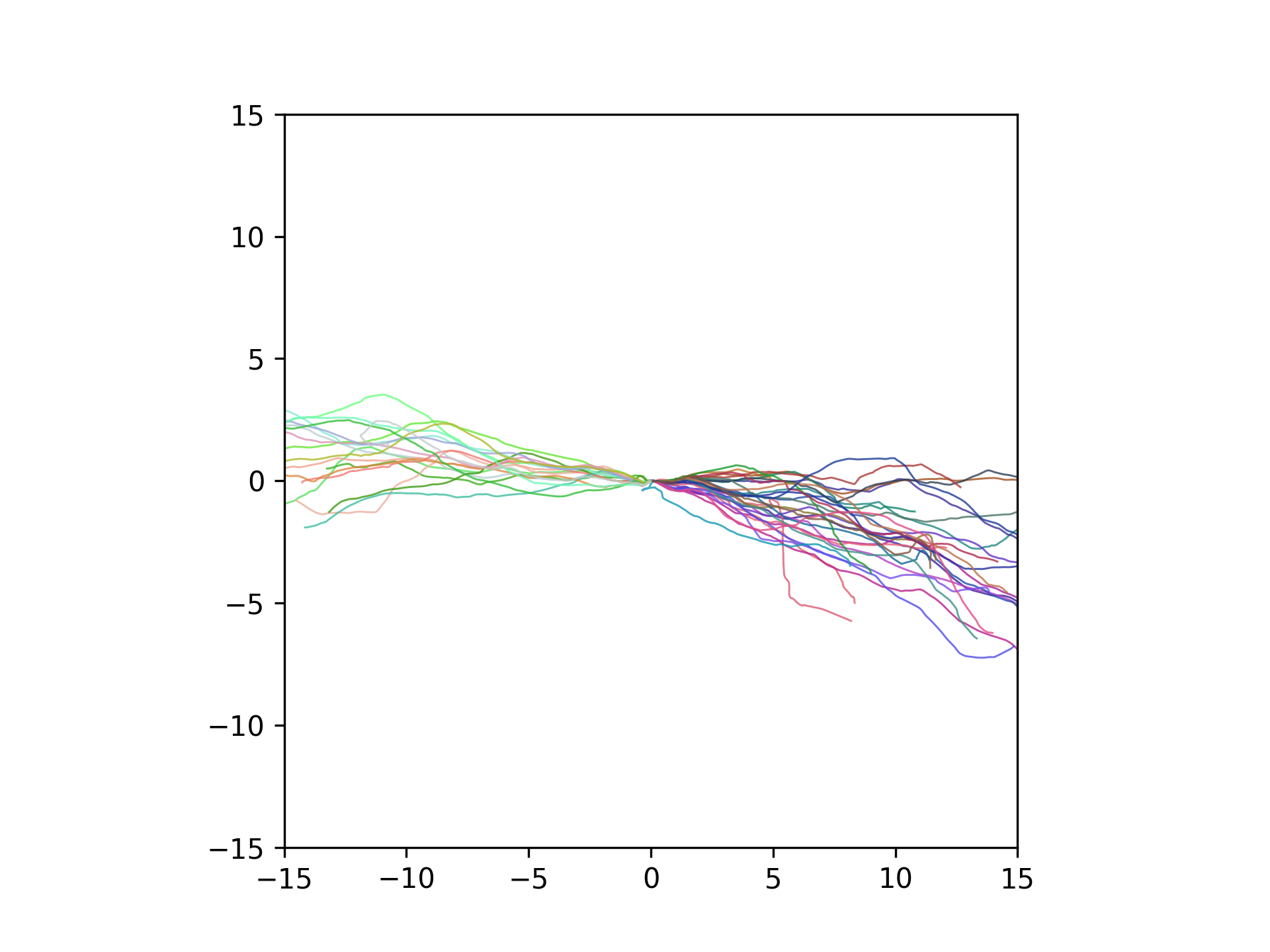}
        \caption{\texttt{Quadruped}}
        \label{fig:quadruped-viz}
    \end{subfigure}
    \caption{ \footnotesize \textbf{Trajectory visualization.} We visualize $(x, y)$ trajectories of different skills (colors) learned by CSF on (a) \texttt{Ant}, (b) \texttt{Cheetah}, (c) \texttt{Humanoid}, and (d) \texttt{Quadruped}, showing that CSF learns diverse locomotion behaviors.
    }
    \label{fig:skill-viz}
    \vspace{-1em}
\end{figure*}

}

\subsection{Failed Experiments}

We experimented with both CSF and METRA on the \texttt{MiniHack-Corridor-R5-v0} from MiniHack~\citep{samvelyan1minihack}. This environment has a discrete action space and requires the agent to navigate between 5 different rooms using various doors and corridors. None of these methods got consistently stable performance in this environment. We conjecture that the difficulties could come from \emph{(1)} the discrete action space, \emph{(2)} randomized initial states and dynamics (layout of the rooms), and \emph{(3)} partial observability. 


\begin{thebibliography}{81}
\providecommand{\natexlab}[1]{#1}
\providecommand{\url}[1]{\texttt{#1}}
\expandafter\ifx\csname urlstyle\endcsname\relax
  \providecommand{\doi}[1]{doi: #1}\else
  \providecommand{\doi}{doi: \begingroup \urlstyle{rm}\Url}\fi

\bibitem[Abramowitz \& Stegun(1968)Abramowitz and Stegun]{abramowitz1968handbook}
Milton Abramowitz and Irene~A Stegun.
\newblock \emph{Handbook of mathematical functions with formulas, graphs, and mathematical tables}, volume~55.
\newblock US Government printing office, 1968.

\bibitem[Ahmad \& Lin(1976)Ahmad and Lin]{ahmad1976nonparametric}
Ibrahim Ahmad and Pi-Erh Lin.
\newblock A nonparametric estimation of the entropy for absolutely continuous distributions (corresp.).
\newblock \emph{IEEE Transactions on Information Theory}, 22\penalty0 (3):\penalty0 372--375, 1976.

\bibitem[Alemi et~al.(2017)Alemi, Fischer, Dillon, and Murphy]{alemi2017deep}
Alexander~A. Alemi, Ian Fischer, Joshua~V. Dillon, and Kevin Murphy.
\newblock Deep variational information bottleneck.
\newblock In \emph{International Conference on Learning Representations}, 2017.
\newblock URL \url{https://openreview.net/forum?id=HyxQzBceg}.

\bibitem[Barber \& Agakov(2004)Barber and Agakov]{barber2004algorithm}
David Barber and Felix Agakov.
\newblock The im algorithm: a variational approach to information maximization.
\newblock \emph{Advances in neural information processing systems}, 16\penalty0 (320):\penalty0 201, 2004.

\bibitem[Barreto et~al.(2017)Barreto, Dabney, Munos, Hunt, Schaul, van Hasselt, and Silver]{barreto2017successor}
Andr{\'e} Barreto, Will Dabney, R{\'e}mi Munos, Jonathan~J Hunt, Tom Schaul, Hado~P van Hasselt, and David Silver.
\newblock Successor features for transfer in reinforcement learning.
\newblock \emph{Advances in neural information processing systems}, 30, 2017.

\bibitem[Baumli et~al.(2021)Baumli, Warde-Farley, Hansen, and Mnih]{baumli2021relative}
Kate Baumli, David Warde-Farley, Steven Hansen, and Volodymyr Mnih.
\newblock Relative variational intrinsic control.
\newblock In \emph{Proceedings of the AAAI conference on artificial intelligence}, volume~35, pp.\  6732--6740, 2021.

\bibitem[Borsa et~al.(2019)Borsa, Barreto, Quan, Mankowitz, van Hasselt, Munos, Silver, and Schaul]{borsauniversal}
Diana Borsa, Andre Barreto, John Quan, Daniel~J Mankowitz, Hado van Hasselt, Remi Munos, David Silver, and Tom Schaul.
\newblock Universal successor features approximators.
\newblock In \emph{International Conference on Learning Representations}, 2019.

\bibitem[Brockman et~al.(2016)Brockman, Cheung, Pettersson, Schneider, Schulman, Tang, and Zaremba]{brockman2016openai}
Greg Brockman, Vicki Cheung, Ludwig Pettersson, Jonas Schneider, John Schulman, Jie Tang, and Wojciech Zaremba.
\newblock Openai gym.
\newblock \emph{arXiv preprint arXiv:1606.01540}, 2016.

\bibitem[Brown(2020)]{brown2020language}
Tom~B Brown.
\newblock Language models are few-shot learners.
\newblock \emph{arXiv preprint ArXiv:2005.14165}, 2020.

\bibitem[Burda et~al.(2019)Burda, Edwards, Storkey, and Klimov]{burda2019exploration}
Yuri Burda, Harrison Edwards, Amos Storkey, and Oleg Klimov.
\newblock Exploration by random network distillation.
\newblock In \emph{Seventh International Conference on Learning Representations}, pp.\  1--17, 2019.

\bibitem[Campos et~al.(2020)Campos, Trott, Xiong, Socher, Gir{\'o}-i Nieto, and Torres]{campos2020explore}
V{\'\i}ctor Campos, Alexander Trott, Caiming Xiong, Richard Socher, Xavier Gir{\'o}-i Nieto, and Jordi Torres.
\newblock Explore, discover and learn: Unsupervised discovery of state-covering skills.
\newblock In \emph{International Conference on Machine Learning}, pp.\  1317--1327. PMLR, 2020.

\bibitem[Chen et~al.(2020)Chen, Kornblith, Norouzi, and Hinton]{chen2020simple}
Ting Chen, Simon Kornblith, Mohammad Norouzi, and Geoffrey Hinton.
\newblock A simple framework for contrastive learning of visual representations.
\newblock In \emph{International conference on machine learning}, pp.\  1597--1607. PMLR, 2020.

\bibitem[Choi et~al.(2021)Choi, Sharma, Lee, Levine, and Gu]{choi2021variational}
Jongwook Choi, Archit Sharma, Honglak Lee, Sergey Levine, and Shixiang~Shane Gu.
\newblock Variational empowerment as representation learning for goal-based reinforcement learning.
\newblock \emph{arXiv preprint arXiv:2106.01404}, 2021.

\bibitem[Chopra et~al.(2005)Chopra, Hadsell, and LeCun]{chopraSimilarity2005}
S.~Chopra, R.~Hadsell, and Y.~LeCun.
\newblock Learning a similarity metric discriminatively, with application to face verification.
\newblock In \emph{2005 IEEE Computer Society Conference on Computer Vision and Pattern Recognition (CVPR'05)}, volume~1, pp.\  539--546 vol. 1, 2005.
\newblock \doi{10.1109/CVPR.2005.202}.

\bibitem[Dayan(1993)]{dayan1993improving}
Peter Dayan.
\newblock Improving generalization for temporal difference learning: The successor representation.
\newblock \emph{Neural computation}, 5\penalty0 (4):\penalty0 613--624, 1993.

\bibitem[Devlin et~al.(2019)Devlin, Chang, Lee, and Toutanova]{Devlin2019BERTPO}
Jacob Devlin, Ming-Wei Chang, Kenton Lee, and Kristina Toutanova.
\newblock Bert: Pre-training of deep bidirectional transformers for language understanding.
\newblock In \emph{North American Chapter of the Association for Computational Linguistics}, 2019.
\newblock URL \url{https://api.semanticscholar.org/CorpusID:52967399}.

\bibitem[Du et~al.(2019)Du, Lee, Li, Wang, and Zhai]{du2019gradient}
Simon Du, Jason Lee, Haochuan Li, Liwei Wang, and Xiyu Zhai.
\newblock Gradient descent finds global minima of deep neural networks.
\newblock In \emph{International conference on machine learning}, pp.\  1675--1685. PMLR, 2019.

\bibitem[Eysenbach et~al.(2019)Eysenbach, Gupta, Ibarz, and Levine]{eysenbach2018diversity}
Benjamin Eysenbach, Abhishek Gupta, Julian Ibarz, and Sergey Levine.
\newblock Diversity is all you need: Learning skills without a reward function.
\newblock In \emph{International Conference on Learning Representations}, 2019.
\newblock URL \url{https://openreview.net/forum?id=SJx63jRqFm}.

\bibitem[Eysenbach et~al.(2020)Eysenbach, Salakhutdinov, and Levine]{eysenbach2020c}
Benjamin Eysenbach, Ruslan Salakhutdinov, and Sergey Levine.
\newblock C-learning: Learning to achieve goals via recursive classification.
\newblock \emph{arXiv preprint arXiv:2011.08909}, 2020.

\bibitem[Eysenbach et~al.(2022)Eysenbach, Zhang, Levine, and Salakhutdinov]{eysenbach2022contrastive}
Benjamin Eysenbach, Tianjun Zhang, Sergey Levine, and Russ~R Salakhutdinov.
\newblock Contrastive learning as goal-conditioned reinforcement learning.
\newblock \emph{Advances in Neural Information Processing Systems}, 35:\penalty0 35603--35620, 2022.

\bibitem[Florensa et~al.(2016)Florensa, Duan, and Abbeel]{florensa2016stochastic}
Carlos Florensa, Yan Duan, and Pieter Abbeel.
\newblock Stochastic neural networks for hierarchical reinforcement learning.
\newblock In \emph{International Conference on Learning Representations}, 2016.

\bibitem[Fujimoto et~al.(2018)Fujimoto, Hoof, and Meger]{fujimoto2018addressing}
Scott Fujimoto, Herke Hoof, and David Meger.
\newblock Addressing function approximation error in actor-critic methods.
\newblock In \emph{International conference on machine learning}, pp.\  1587--1596. PMLR, 2018.

\bibitem[Gao et~al.(2021)Gao, Yao, and Chen]{gao2021simcse}
Tianyu Gao, Xingcheng Yao, and Danqi Chen.
\newblock Simcse: Simple contrastive learning of sentence embeddings.
\newblock In \emph{Proceedings of the 2021 Conference on Empirical Methods in Natural Language Processing}, pp.\  6894--6910, 2021.

\bibitem[Gregor et~al.(2016)Gregor, Rezende, and Wierstra]{gregor2016variational}
Karol Gregor, Danilo~Jimenez Rezende, and Daan Wierstra.
\newblock Variational intrinsic control.
\newblock \emph{arXiv preprint arXiv:1611.07507}, 2016.

\bibitem[Gutmann \& Hyv{\"a}rinen(2010)Gutmann and Hyv{\"a}rinen]{gutmann2010noise}
Michael Gutmann and Aapo Hyv{\"a}rinen.
\newblock Noise-contrastive estimation: A new estimation principle for unnormalized statistical models.
\newblock In \emph{Proceedings of the thirteenth international conference on artificial intelligence and statistics}, pp.\  297--304. JMLR Workshop and Conference Proceedings, 2010.

\bibitem[Gutmann \& Hyvärinen(2010)Gutmann and Hyvärinen]{pmlr-v9-gutmann10a}
Michael Gutmann and Aapo Hyvärinen.
\newblock Noise-contrastive estimation: A new estimation principle for unnormalized statistical models.
\newblock In Yee~Whye Teh and Mike Titterington (eds.), \emph{Proceedings of the Thirteenth International Conference on Artificial Intelligence and Statistics}, volume~9 of \emph{Proceedings of Machine Learning Research}, pp.\  297--304, Chia Laguna Resort, Sardinia, Italy, 13--15 May 2010. PMLR.
\newblock URL \url{https://proceedings.mlr.press/v9/gutmann10a.html}.

\bibitem[Haarnoja et~al.(2018)Haarnoja, Zhou, Abbeel, and Levine]{haarnoja2018soft}
Tuomas Haarnoja, Aurick Zhou, Pieter Abbeel, and Sergey Levine.
\newblock Soft actor-critic: Off-policy maximum entropy deep reinforcement learning with a stochastic actor.
\newblock In \emph{International conference on machine learning}, pp.\  1861--1870. PMLR, 2018.

\bibitem[Hansen et~al.(2020)Hansen, Dabney, Barreto, Warde-Farley, de~Wiele, and Mnih]{Hansen2020Fast}
Steven Hansen, Will Dabney, Andre Barreto, David Warde-Farley, Tom~Van de~Wiele, and Volodymyr Mnih.
\newblock Fast task inference with variational intrinsic successor features.
\newblock In \emph{International Conference on Learning Representations}, 2020.
\newblock URL \url{https://openreview.net/forum?id=BJeAHkrYDS}.

\bibitem[He et~al.(2022)He, Jiang, Zhang, Shao, and Ji]{he2022wasserstein}
Shuncheng He, Yuhang Jiang, Hongchang Zhang, Jianzhun Shao, and Xiangyang Ji.
\newblock Wasserstein unsupervised reinforcement learning.
\newblock In \emph{Proceedings of the AAAI Conference on Artificial Intelligence}, volume~36, pp.\  6884--6892, 2022.

\bibitem[Henaff(2020)]{henaff2020data}
Olivier Henaff.
\newblock Data-efficient image recognition with contrastive predictive coding.
\newblock In \emph{International conference on machine learning}, pp.\  4182--4192. PMLR, 2020.

\bibitem[Ho \& Ermon(2016)Ho and Ermon]{ho2016generative}
Jonathan Ho and Stefano Ermon.
\newblock Generative adversarial imitation learning.
\newblock \emph{Advances in neural information processing systems}, 29, 2016.

\bibitem[Inc.(2024)]{Mathematica}
Wolfram~Research{,} Inc.
\newblock Mathematica, {V}ersion 14.0, 2024.
\newblock URL \url{https://www.wolfram.com/mathematica}.
\newblock Champaign, IL, 2024.

\bibitem[Jiang et~al.(2022)Jiang, Gupta, Zhang, Wang, Dou, Chen, Fei-Fei, Anandkumar, Zhu, and Fan]{jiang2022vima}
Yunfan Jiang, Agrim Gupta, Zichen Zhang, Guanzhi Wang, Yongqiang Dou, Yanjun Chen, Li~Fei-Fei, Anima Anandkumar, Yuke Zhu, and Linxi Fan.
\newblock Vima: General robot manipulation with multimodal prompts.
\newblock In \emph{NeurIPS 2022 Foundation Models for Decision Making Workshop}, 2022.

\bibitem[Kulkarni et~al.(2016)Kulkarni, Saeedi, Gautam, and Gershman]{kulkarni2016deep}
Tejas~D Kulkarni, Ardavan Saeedi, Simanta Gautam, and Samuel~J Gershman.
\newblock Deep successor reinforcement learning.
\newblock \emph{arXiv preprint arXiv:1606.02396}, 2016.

\bibitem[Laskin et~al.(2020)Laskin, Srinivas, and Abbeel]{laskin2020curl}
Michael Laskin, Aravind Srinivas, and Pieter Abbeel.
\newblock Curl: Contrastive unsupervised representations for reinforcement learning.
\newblock In \emph{International conference on machine learning}, pp.\  5639--5650. PMLR, 2020.

\bibitem[Laskin et~al.(2022)Laskin, Liu, Peng, Yarats, Rajeswaran, and Abbeel]{laskin2022cic}
Michael Laskin, Hao Liu, Xue~Bin Peng, Denis Yarats, Aravind Rajeswaran, and Pieter Abbeel.
\newblock Cic: Contrastive intrinsic control for unsupervised skill discovery.
\newblock \emph{arXiv preprint arXiv:2202.00161}, 2022.

\bibitem[Lee et~al.(2019)Lee, Eysenbach, Parisotto, Xing, Levine, and Salakhutdinov]{lee2019efficient}
Lisa Lee, Benjamin Eysenbach, Emilio Parisotto, Eric Xing, Sergey Levine, and Ruslan Salakhutdinov.
\newblock Efficient exploration via state marginal matching.
\newblock \emph{arXiv preprint arXiv:1906.05274}, 2019.

\bibitem[Li et~al.(2023)Li, Zhao, Lee, Weber, and Wermter]{li2023internally}
Mengdi Li, Xufeng Zhao, Jae~Hee Lee, Cornelius Weber, and Stefan Wermter.
\newblock Internally rewarded reinforcement learning.
\newblock In \emph{International Conference on Machine Learning}, pp.\  20556--20574. PMLR, 2023.

\bibitem[Liu et~al.(2022)Liu, Liu, Grover, and Abbeel]{liu2022masked}
Fangchen Liu, Hao Liu, Aditya Grover, and Pieter Abbeel.
\newblock Masked autoencoding for scalable and generalizable decision making.
\newblock \emph{Advances in Neural Information Processing Systems}, 35:\penalty0 12608--12618, 2022.

\bibitem[Liu \& Abbeel(2021)Liu and Abbeel]{liu2021aps}
Hao Liu and Pieter Abbeel.
\newblock Aps: Active pretraining with successor features.
\newblock In \emph{International Conference on Machine Learning}, pp.\  6736--6747. PMLR, 2021.

\bibitem[Ma et~al.(2023)Ma, Sodhani, Jayaraman, Bastani, Kumar, and Zhang]{mavip}
Yecheng~Jason Ma, Shagun Sodhani, Dinesh Jayaraman, Osbert Bastani, Vikash Kumar, and Amy Zhang.
\newblock Vip: Towards universal visual reward and representation via value-implicit pre-training.
\newblock In \emph{The Eleventh International Conference on Learning Representations}, 2023.

\bibitem[Ma \& Collins(2018)Ma and Collins]{ma2018noise}
Zhuang Ma and Michael Collins.
\newblock Noise contrastive estimation and negative sampling for conditional models: Consistency and statistical efficiency.
\newblock In \emph{Proceedings of the 2018 Conference on Empirical Methods in Natural Language Processing}, pp.\  3698--3707, 2018.

\bibitem[Matthews et~al.(2024)Matthews, Beukman, Ellis, Samvelyan, Jackson, Coward, and Foerster]{matthewscraftax}
Michael Matthews, Michael Beukman, Benjamin Ellis, Mikayel Samvelyan, Matthew~Thomas Jackson, Samuel Coward, and Jakob~Nicolaus Foerster.
\newblock Craftax: A lightning-fast benchmark for open-ended reinforcement learning.
\newblock In \emph{Forty-first International Conference on Machine Learning}, 2024.

\bibitem[Mendonca et~al.(2021)Mendonca, Rybkin, Daniilidis, Hafner, and Pathak]{mendonca2021discovering}
Russell Mendonca, Oleh Rybkin, Kostas Daniilidis, Danijar Hafner, and Deepak Pathak.
\newblock Discovering and achieving goals via world models.
\newblock \emph{Advances in Neural Information Processing Systems}, 34:\penalty0 24379--24391, 2021.

\bibitem[Mohamed \& Jimenez~Rezende(2015)Mohamed and Jimenez~Rezende]{mohamed2015variational}
Shakir Mohamed and Danilo Jimenez~Rezende.
\newblock Variational information maximisation for intrinsically motivated reinforcement learning.
\newblock \emph{Advances in neural information processing systems}, 28, 2015.

\bibitem[Myers et~al.(2024)Myers, Zheng, Dragan, Levine, and Eysenbach]{myers2024learning}
Vivek Myers, Chongyi Zheng, Anca Dragan, Sergey Levine, and Benjamin Eysenbach.
\newblock Learning temporal distances: Contrastive successor features can provide a metric structure for decision-making.
\newblock \emph{arXiv preprint arXiv:2406.17098}, 2024.

\bibitem[Nachum et~al.(2019)Nachum, Chow, Dai, and Li]{nachum2019dualdice}
Ofir Nachum, Yinlam Chow, Bo~Dai, and Lihong Li.
\newblock Dualdice: Behavior-agnostic estimation of discounted stationary distribution corrections.
\newblock \emph{Advances in neural information processing systems}, 32, 2019.

\bibitem[Oord et~al.(2016)Oord, Dieleman, Zen, Simonyan, Vinyals, Graves, Kalchbrenner, Senior, and Kavukcuoglu]{Oord2016WaveNetAG}
Aaron van~den Oord, Sander Dieleman, Heiga Zen, Karen Simonyan, Oriol Vinyals, Alex Graves, Nal Kalchbrenner, Andrew~W. Senior, and Koray Kavukcuoglu.
\newblock Wavenet: A generative model for raw audio.
\newblock In \emph{Speech Synthesis Workshop}, 2016.
\newblock URL \url{https://api.semanticscholar.org/CorpusID:6254678}.

\bibitem[Oord et~al.(2018)Oord, Li, and Vinyals]{oord2018representation}
Aaron van~den Oord, Yazhe Li, and Oriol Vinyals.
\newblock Representation learning with contrastive predictive coding.
\newblock \emph{arXiv preprint arXiv:1807.03748}, 2018.

\bibitem[Ozair et~al.(2019)Ozair, Lynch, Bengio, van~den Oord, Levine, and Sermanet]{ozair2019wasserstein}
Sherjil Ozair, Corey Lynch, Yoshua Bengio, Aaron van~den Oord, Sergey Levine, and Pierre Sermanet.
\newblock Wasserstein dependency measure for representation learning.
\newblock \emph{Advances in Neural Information Processing Systems}, 32, 2019.

\bibitem[Park \& Levine(2023)Park and Levine]{park2023predictable}
Seohong Park and Sergey Levine.
\newblock Predictable mdp abstraction for unsupervised model-based rl.
\newblock In \emph{International Conference on Machine Learning}, pp.\  27246--27268. PMLR, 2023.

\bibitem[Park et~al.(2022)Park, Choi, Kim, Lee, and Kim]{park2022lipschitz}
Seohong Park, Jongwook Choi, Jaekyeom Kim, Honglak Lee, and Gunhee Kim.
\newblock Lipschitz-constrained unsupervised skill discovery.
\newblock In \emph{International Conference on Learning Representations}, 2022.

\bibitem[Park et~al.(2023)Park, Lee, Lee, and Abbeel]{park2023controllability}
Seohong Park, Kimin Lee, Youngwoon Lee, and Pieter Abbeel.
\newblock Controllability-aware unsupervised skill discovery.
\newblock In \emph{International Conference on Machine Learning}, pp.\  27225--27245. PMLR, 2023.

\bibitem[Park et~al.(2024)Park, Rybkin, and Levine]{park2024metra}
Seohong Park, Oleh Rybkin, and Sergey Levine.
\newblock {METRA}: Scalable unsupervised {RL} with metric-aware abstraction.
\newblock In \emph{The Twelfth International Conference on Learning Representations}, 2024.

\bibitem[Pathak et~al.(2017)Pathak, Agrawal, Efros, and Darrell]{pathak2017curiosity}
Deepak Pathak, Pulkit Agrawal, Alexei~A Efros, and Trevor Darrell.
\newblock Curiosity-driven exploration by self-supervised prediction.
\newblock In \emph{International conference on machine learning}, pp.\  2778--2787. PMLR, 2017.

\bibitem[Pong et~al.(2020)Pong, Dalal, Lin, Nair, Bahl, and Levine]{pong2020skew}
Vitchyr~H Pong, Murtaza Dalal, Steven Lin, Ashvin Nair, Shikhar Bahl, and Sergey Levine.
\newblock Skew-fit: state-covering self-supervised reinforcement learning.
\newblock In \emph{Proceedings of the 37th International Conference on Machine Learning}, pp.\  7783--7792, 2020.

\bibitem[Poole et~al.(2019)Poole, Ozair, Van Den~Oord, Alemi, and Tucker]{poole2019variational}
Ben Poole, Sherjil Ozair, Aaron Van Den~Oord, Alex Alemi, and George Tucker.
\newblock On variational bounds of mutual information.
\newblock In \emph{International Conference on Machine Learning}, pp.\  5171--5180. PMLR, 2019.

\bibitem[Radford \& Narasimhan(2018)Radford and Narasimhan]{Radford2018ImprovingLU}
Alec Radford and Karthik Narasimhan.
\newblock Improving language understanding by generative pre-training.
\newblock unpublished, 2018.
\newblock URL \url{https://api.semanticscholar.org/CorpusID:49313245}.

\bibitem[Radford et~al.(2019)Radford, Wu, Child, Luan, Amodei, and Sutskever]{Radford2019LanguageMA}
Alec Radford, Jeff Wu, Rewon Child, David Luan, Dario Amodei, and Ilya Sutskever.
\newblock Language models are unsupervised multitask learners.
\newblock unpublished, 2019.
\newblock URL \url{https://api.semanticscholar.org/CorpusID:160025533}.

\bibitem[Radford et~al.(2021)Radford, Kim, Hallacy, Ramesh, Goh, Agarwal, Sastry, Askell, Mishkin, Clark, et~al.]{radford2021learning}
Alec Radford, Jong~Wook Kim, Chris Hallacy, Aditya Ramesh, Gabriel Goh, Sandhini Agarwal, Girish Sastry, Amanda Askell, Pamela Mishkin, Jack Clark, et~al.
\newblock Learning transferable visual models from natural language supervision.
\newblock In \emph{International conference on machine learning}, pp.\  8748--8763. PMLR, 2021.

\bibitem[Samvelyan et~al.(2021)Samvelyan, Kirk, Kurin, Parker-Holder, Jiang, Hambro, Petroni, Kuttler, Grefenstette, and Rockt{\"a}schel]{samvelyan1minihack}
Mikayel Samvelyan, Robert Kirk, Vitaly Kurin, Jack Parker-Holder, Minqi Jiang, Eric Hambro, Fabio Petroni, Heinrich Kuttler, Edward Grefenstette, and Tim Rockt{\"a}schel.
\newblock Minihack the planet: A sandbox for open-ended reinforcement learning research.
\newblock In \emph{Thirty-fifth Conference on Neural Information Processing Systems Datasets and Benchmarks Track (Round 1)}, 2021.

\bibitem[Schaul et~al.(2015)Schaul, Horgan, Gregor, and Silver]{pmlr-v37-schaul15}
Tom Schaul, Daniel Horgan, Karol Gregor, and David Silver.
\newblock Universal value function approximators.
\newblock In Francis Bach and David Blei (eds.), \emph{Proceedings of the 32nd International Conference on Machine Learning}, volume~37 of \emph{Proceedings of Machine Learning Research}, pp.\  1312--1320, Lille, France, 07--09 Jul 2015. PMLR.
\newblock URL \url{https://proceedings.mlr.press/v37/schaul15.html}.

\bibitem[Schulman et~al.(2017)Schulman, Wolski, Dhariwal, Radford, and Klimov]{schulman2017proximal}
John Schulman, Filip Wolski, Prafulla Dhariwal, Alec Radford, and Oleg Klimov.
\newblock Proximal policy optimization algorithms.
\newblock \emph{arXiv preprint arXiv:1707.06347}, 2017.

\bibitem[Shafiullah \& Pinto(2022)Shafiullah and Pinto]{shafiullah2022one}
Nur Muhammad~Mahi Shafiullah and Lerrel Pinto.
\newblock One after another: Learning incremental skills for a changing world.
\newblock In \emph{International Conference on Learning Representations}, 2022.
\newblock URL \url{https://openreview.net/forum?id=dg79moSRqIo}.

\bibitem[Shannon(1948)]{shannon}
Claude~Elwood Shannon.
\newblock A mathematical theory of communication.
\newblock \emph{The Bell System Technical Journal}, 27:\penalty0 379--423, 1948.
\newblock URL \url{http://plan9.bell-labs.com/cm/ms/what/shannonday/shannon1948.pdf}.

\bibitem[Sharma et~al.(2020)Sharma, Gu, Levine, Kumar, and Hausman]{Sharma2020Dynamics-Aware}
Archit Sharma, Shixiang Gu, Sergey Levine, Vikash Kumar, and Karol Hausman.
\newblock Dynamics-aware unsupervised discovery of skills.
\newblock In \emph{International Conference on Learning Representations}, 2020.
\newblock URL \url{https://openreview.net/forum?id=HJgLZR4KvH}.

\bibitem[Sohn(2016)]{NIPS2016_6b180037}
Kihyuk Sohn.
\newblock Improved deep metric learning with multi-class n-pair loss objective.
\newblock In D.~Lee, M.~Sugiyama, U.~Luxburg, I.~Guyon, and R.~Garnett (eds.), \emph{Advances in Neural Information Processing Systems}, volume~29. Curran Associates, Inc., 2016.
\newblock URL \url{https://proceedings.neurips.cc/paper_files/paper/2016/file/6b180037abbebea991d8b1232f8a8ca9-Paper.pdf}.

\bibitem[Song \& Kingma(2021)Song and Kingma]{song2021train}
Yang Song and Diederik~P Kingma.
\newblock How to train your energy-based models.
\newblock \emph{arXiv preprint arXiv:2101.03288}, 2021.

\bibitem[Strouse et~al.(2022)Strouse, Baumli, Warde-Farley, Mnih, and Hansen]{strouse2022learning}
DJ~Strouse, Kate Baumli, David Warde-Farley, Volodymyr Mnih, and Steven~Stenberg Hansen.
\newblock Learning more skills through optimistic exploration.
\newblock In \emph{International Conference on Learning Representations}, 2022.
\newblock URL \url{https://openreview.net/forum?id=cU8rknuhxc}.

\bibitem[Tassa et~al.(2018)Tassa, Doron, Muldal, Erez, Li, Casas, Budden, Abdolmaleki, Merel, Lefrancq, et~al.]{tassa2018deepmind}
Yuval Tassa, Yotam Doron, Alistair Muldal, Tom Erez, Yazhe Li, Diego de~Las Casas, David Budden, Abbas Abdolmaleki, Josh Merel, Andrew Lefrancq, et~al.
\newblock Deepmind control suite.
\newblock \emph{arXiv preprint arXiv:1801.00690}, 2018.

\bibitem[Tishby et~al.(2000)Tishby, Pereira, and Bialek]{tishby2000information}
Naftali Tishby, Fernando~C Pereira, and William Bialek.
\newblock The information bottleneck method.
\newblock \emph{arXiv preprint physics/0004057}, 2000.

\bibitem[Todorov et~al.(2012)Todorov, Erez, and Tassa]{mujoco}
Emanuel Todorov, Tom Erez, and Yuval Tassa.
\newblock Mujoco: A physics engine for model-based control.
\newblock In \emph{2012 IEEE/RSJ International Conference on Intelligent Robots and Systems}, pp.\  5026--5033, 2012.
\newblock \doi{10.1109/IROS.2012.6386109}.

\bibitem[Tschannen et~al.(2020)Tschannen, Djolonga, Rubenstein, Gelly, and Lucic]{tschannenmutual}
Michael Tschannen, Josip Djolonga, Paul~K Rubenstein, Sylvain Gelly, and Mario Lucic.
\newblock On mutual information maximization for representation learning.
\newblock In \emph{International Conference on Learning Representations}, 2020.

\bibitem[Walke et~al.(2023)Walke, Black, Zhao, Vuong, Zheng, Hansen-Estruch, He, Myers, Kim, Du, et~al.]{walke2023bridgedata}
Homer~Rich Walke, Kevin Black, Tony~Z Zhao, Quan Vuong, Chongyi Zheng, Philippe Hansen-Estruch, Andre~Wang He, Vivek Myers, Moo~Jin Kim, Max Du, et~al.
\newblock Bridgedata v2: A dataset for robot learning at scale.
\newblock In \emph{Conference on Robot Learning}, pp.\  1723--1736. PMLR, 2023.

\bibitem[Wang \& Isola(2020)Wang and Isola]{wang2020understanding}
Tongzhou Wang and Phillip Isola.
\newblock Understanding contrastive representation learning through alignment and uniformity on the hypersphere.
\newblock In \emph{International conference on machine learning}, pp.\  9929--9939. PMLR, 2020.

\bibitem[Warde-Farley et~al.(2019)Warde-Farley, Van~de Wiele, Kulkarni, Ionescu, Hansen, and Mnih]{wardeunsupervised}
David Warde-Farley, Tom Van~de Wiele, Tejas Kulkarni, Catalin Ionescu, Steven Hansen, and Volodymyr Mnih.
\newblock Unsupervised control through non-parametric discriminative rewards.
\newblock In \emph{International Conference on Learning Representations}, 2019.

\bibitem[Wikipedia(2024)]{wiki:Von_Mises–Fisher_distribution}
Wikipedia.
\newblock {Von Mises–Fisher distribution} --- {W}ikipedia{,} the free encyclopedia.
\newblock \url{http://en.wikipedia.org/w/index.php?title=Von\%20Mises\%E2\%80\%93Fisher\%20distribution&oldid=1230057889}, 2024.
\newblock [Online; accessed 10-July-2024].

\bibitem[Yarats et~al.(2021)Yarats, Kostrikov, and Fergus]{yarats2021image}
Denis Yarats, Ilya Kostrikov, and Rob Fergus.
\newblock Image augmentation is all you need: Regularizing deep reinforcement learning from pixels.
\newblock In \emph{International Conference on Learning Representations}, 2021.
\newblock URL \url{https://openreview.net/forum?id=GY6-6sTvGaf}.

\bibitem[Zhang et~al.(2017)Zhang, Springenberg, Boedecker, and Burgard]{zhang2017deep}
Jingwei Zhang, Jost~Tobias Springenberg, Joschka Boedecker, and Wolfram Burgard.
\newblock Deep reinforcement learning with successor features for navigation across similar environments.
\newblock In \emph{2017 IEEE/RSJ International Conference on Intelligent Robots and Systems (IROS)}, pp.\  2371--2378. IEEE, 2017.

\bibitem[Zheng et~al.(2024)Zheng, Salakhutdinov, and Eysenbach]{zheng2024contrastive}
Chongyi Zheng, Ruslan Salakhutdinov, and Benjamin Eysenbach.
\newblock Contrastive difference predictive coding.
\newblock In \emph{The Twelfth International Conference on Learning Representations}, 2024.
\newblock URL \url{https://openreview.net/forum?id=0akLDTFR9x}.

\bibitem[Zhou et~al.(2018)Zhou, Xu, and Corso]{zhou2018towards}
Luowei Zhou, Chenliang Xu, and Jason Corso.
\newblock Towards automatic learning of procedures from web instructional videos.
\newblock In \emph{Proceedings of the AAAI Conference on Artificial Intelligence}, volume~32, 2018.

\end{thebibliography}
\end{document}